\newcommand{\aaa}{{\mathbf a}}
\newcommand{\bb}{{\mathbf b}}
\newcommand{\A}{{\mathcal A}}
\newcommand{\B}{{\mathcal B}}
\newcommand{\x}{{\mathbf x}}
\newcommand{\y}{{\mathbf y}}
\newcommand{\uu}{{\mathbf u}}
\newcommand{\p}{{\mathbf p}}
\newcommand{\eg}{\emph{e.g.}}
\newcommand{\ie}{\emph{i.e.}}
\newcommand{\wrt}{w.r.t.}
\newcommand{\R}{{\mathbb R}}
\newcommand{\V}{{\mathbf V}}
\newcommand{\N}{{\mathbb N}}
\newcommand{\nocontentsline}[3]{}
\newcommand{\tocless}[2]{\bgroup\let\addcontentsline=\nocontentsline#1{#2}\egroup}
\newcommand{\sep}[2]{\mathrm{sep}_{(#1)}\left( #2 \right)}
\newcommand{\rank}[1]{\mathrm{rank}\left( #1 \right)}
\newcommand{\abs}[1]{\left\lvert#1 \right\rvert}
\newcommand{\mat}[1]{\llbracket#1\rrbracket}
\def\bea{\begin{eqnarray}}
\def\eea{\end{eqnarray}}
\newtheorem{lemma}{Lemma}
\newtheorem{corollary}{Corollary}
\newtheorem{theorem}{Theorem}
\newtheorem{claim}{Claim}
\def\multiset#1#2{\ensuremath{\left(\kern-.3em\left(\genfrac{}{}{0pt}{}{#1}{#2}\right)\kern-.3em\right)}}
\newcommand{\cupdot}{\mathbin{\mathaccent\cdot\cup}}
\newcommand{\wk}{{W^{\text{conv}}}}
\def\simleq{\; \raise0.3ex\hbox{$<$\kern-0.75em \raise-1.1ex\hbox{$\sim$}}\; }
\def\simgeq{\; \raise0.3ex\hbox{$>$\kern-0.75em \raise-1.1ex\hbox{$\sim$}}\; }
\icmltitlerunning{Which transformer architecture fits my data? A vocabulary bottleneck in self-attention}
\newcommand*{\ARXIV}{}
\begin{document}
	
	\twocolumn[
	\icmltitle{Which Transformer architecture fits my data? \\A vocabulary bottleneck in self-attention
	}

	\icmlsetsymbol{equal}{*}
	
	\begin{icmlauthorlist}
		\icmlauthor{Noam Wies}{huji}
		\icmlauthor{Yoav Levine}{huji}
		\icmlauthor{Daniel Jannai}{huji}
		\icmlauthor{Amnon Shashua}{huji}
	\end{icmlauthorlist}

	\icmlaffiliation{huji}{The Hebrew University of Jerusalem}
	
	\icmlcorrespondingauthor{Noam Wies}{noam.wies@cs.huji.ac.il}
	
	\icmlkeywords{Transformer, Self-Attention, Vocabulary, VisionTransformer, Expressivity}
	
	\vskip 0.3in
	]

	\printAffiliationsAndNotice{} %
	
		\begin{abstract}
	After their successful debut in natural language processing, Transformer architectures are now becoming the de-facto standard in many domains. An obstacle for their deployment over new modalities is the architectural configuration: the optimal depth-to-width ratio has been shown to dramatically vary across data types (\eg, $10$x larger over images than over language). We theoretically predict the existence of an embedding rank bottleneck that limits the contribution of self-attention width to the Transformer expressivity. We thus directly tie the input vocabulary size and rank to the optimal depth-to-width ratio, since a small vocabulary size or rank dictates an added advantage of depth over width. We empirically demonstrate the existence of this bottleneck and its implications on the depth-to-width interplay of Transformer architectures, linking the architecture variability across domains to the often glossed-over usage of different vocabulary sizes or embedding ranks in different domains. 
	As an additional benefit, our rank bottlenecking framework allows us to identify size redundancies of $25\%-50\%$ in leading NLP models such as ALBERT and T5.    
\end{abstract}

\ifdefined\SQUEEZE \vspace{-3mm} \fi
\tocless\section{Introduction \label{sec:Introduction}}

Since the introduction of the Transformer as a sequence-to-sequence model for machine translation, its variants have achieved state-of-the-art results in various domains, such as text \cite{devlin2018bert}, images \cite{chen2020generative,dosovitskiy2021an,child2020distributionaugmentation}, audio \cite{dhariwal2020jukebox,baevski2020wav2vec}, video \cite{Weissenborn2020Scaling}, mathematical problem solving \cite{saxton2018analysing}, reinforcement learning \cite{vinyals2019grandmaster,chen2021decisiontransformer} and bioinformatics \cite{rives2019biological,rao2021msa}. While the architecture's operation is mostly unchanged, the chosen ratio between the number of self-attention layers (depth) and the dimension of the internal representation (width) varies greatly across different applications.
For example, for a fixed BERT-Base size of $110$M parameters, popular architectures range from $12$-layered networks %
to much narrower $128$-layered networks. %

In language applications, the depth-to-width ratio of Transformer models is relatively consistent: increase in model size is mainly done via widening~\cite{levine2020limits}, so that the largest self-attention architectures are very wide relative to their depths~\citep{raffel2019exploring,GPT3}.  In other domains this aspect seems unresolved, even when comparing leading models within the same field.
In computer vision, for example, the Vision Transformer (ViT) \cite{dosovitskiy2021an}  sets the state-of-the-art on ImageNet in a transfer learning setup with depth-to-width ratios corresponding to common language models. Conversely, Image GPT \cite{chen2020generative} and Sparse Transformer \cite{child2020distributionaugmentation} achieve state-of-the-art results in unsupervised learning and density estimation, respectively, by using significantly deeper and narrower models. 

Recently, \citet{henighan2020scaling} perform an ablation study which includes data from different domains, and report empirical evidence for the existence of different ``ideal" depth-to-width ratios per data modality. 
Figure~4 in that study leads the authors to conclude that the depth-to-width ratio of image and math models should be $10$x larger than that of language models. 
A possible take-away can be that the representations of the different data types require different depth-to-width specifications from the architecture.	
In a contemporary study, \citet{levine2020limits}
quantify the optimal depth-to-width ratio per self-attention network size. 
Their results justify the relative shallowness of current attention-based language models, and moreover suggest that further deepening should be logarithmic in widening.   
Importantly, their theoretical framework pertains to the self-attention architecture expressivity and is agnostic to the input modality.

\begin{figure*}[t]
	\vskip 0.2in
	\begin{center}
		\ifdefined\ARXIV
		\centerline{\includegraphics[width=\linewidth]{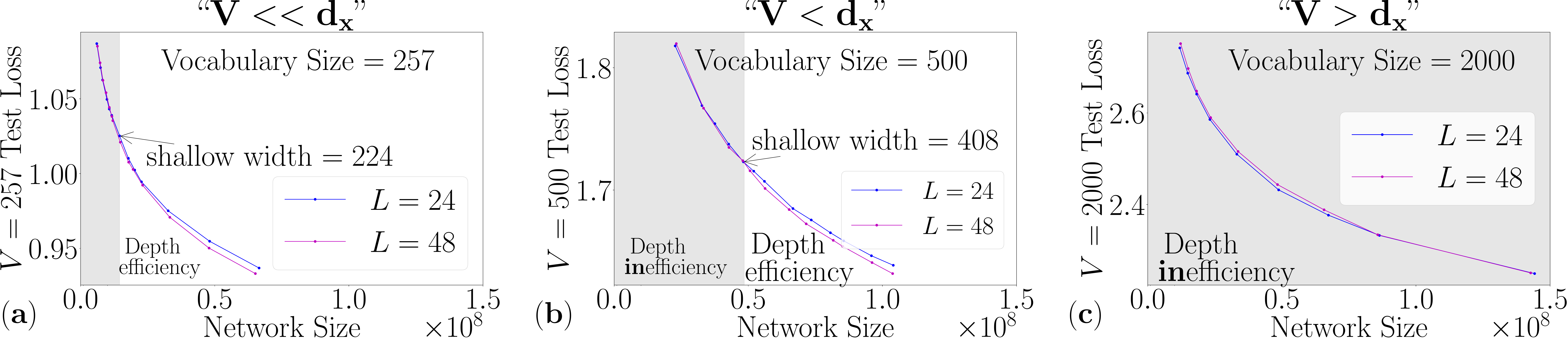}}
		\else
		\centerline{\includegraphics[width=\linewidth]{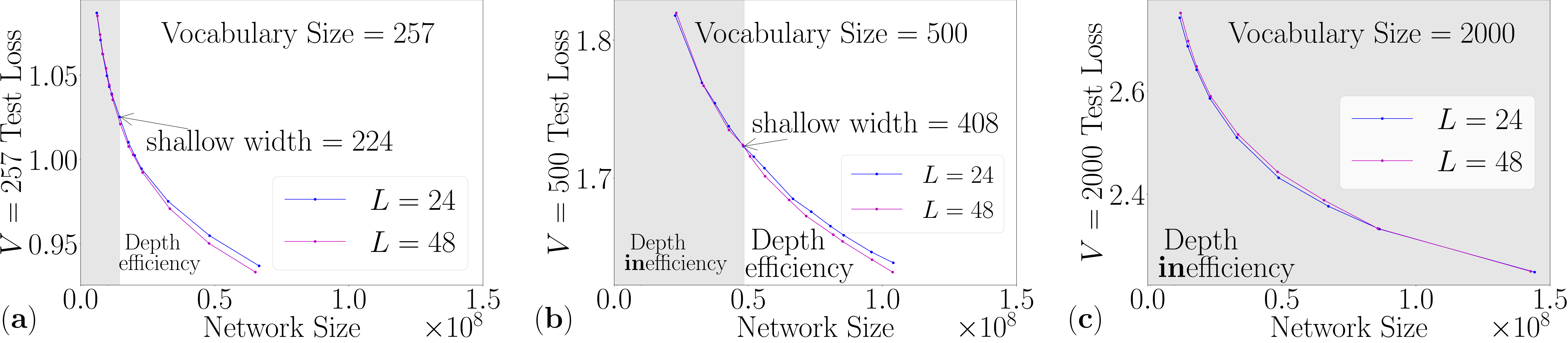}}
		\fi		
		\caption{
			An experimental validation of the effect of vocabulary size on the depth-to-width tradeoff in transformers (full details in section~\ref{sec:experiments:implications_to_depth_width_interplay}).  
			The experiments were performed over text, but demonstrate that similarly to images, when using a byte level vocabulary to represent text, deeper networks outperform shallower ones in practical network sizes (BERT-Base is of size $\sim100$M). 
			Figures~(a) and~(b) use similar vocabulary sizes to those used by \citet{henighan2020scaling} and~\citet{child2020distributionaugmentation} over images, respectively. Both studies operated with significantly deeper transformers than commonly used over text.
			Figure (c) uses a vocabulary size that is larger than the network width, as commonly done for language models, and the advantage of the deeper networks disappears. 
			Note that the loss value depends on the vocabulary size, and thus loss values between figures (a)-(c) are not directly comparable.}
		\label{fig:text_vocab_257_vs_2000}
	\end{center}
	\vskip -0.2in
\end{figure*}

The two different views above give rise to the following  question: \textbf{does the optimal depth-to-width ratio depend on the data modality, or is it a consequence of architecture expressivity considerations?}  
In this paper, we establish architecture expressivity results that explain the observed variability in the architecture configurations across domains. 
By identifying expressivity bottlenecks that have manifested themselves mainly in non-linguistic data modalities, we provide simple \textit{domain independent} practical guidelines for Transformer architecture selection. 

Specifically, we identify a \textit{vocabulary bottleneck} in self-attention, proving that the rank of the input embedding matrix caps the network width's  contribution to expressivity. 
Furthermore, we show that when the width exceeds the input embedding rank, deepening is exponentially favorable over widening.
We empirically demonstrate the predicted effects of the vocabulary bottleneck.
Figure~\ref{fig:text_vocab_257_vs_2000} shows experiments in the language domain, demonstrating that when decreasing the vocabulary size below the value of the network width, depth becomes more important than width also in this data modality. This provides empirical support for our expressivity-based interpretation of the variation in the depth-to-width ratio, and counters the modality based interpretation.

As the use of Transformer architectures was extended to different modalities, the effect of this architectural element was overlooked.
For example, in the bioinformatics domain, \citet{rives2019biological} lead the RaptorX benchmark for protein long-range contact precision~\cite{wang2017RaptorX} with a Transformer model that has width $1280$, but a vocabulary size $33$, equivalent to a character-level model. 
Our theoretical results, backed by targeted experiments, indicate that this is very sub-optimal, as the width is severely capped by the low rank of the embedding matrix. In vision, the above-mentioned depth-to-width ratio inconsistency across models is commensurate with our findings --  the depth-$128$ Sparse Transformer model has a pixel-intensity vocabulary size of $256$, while the depth-$12$  ViT-Base model enjoys full-rank embedding and can utilize its width of $768$. 

The above vocabulary bottleneck rarely comes into play when considering language-related applications -- the input embedding matrix is commonly a fully-ranked matrix with dimensions of the words vocabulary size ($\sim30$K) times the network width ($\sim1$K). 
However, the popular ALBERT model of~\citet{Lan2020ALBERT} has deliberately reduced the rank of the embedding matrix for space efficiency reasons. 
Our results formalize the negative impact of this approach on model expressivity, and our experiments quantify the performance degradation --   the ALBERT approach leads to $25\%$ redundancy in network size, \ie, a low rank network is surpassed by a full rank network with $75\%$ of the parameters. 

Another consequence of our framework that applies to leading language models, has to do with the method used in \citet{raffel2019exploring} for scaling up to their $11$B parameter language model, referred to as T5-11B (which holds the state-of-the-art in many NLP benchmarks).  
Due to hardware related considerations, they elected to keep the representation dimension between layers relatively small, and to invest most of the parameters in the self-attention operation itself. 
Beyond the vocabulary bottleneck, our rank bottlenecking proof technique applies to bottlenecks created mid-architecture, and specifically we show that a low representation dimension caps the ability to enjoy an excessive parameter increase in the self-attention operation. We validate this prediction empirically, and project T5-11B to be $\sim45\%$ redundant, \ie, it could achieve its performance with roughly half its size if trained with a regular architecture. Notably, a modified version of T5-11B, dubbed T51.1 XXL~\cite{craffel2021t511}, fixed the above bottleneck in a manner which completely accords with our recommendation. We expand on this modification in section~\ref{sec:experiments:t5_bottleneck}.

The remainder of this paper is organized as follows. In section~\ref{sec:TransformerArchitecture} we present the analyzed input embedding methods and Transformer architecture. In section~\ref{sec:separation_rank} we present a measure referred to as a function's separation rank, with which, in section~\ref{sec:The_Low_Rank_Bottleneck}, we establish our main results regarding functions realized by Transformer architectures. 
In section~\ref{sec:experiments}, we empirically demonstrate the predicted bottleneck phenomena, summarized by:
\begin{enumerate}
	\vspace{-2mm}
	\item 		A  degradation when the input embedding rank is smaller than the network width (section~\ref{sec:experiments:absoulte_degradation_exp});
	\vspace{-2mm}
	\item 	An advantage of depth over width when the input embedding rank is smaller than the network width	(section~\ref{sec:experiments:implications_to_depth_width_interplay});	
	\vspace{-2mm}
	\item A degradation when the network width is smaller than the internal attention representation	(section~\ref{sec:experiments:t5_bottleneck}).
\end{enumerate} 

\ifdefined\SQUEEZE \vspace{-5mm} \fi
\tocless\section{The analyzed Transformer architecture \label{sec:TransformerArchitecture}}

While the original encoder-decoder based Transformer architecture of \citet{vaswani2017transformer} is still widely used, variants based solely on its encoder (\eg, BERT~\cite{devlin2018bert}) or decoder (\eg, GPT~\cite{radford2018improving}) have gained popularity in various domains. For the sake of simplicity, we will analyze such variants. 
The analyzed Transformer architecture is comprised of an input embedding layer, which we present in section~\ref{sec:TransformerArchitecture:Embedding}, followed by $L$ Transformer layers, which we present subsequently in section~\ref{sec:TransformerArchitecture:Arch}. 
\tocless\subsection{The input embedding layer \label{sec:TransformerArchitecture:Embedding}}
We will analyze two common methods for translating raw data into an embedding. 
The first method, \textit{vocabulary} based, is employed when the input is a sequence of $N$ variables that can have one of $V$ discrete values, referred to as vocabulary tokens, \ie, $\left\{w^i\right\}_{i=1}^{N}$ where $\forall i:~w^i\in[V]$. 
Naturally, this method is prevalent in language applications of Transformers (hence the name), 
but it appears also in other domains such as bioinformatics~\cite{rives2019biological} or computer vision~\cite{chen2020generative}. 
The second embedding method that we analyze is \textit{convolution} based, used for example over images (Vision Transformer \cite{dosovitskiy2021an}). %
This method is applied when the raw input sequence is of real valued vectors $\left\{\x^i\right\}_{i=1}^{M}$, where $\forall i: \x^i\in\R^{d_\text{input}}$, and down-sampling is required in order to reduce the sequence length related computation costs.

When using the vocabulary based embedding, since the size of the vocabulary affects storage and runtime, it is common for $V$ to reflect a precomputed compression of the  ``raw vocabulary" of all possible symbols in the raw data. For example, in language, while character level vocabularies could be used with manageable costs (though as we prove below, these incur severe underutilization of the Transformer's expressive power), when aiming for word-level vocabularies, the number of unique words in web based sources can surpass $1$M, and therefore sub-word based encodings such as  WordPiece~\cite{wordpiece}, SentencePiece~\cite{kudo-richardson-2018-sentencepiece} or BPE~\cite{sennrich-etal-2016-neural} are used to reduce the number of vocabulary tokens to~$V\sim30$K.
In computer vision, for ImageGPT,~\citet{chen2020generative} pre-compute $512$ clusters over $3$-dimentional RGB pixel values, along with a mapping of each pixel to one of corresponding ${V}=512$ vocabulary tokens.%

Translating the input sequence $\left\{w^i\in[V]\right\}_{i=1}^{N}$ into indicators: $\left\{\hat{\mathbf{w}}^i=\hat{e}_{w^i}\right\}_{i=1}^{N}$, where $\forall i:\hat{\mathbf{w}}^i\in\V:=\R^{V}$, the output of the embedding layer at position $i\in[N]$ is:
\begin{equation}\label{eq:vocab}
	\y^{0,i}=M_{\textrm{V}} ~\hat{\mathbf{w}}^i+\p^i,
\end{equation}
where $M_{\textrm{V}}\in\R^{d_x\times V}$ is a learned matrix referred to as the vocabulary matrix (or the embedding matrix). Accordingly, $\y^{0,i}$ is a vector in $\R^{d_x}$, \ie, per location, the input to the first Transformer layer is of dimension $d_x$, the network width. The added learned position dependent term $\p^i\in\R^{d_x}$ is referred to as the positional embedding.

In the second, convolution based, input embedding method, in order to fit $M$ real valued input vectors into a Transformer layer with an input sequence of size $N$ (such that $M$ is a multiple integer of $N$), a
convolutional kernel  $\wk\in\R^{\frac{M}{N}\times d_x\times d_{\text{input}}}$ is used for computing the $i$th output of the embedding layer:

\ifdefined\SQUEEZE \vspace{-9mm} \fi
\begin{align}\label{eq:conv}
	\y^{0,i}
	=\sum_{j=1}^{\frac{M}{N}}W^{\text{conv}}_{j}\x^{\frac{M}{N}\cdot (i-1)+j}+\p^i,
\end{align}
\ifdefined\SQUEEZE \vspace{-2mm} \fi
where $\p^i$ is the added positional embedding.

In section~\ref{sec:The_Low_Rank_Bottleneck},  we will show an expressivity bottlenecking result that depends on $r$, a measure of rank that corresponds to the employed embedding method. For the vocabulary embedding method,
$r\leq \min\{d_x,V\}$ is the rank of the vocabulary matrix: 
\ifdefined\SQUEEZE \vspace{-2mm} \fi
\begin{equation}\label{eq:vocab_r}
	r=\rank{M_{\textrm{V}}}.
\end{equation}
For the convolution method, by defining the effective vocabulary dimension to be $V:=\nicefrac{M}{N}\cdot d_{\textrm{input}}$, we reshape the convolutional kernel $W^{\textrm{conv}}$ into a matrix $\tilde{W}^{\textrm{conv}}\in\R^{d_x\times V}$ and define $r\leq \min\{d_x,V\}$ as:
\ifdefined\SQUEEZE \vspace{-1mm} \fi
\begin{equation}\label{eq:conv_r}
	r=\rank{\tilde{W}^{\textrm{conv}}}.
\end{equation} 
\ifdefined\SQUEEZE \vspace{-5mm} \fi

Importantly, though the data modality influences embedding considerations, there is relative freedom in choosing the embedding method (\eg, both the above methods were employed over images) and controlling the corresponding embedding rank $r$ (via choosing the vocabulary size/rank or the convolution kernel size). Therefore, we will argue that by noticing the rank related expressivity bottleneck, suited input embeddings that allow for full utilization of the Transformer expressivity can be used in different domains.

\ifdefined\SQUEEZE \vspace{-2mm} \fi
\tocless\subsection{The self-attention architecture \label{sec:TransformerArchitecture:Arch}}
\ifdefined\SQUEEZE \vspace{-1mm} \fi
Following \cite{levine2020limits}, our theoretical analysis will focus on a variant of self-attention in which the attention scores are unnormalized, and which compounds self-attention layers without mixing in element-wise feed-forward layers in between. Accordingly, given an embedding output sequence $\{\y^{0,i}\}_{i=1}^N$ (see previous subsection), the function realized by the analyzed $H$-headed depth-$L$ width-$d_x$ Transformer architecture is recursively written: 

\ifdefined\SQUEEZE \vspace{-7mm} \fi
\begin{align}\label{eq:our_layer}\y^{l+1,i}\left(\y^{l,1},...,\y^{l,N}\right)\coloneqq\sum_{h=1}^{H}W^{\textrm{O},l,h}\sum_{j=1}^{N}a_{hj}^{i}W^{\textrm{V},l,h}\y^{l,j}\\
	a_{hj}^{i}\coloneqq\left\langle W^{\textrm{Q},l,h}\y^{l,i},W^{\textrm{K},l,h}\y^{l,j}\right\rangle~~~~~~&\nonumber
\end{align}
\ifdefined\SQUEEZE \vspace{-3mm} \fi

where $\forall l\in[L] ,h\in[H]$, $W^{\textrm{K},l,h},W^{\textrm{Q},l,h},W^{\textrm{V},l,h}$, $\left(W^{\textrm{O},l,h}\right)^{\top}\in\R^{d_a\times d_x}$ are the Key, Query, Value and Output self-attention learned weights matrices introduced in~\citet{vaswani2017transformer}. The attention dimension is usually chosen as $d_a=\nicefrac{d_x}{H}$, but in section~\ref{sec:The_Low_Rank_Bottleneck:implications:t5} we analyze a bottleneck related to relaxing this constraint, and in section~\ref{sec:experiments:t5_bottleneck} we project that due to this bottleneck the leading T5 architecture of~\citet{raffel2019exploring} could be almost half of its current size and reach the same performance.

The above relaxations are justified by noting that:
\begin{enumerate}
	\ifdefined\SQUEEZE \vspace{-2mm} \fi
	\item\citet{press2019improving} train a ``self-attention first"  network that first performs all of the  self-attention operations consecutively, and only then performs all of the position-wise feed-forward operations. This network achieves comparable language modeling performance relatively to the regular approach of interleaving these functionalities. 
	Since the feed-forward operation does not mix different locations, this outcome directly implies that the self-attention mechanism itself provides all of the elaborate input integration, and that the interleaved feed-forward layer is just a performance booster.
	\ifdefined\SQUEEZE \vspace{-2mm} \fi
	\item While removing the attention score normalization via softmax is not in line with an intuitive interpretation of attention as distributing ``fractions" of an overall attention budget among inputs, a growing body of work shows that the attention weights distribution does not directly correlate with predictions~\citep{AttentionIsNotExplanation,pruthi2019learning,brunner2020identifiability}.
	Moreover, \cite{richter2020normalized} recently point out undesirable traits of the softmax operation, demonstrating that its property of confining the outcome to the convex hull of its inputs unnecessarily limits the expressibility of the self-attention mechanism. 
	The analyzed unnormalized variant retains the actual operation of dynamically linking input and output locations via the Key/Query/Value connectivity of self-attention. 
\end{enumerate}
\ifdefined\SQUEEZE \vspace{-2mm} \fi

The goal of the above points is not to advocate modifications in the Transformer's non-linearity or normalization operations, but to note that while these are under examination and are susceptible to alteration, the connectivity of self-attention, manifested by eq.~\eqref{eq:our_layer}, is the core mechanism driving its functionality. 
A reinforcing signal to the above argument is the relevance of conclusions drawn by directly analyzing the self-attention mechanism to experiments in commonly employed self-attention networks, as presented in~\citet{levine2020limits} regarding depth efficiency regimes as well by us later in section~\ref{sec:experiments} regarding the effects of the embedding rank.
These experiments are consistently compatible with theoretical predictions that arise from our framework.

\ifdefined\SQUEEZE \vspace{-1mm} \fi
\tocless\section{A capacity for modeling input dependencies \label{sec:separation_rank}}
\ifdefined\SQUEEZE \vspace{-0mm} \fi
In this section, we introduce the separation rank of the function realized by a Transformer as a measure that quantifies its ability to model dependencies between subsets of its inputs.
We will use this measure in section~\ref{sec:The_Low_Rank_Bottleneck} in order to establish the embedding bottleneck in Transformer architectures.
The separation rank, introduced in \citet{beylkin2002numerical} for high-dimensional numerical analysis, 
was employed for various applications, \eg,~chemistry~\cite{harrison2003multiresolution}, particle engineering~\cite{hackbusch2006efficient}, and machine learning~\cite{beylkin2009multivariate}. 
More recently,
the separation rank has been established as a measure of dependencies modeled by deep convolutional and recurrent networks \wrt~their inputs~\citep{cohen2017inductive,cohen2017analysis,levine2018benefits}, and tied to quantum entanglement measures for proving that these deep learning architectures can model elaborate many-body quantum particle correlations~\cite{levine2018deep,levine2019quantum,sharir2020deep}. 
Our usage of the separation rank directly follows that in \citet{levine2020limits}, who employed this measure for studying the depth-to-width interplay in self-attention networks.

Let $(A,B)$ be a balanced partition of the input locations, \ie,~$A$ and~$B$ are equal sized disjoint subsets of~$[N]$ whose union gives~$[N]$.
The separation rank of a function $y(\x^1,\ldots,\x^N)$~\wrt~a partition $(A,B)$, is the minimal number of summands that together sum up to equal $y$, where each summand is \emph{multiplicatively separable \wrt~$(A,B)$}, \ie,~is equal to a product of two functions~--~one that intakes only inputs from one subset $\{\x^{i}:i\in A\}$, and another that intakes only inputs from the other subset $\{\x^{i}:i\in B\}$. 
Formally, the \emph{separation rank} of $y:\V^N\to\R$ \wrt~the partition $(A,B)$ is defined as follows:
\ifdefined\SQUEEZE \vspace{-1mm} \fi
\begin{align*}
	sep&\left(y,A,B\right)  \coloneqq\min\{R\in\N\cup\left\{ 0\right\} :\\
	& \exists g_{1},\dots,g_{R}:\V^{N/2}\to\R,g'_{1},\dots,g'_{R}:\V^{N/2}\to\R\\
	& y\left(\x^1,\ldots,\x^N\right)=\\
	& \sum\nolimits_{r=1}^{R}g_{r}\left(\{\x^{i}:i\in A\}\right)g'_r\left(\{\x^{i}:i\in B\}\right)\}
\end{align*}
\ifdefined\SQUEEZE \vspace{-4mm} \fi

We will use the separation rank as a quantifier of correlations that can be expressed by the model.
If the separation rank of a function \wrt~an input partition is~$1$, the function is separable, meaning it cannot take into account consistency between $\{\x^{i}\}_{i\in A}$ and $\{\x^{i}\}_{i\in B}$.
In a statistical setting, if~$y$ is a probability density function such as in \citet{radford2018improving}, this would mean that $\{\x^{i}\}_{i\in A}$ and $\{\x^{i}\}_{i\in B}$ are statistically independent.
The higher $sep(y;A,B)$ is, the farther~$y$ is from this situation, \ie~the more it models dependency between $\{\x^{i}\}_{i\in A}$ and $\{\x^{i}\}_{i\in B}$, or equivalently, the stronger the correlation it induces between the inputs indexed by~$A$ and those indexed by~$B$. 

\ifdefined\SQUEEZE \vspace{-1mm} \fi
\tocless\section{The Vocabulary Bottleneck \label{sec:The_Low_Rank_Bottleneck}}
\ifdefined\SQUEEZE \vspace{0mm} \fi

In this section, we theoretically establish the vocabulary bottleneck phenomenon in Transformer architectures, and its effect on the depth-to-width interplay. Specifically, we prove an upper bound on the separation rank of the analyzed Transformer architecture that grows exponentially with depth $L$ times \textit{the minimum} between the network width $d_x$ and the embedding rank $r$, and show it is tight for $L>\log_3d_x$. 

Without considering the embedding rank bottleneck, \citet{levine2020limits} have shown that for $L>\log_3d_x$, both depth and width contribute exponentially to the separation rank, and have provided extensive empirical corroboration of this prediction for a ``depth-\textbf{in}efficiency" regime of self-attention.  For the complementary regime of $L<\log_3d_x$ they prove a ``depth-efficiency" result, by which deepening is favorable over widening.
Our results imply that when the embedding rank is lower than the network width, the width related parameters are underutilized and ``depth-efficiency" kicks in immediately, even within the more practical $L>\log_3d_x$ regime. We formalize these notions below, and validate them empirically in the next section.   

The following theorems states that the network's capacity to model dependencies is harmed by a low rank embedding:
\ifdefined\SQUEEZE \vspace{-1mm} \fi
\begin{theorem}\label{theorem:embedding_rank_bottleneck_upper_bound}
	(upper bound on the separation rank) Let $y^{i,L, d_x,H,r}_p$ be the scalar function computing the $p^{\textrm{th}}$ entry of an output vector at position $i\in[N]$ of the $H$-headed depth-$L$ width-$d_x$ Transformer network\footnote{For simplicity, in this theorem we ignored the positional embedding dependencies and defer the full theorem to the appendix.} defined in eq.~\eqref{eq:our_layer}, where the embedding rank $r$ is defined by eq.~\eqref{eq:vocab_r} (vocabulary embedding) or eq.~\eqref{eq:conv_r} (convolution embedding). 
	Let $sep\left(y^{i,L, d_x,H,r}_p\right)$ denote its separation rank (section~\ref{sec:separation_rank}). 
	Then the following holds:
	\ifdefined\SQUEEZE \vspace{-0.5mm} \fi
	\begin{equation}\label{eq:upper_bound}
		\log\left(sep(y^{i,L, d_x,H, r}_p)\right)=
		\tilde{O}\left(L\cdot\min\{r,d_x\}\right)
	\end{equation}	
\end{theorem}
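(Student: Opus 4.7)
My plan is to follow the template of the separation rank upper bound proved in~\citet{levine2020limits} for the same architecture, which gives $\log(sep)=\tilde{O}(L\cdot d_x)$, and sharpen it by exploiting the low-rank structure of the input embedding. I would split into two cases. When $r\geq d_x$, $\min(r,d_x)=d_x$ and the statement reduces to that bound applied as a black box. The nontrivial regime is $r<d_x$, which the rest of the plan addresses.

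In the $r<d_x$ regime, I would write a rank factorization $M_{\textrm{V}}=UB^{\top}$ with $U\in\R^{d_x\times r}$ (without loss of generality with orthonormal columns) and $B\in\R^{V\times r}$, and analogously factor the reshaped convolutional kernel $\tilde{W}^{\textrm{conv}}=UB^{\top}$ in the convolution case. Then $\y^{0,i}=U\tilde{\y}^{0,i}$ with $\tilde{\y}^{0,i}\in\R^{r}$ (using the footnote's simplification that positional embedding dependencies are ignored). The factor $U$ can be absorbed into the first layer's Key, Query, and Value weights by defining $\tilde{W}^{\bullet,1,h}:=W^{\bullet,1,h}U\in\R^{d_a\times r}$ for $\bullet\in\{K,Q,V\}$, so that the first layer's computation depends on the inputs only through the $r$-dimensional latent vectors $\tilde{\y}^{0,i}$ and through effective weight matrices of rank at most $r$.

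I would then rerun the recursive separation-rank argument of~\citet{levine2020limits} on this reformulated architecture. That argument bounds $\log sep(y^{i,L,d_x,H,r}_p)$ by a sum over layers in which each layer contributes a term scaling with the rank of the Key/Query/Value/Output maps appearing in that layer. The aim is to show that, under the above factorization, the relevant rank at every layer is at most $r$. To this end I would maintain the inductive invariant that $\y^{l,i}=V^{(l)}\tilde{\y}^{l,i}$ for some fixed input-independent $V^{(l)}\in\R^{d_x\times r}$ (obtained by propagating $U$ through the successive output and value matrices) and some $\tilde{\y}^{l,i}\in\R^{r}$ that depends on the inputs. Since separation rank is invariant under left-multiplication by a fixed matrix, this would reduce the analysis to that of a width-$r$ architecture, for which the levine2020limits bound gives $\log(sep)=\tilde{O}(L\cdot r)=\tilde{O}(L\cdot\min(r,d_x))$.

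The main obstacle is establishing that this low-rank invariant genuinely propagates through all $L$ layers: a single attention layer can a priori expand the effective subspace dimension by a factor of $H$, which would only delay saturation to $d_x$ by $O(\log_H(d_x/r))$ layers and yield no improvement over the $\tilde{O}(L\cdot d_x)$ bound for larger $L$. Overcoming this requires either a careful tensor-network bookkeeping that identifies each layer's separation-rank contribution with the rank of the input-carrying tensors rather than with $d_x$, or an equivalent polynomial argument expressing $y^{i,L,d_x,H,r}_p$ as a low-degree polynomial in the latent coordinates $\tilde{\y}^{0,\cdot}\in\R^r$ with at most $\exp(\tilde{O}(Lr))$ distinct monomials. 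I expect this bookkeeping step to be the bulk of the technical work, as it is the single place where the proof must genuinely go beyond what is already proven in~\citet{levine2020limits}.
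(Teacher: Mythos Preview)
Your layer-by-layer invariant $\y^{l,i}=V^{(l)}\tilde{\y}^{l,i}$ with a fixed $V^{(l)}\in\R^{d_x\times r}$ does fail, for exactly the reason you flag: after one attention layer, $\y^{l+1,i}=\sum_{h}W^{O,l,h}W^{V,l,h}V^{(l)}z^{h}$ with head-dependent input-dependent $z^{h}\in\R^{r}$, so $\y^{l+1,i}$ only lies in a fixed subspace of dimension $\min\{Hr,d_x\}$, not $r$. There is no black-box reduction to a width-$r$ network, and the per-layer recursive bound from \citet{levine2020limits} cannot be run with $r$ in place of $d_x$.

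Your polynomial alternative is the approach the paper actually takes, but your one-line description misstates the mechanism and hides the main idea. It is \emph{not} true that the number of distinct monomials in the $N\cdot r$ latent scalars is $\exp(\tilde{O}(Lr))$; that count is $\multiset{Nr}{3^{L}}$ and blows up with $N$. What the paper does instead is start from the explicit unrolled form of the depth-$L$ composition (their Lemma~\ref{lemma:self_attention_compositions}, taken from \citet{levine2020limits}), which writes $y_p^{i,L,d_x,H,r}$ as a sum over position indices $j_1,\dots,j_{C(L)}\in[N]$ of a product of $2C(L)+1=3^{L}$ inner products $\langle A^{(c,h)},\y^{0,j_c}\rangle$, $\langle B^{(c,h)},\y^{0,j_c}\rangle$. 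Substituting your factorization $\y^{0,j}=U\tilde{\y}^{0,j}$ and expanding each inner product over $[r]$, every term is indexed by a tuple $(\alpha_1,\dots,\beta_{C(L)})\in[r]^{3^{L}}$ together with the $j_c$'s. The crucial step is then to reorganize this sum not by monomials but by the \emph{total power profile} $n_1+\cdots+n_r=3^{L}$ (how many of the $3^{L}$ latent indices equal each $\alpha\in[r]$) and by how these powers split between the two halves $P,Q$ of the partition. For each fixed profile and split, the residual sum over $j$'s and over the finer allocation of powers to positions factors as $\chi_P\cdot\chi_Q$, a single separable term. Counting profiles and splits gives $sep\le\multiset{r}{3^{L}}\cdot\mathrm{poly}(3^{L})\cdot\bigl(3^{L}/r+1\bigr)^{r}$, which is $\exp(\tilde{O}(Lr))$ and, crucially, independent of $N$. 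So the ``bulk of the technical work'' is not layer-wise bookkeeping but this global regrouping of the unrolled polynomial; that is the piece your plan is missing.
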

\ifdefined\SQUEEZE \vspace{-2mm} \fi

The theorem below simply states that under additional assumptions, the upper bound in the above theorem is asymptotically tight:
\begin{theorem}\label{theorem:embedding_rank_bottleneck_lower_bound}
	(lower bound on the separation rank) For $y^{i,L, d_x,H,r}_p$ as defined in in theorem~\ref{theorem:embedding_rank_bottleneck_upper_bound}, assume that $L>\log_{3}d_x$, $H<r$. Furthermore, for the vocabulary embedding case, assume that $N\rightarrow\infty$. Then for all values of the network weights but a set of Lebesgue measure zero, the following holds:
	\ifdefined\SQUEEZE \vspace{-1mm} \fi
	\begin{equation}\label{eq:lower_bound}
		\log\left(sep(y^{i,L, d_x,H, r}_p)\right)=
		\tilde{\Omega}\left(L\cdot\left(\min\{r,d_x\}-H\right)\right)
	\end{equation}
\end{theorem}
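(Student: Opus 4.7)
The plan is to lower bound the separation rank by the rank of a matricization of the Transformer's grid tensor, and then reduce the low-rank-embedding case to a full-rank Transformer of effective width $\min\{r,d_x\}$, so that the existing lower bound of \citet{levine2020limits} can be invoked as a black box. First I would apply the standard reduction: for any finite set of template vectors $\{\mathbf{v}_1,\dots,\mathbf{v}_M\}\subset\V$, evaluating $y^{i,L,d_x,H,r}_p$ at all tensor-product assignments of templates to the $N$ input positions produces a grid tensor $T$, and for any balanced partition $(A,B)$ of $[N]$ the rank of the matricization $\mat{T}_{A,B}$ is a well-known lower bound on $\sep{A,B}{y^{i,L,d_x,H,r}_p}$. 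It therefore suffices to exhibit a single choice of weights and templates at which this matricization rank is at least $\exp\!\big(\tilde{\Omega}(L(\min\{r,d_x\}-H))\big)$, and then promote this to almost all weights via the standard fact that the matricization rank is a polynomial condition on the parameters, so the set where it falls below any fixed threshold is cut out by vanishing subdeterminants and has Lebesgue measure zero inside the natural parametrization (treating $M_{\textrm{V}}$, respectively $\tilde W^{\textrm{conv}}$, as a product of a $d_x\times r$ and an $r\times V$ factor).

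The core construction is to choose weights under which the entire forward pass is confined to an aligned $r$-dimensional subspace $U\subset\R^{d_x}$. Concretely, I would pick $M_{\textrm{V}}$ (or $\tilde W^{\textrm{conv}}$) whose column space coincides with the first $r$ coordinates of $\R^{d_x}$, and then pick the Key, Query, Value and Output matrices with block structure that acts arbitrarily on the first $r$ coordinates and as zero on the orthogonal complement. With these weights, every intermediate activation $\y^{l,i}$ in \eqref{eq:our_layer} lies in $U$, and the induced dynamics inside $U$ are indistinguishable from those of an $H$-headed depth-$L$ width-$\min\{r,d_x\}$ full-rank Transformer. I would then apply the existing lower bound of \citet{levine2020limits}, which under $L>\log_3 d_x$ and $H<\min\{r,d_x\}$ gives $\log\sep{A,B}{\cdot}=\tilde{\Omega}(L(\min\{r,d_x\}-H))$ for almost all parameters of this effective network, and hence at my constructed weights for the original network.

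Choosing templates so that the grid tensor matricization actually realizes the above rank requires producing $r$ linearly independent first-layer representations at each position in $A$ and $B$. In the convolution embedding case, this is immediate because \eqref{eq:conv} is linear in the real-valued input, so one can pick templates in $\R^{d_{\textrm{input}}}$ whose images form an arbitrary basis of $U$. In the vocabulary embedding case one needs $r$ distinct one-hot templates, which requires the sequence to touch at least $r$ vocabulary tokens on each side of the partition; in combination with the balanced-partition construction of \citet{levine2020limits}, which itself needs sufficiently many distinct positions per side, this is exactly what the hypothesis $N\to\infty$ is buying, since $V$ can be taken to scale with $N$ to accommodate the required template diversity.

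The main obstacle will be the careful bookkeeping inside the $U$-invariance argument: one must verify that the unnormalized attention scores $a_{hj}^i$, the value-output composition $W^{\textrm{O},l,h}W^{\textrm{V},l,h}$ across all $H$ heads, and (in the full appendix version) the positional terms can simultaneously be chosen to preserve $U$ while remaining generic enough that the effective width-$\min\{r,d_x\}$ network inherits the full-rank genericity hypothesis required by \citet{levine2020limits}, in particular not accidentally collapsing the effective width below $r$ or the effective number of heads below $H$. A secondary subtlety is the genericity upgrade itself: because $M_{\textrm{V}}$ is constrained to have rank exactly $r$, the ambient parameter space is a determinantal variety, so the ``Lebesgue measure zero'' statement must be interpreted with respect to the smooth manifold of $(d_x\times r,\,r\times V)$ factorizations rather than with respect to the entrywise parametrization of $M_{\textrm{V}}$, inside which the rank-$r$ locus itself has measure zero whenever $r<\min\{d_x,V\}$.
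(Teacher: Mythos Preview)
Your high-level scaffolding matches the paper: lower bound separation rank by the rank of a grid-tensor matricization, exhibit a single assignment of weights and templates attaining the target rank by reducing to the width-$\min\{r,d_x\}$ construction of \citet{levine2020limits}, and then upgrade to almost all weights via the polynomiality of the rank condition on the $(d_x\times r,\,r\times V)$ factorization. For the convolution embedding this is essentially what the paper does, and your version would go through.

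The vocabulary case, however, has a real gap. You assert that ``one needs $r$ distinct one-hot templates'' and that ``$V$ can be taken to scale with $N$''. Neither is right. The vocabulary size $V$ is a fixed parameter of the problem (you only know $V\ge r$ since $r=\rank{M_{\textrm{V}}}\le V$), and having $r$ linearly independent embedding vectors is not what the \citet{levine2020limits} construction consumes. Their grid tensor is built from a template set of cardinality on the order of $\multiset{(r-H)/2}{3^{L-2}}$ (the rows of the matrix $A$ in the paper's Corollary~3), which is exponential in $L$. With a discrete vocabulary you can feed the embedding layer at most $V$ distinct vectors per position, so a black-box invocation of \citet{levine2020limits} on the ``effective width-$r$'' network is unavailable: that result assumes the freedom to choose templates arbitrarily in $\R^{d_x}$, which the convolution embedding grants but the vocabulary embedding does not.

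The paper closes this gap by spending the first self-attention layer, not the embedding layer, to manufacture the needed diversity. It sets the layer-$1$ Key/Query weights so that every attention score equals $1$, making $\y^{1,j}$ a fixed linear image of $\sum_{t=1}^{N} M_{\textrm{V}}\hat{\mathbf w}^{(d_t)}$. Then, by choosing input sequences that repeat each of the $r$ basis tokens a prescribed number of times, the output of layer $1$ can be driven to any desired \emph{integer} linear combination of the $r$ embedding columns. This is exactly what the hypothesis $N\to\infty$ buys: enough positions to realize large integer multiplicities. To make this compatible with the \citet{levine2020limits} template matrix $A$, the paper proves an auxiliary lemma showing that $A$ can be chosen in $\N^{\multiset{d}{\lambda}\times d}$ with constant $\ell^2$ row norm while still making $(AA^\top)^{\odot\lambda}$ full rank; this integrality is the piece that lets token repetition substitute for continuous template choice. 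Your proposal does not contain any mechanism of this kind, so as written it does not establish the vocabulary case.
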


Note that the architectural assumptions that are added for the lower bound are practically reasonable. (1) $L>\log_{3}d_x$: for typical width $d_x$ of $1000$s, the log implies that the bound holds for networks of practical depths $\sim8$ and above; (2) $H<r$: the number of attention heads per layer $H$ is typically in the order of $10$s, while the width and rank are typically $100$s and above.

Note that while the $N\rightarrow\infty$ assumption in the vocabulary embedding are not practically reasonable, Our proof usage of $N$ is clearly wastefully, and we conjecture the lower bound holds for $N=\Omega\left(\nicefrac{r\cdot L}{\log_{3} r}\right)$ (see the appendix for detailed discussion). Moreover, ~\cite{bhojanapalli2020low} showed that $d_x<N$ limits self-attention expressivity, thus it is unlikely that huge $N$ contribute significantly to the network's capacity to model dependencies.

In the following, we outline the proof sketch for theorem~\ref{theorem:embedding_rank_bottleneck_upper_bound} (section~\ref{sec:The_Low_Rank_Bottleneck:sketch}) and then discuss three practical implications of the established vocabulary rank bottleneck (section~\ref{sec:The_Low_Rank_Bottleneck:implications}).

\ifdefined\SQUEEZE \vspace{-3mm} \fi
\tocless\subsection{Proof sketch for theorem~\ref{theorem:embedding_rank_bottleneck_upper_bound} \label{sec:The_Low_Rank_Bottleneck:sketch}}
\ifdefined\SQUEEZE \vspace{-1mm} \fi
We present below the proof outline for the upper bound in eq.~\ref{eq:upper_bound}, which establishes that the contribution of width to the separation rank is bottlenecked by the input embedding rank. We defer the full proof to the appendix, along with the proof of the lower bound in theorem.~\ref{theorem:embedding_rank_bottleneck_lower_bound}, which establishes the tightness of the upper bound.

First, notice that each self-attention layer, as defined in eq.~\eqref{eq:our_layer}, is a degree $3$ polynomial over its $N\cdot d_x$ scalar inputs. Since both the vocabulary embedding and the convolution embedding are linear mappings (eqs.~\eqref{eq:vocab} and~\eqref{eq:conv}), in both cases the whole network is a composition of $L$ degree-$3$ polynomials. Therefore $y^{i,L, d_x,H,r}_p$ is a degree $3^L$ polynomial over $N\cdot d_x$ variables. By definition, the separation rank of any monomial is $1$. In addition, the separation rank of sum of functions is upper bounded by the sum of theirs separation ranks. Thus, we upper bounded $sep(y^{i,L, d_x,H, r}_p)$ by the number of its monomials, which is at most $O\left(\left(3^L+N\cdot d_x\right)^{N\cdot d_x}\right)$ (a simple combinatorial bound detailed in the appendix).

The above analysis is agnostic to the first linear embedding layer. However, when $r<d_x$ this layer is important since the $N\cdot d_x$ variables have only $N\cdot r$ degrees of freedom: we can define a set of $N\cdot r$ variables %
which are a linear combinations of the original variables. %
Importantly $y^{i,L, d_x,H,r}_p$ is still a degree $3^L$ polynomial over the new variables, and each monomial of this polynomial has separation rank of $1$. By noticing that the summation over monomials is not tight, a more careful analysis, as done in the appendix, shows that for separation rank purposes, the effective degree of freedom is only $O\left(r\right)$, independent of $N$, thus concluding that $sep(y^{i,L, d_x,H, r}_p)$ is upper bounded by $O\left(\left(3^L+r\right)^{r}\right)$.

\ifdefined\SQUEEZE \vspace{-3mm} \fi
\tocless\subsection{Practical implications \label{sec:The_Low_Rank_Bottleneck:implications}}
\ifdefined\SQUEEZE \vspace{-1mm} \fi
Beyond quantifying the vocabulary bottleneck's effect on the network's ability to model dependencies (via separation rank), 
theorems~\ref{theorem:embedding_rank_bottleneck_upper_bound} and~\ref{theorem:embedding_rank_bottleneck_lower_bound} have direct implications on Transformer architecture design. We detail them in the following.

\ifdefined\SQUEEZE \vspace{-2mm} \fi
\tocless\subsubsection{The low rank expressivity bottleneck \label{sec:The_Low_Rank_Bottleneck:implications:performance}}
\ifdefined\SQUEEZE \vspace{-1mm} \fi
Since two functions can be equal only if they have the same separation rank, a network with a  low rank embedding $r<d_x$ cannot express the operation of a full rank $r= d_x$ network. 
As we demonstrate in section~\ref{sec:experiments:absoulte_degradation_exp} (figure~\ref{fig:low_rank_bottleneck}), this result translates into an empirical degradation in performance when $r< d_x$. 

A popular low vocabulary rank method is ALBERT, suggested in~\cite{Lan2020ALBERT}; we show in section~\ref{sec:experiments:absoulte_degradation_exp} that the low $\nicefrac{r}{d_x}=\nicefrac{128}{4096}$ ratio implemented in their network yields a $25\%$ redundancy in network size, \ie, a low rank network is surpassed by a full rank network with $75\%$ of the parameters. The above vocabulary bottleneck is even more common in non-linguistic domains. For example, \citet{rives2019biological} train a Transformer model with $\nicefrac{r}{d_x}=\nicefrac{33}{1280}$. The result in theorems~\ref{theorem:embedding_rank_bottleneck_upper_bound} and~\ref{theorem:embedding_rank_bottleneck_lower_bound} formalizes the sub-optimality of these settings.

\ifdefined\SQUEEZE \vspace{-2mm} \fi
\tocless\subsubsection{Effect on the depth-to-width interplay \label{sec:The_Low_Rank_Bottleneck:implications:depth_width}}
\ifdefined\SQUEEZE \vspace{-1mm} \fi
Beyond establishing a degradation in performance for low embedding rank Transformers, theorems~\ref{theorem:embedding_rank_bottleneck_upper_bound} and~\ref{theorem:embedding_rank_bottleneck_lower_bound} imply an advantage of deepening versus widening beyond the point of $d_x=r$, as deepening contributes exponentially more to the separation rank in this case. As we demonstrate in section~\ref{sec:experiments:implications_to_depth_width_interplay} (figures~\ref{fig:text_vocab_257_vs_2000} and~\ref{fig:depth_24_vs_48_albert_low_rank}), when comparing two 
Transformer architectures of depths $L^{\textrm{shallow}}<L^{\textrm{deep}}$ with the same embedding rank $r$ and the same number of parameters, the two networks perform comparably when  $d_x^{\textrm{shallow}}\leq r$, and the deeper network is better when $d_x^{\textrm{shallow}}> r$. 

This implication directly explains the observed depth-to-width ratio differences between language models and vision models. The Sparse Transformer~\cite{child2019sparsetransformer} over images, which is $128$ layers deep at the  same parameter count of the $12$-layered BERT-Base in language, has a small pixel-intensity vocabulary (each pixel is translated into $3$ input tokens corresponding to its color channels), which caps the contribution of width. The same vocabulary is used in the ablation of~\citet{henighan2020scaling}, which attributes the difference in optimal depth-to-width ratio to the difference in data modalities. The result in theorems~\ref{theorem:embedding_rank_bottleneck_upper_bound} and~\ref{theorem:embedding_rank_bottleneck_lower_bound}, along with the corroborating experiments in section~\ref{sec:experiments:implications_to_depth_width_interplay}, implies that this phenomenon is \textit{modality independent}, and is in fact related to architecture expressivity.

\ifdefined\SQUEEZE \vspace{-2mm} \fi
\tocless\subsubsection{A mid-architecture bottleneck --  width caps the internal attention dimension \label{sec:The_Low_Rank_Bottleneck:implications:t5}}
\ifdefined\SQUEEZE \vspace{-1mm} \fi
The above implications relate to the vocabulary bottleneck caused by a low input embedding rank. 
By observing that the upper bound on the separation rank does not depend on the number of attention heads $H$, 
we establish a mid-architecture bottleneck related to this architectural aspect, which affects leading Transformer architectures.

Specifically, following the original implementation of ~\citet{vaswani2017transformer}, most common Transformer architectures set the number of heads to be $H=\nicefrac{d_x}{d_a}$, \ie, the network width divided by the internal attention dimension (see text below eq.~\eqref{eq:our_layer}). 
However, in their effort to drastically increase network size under hardware restrictions,~\citet{raffel2019exploring} trained their unprecedentedly-sized~$11$B parameter T5 model by decoupling the above:
They train a width $d_x=1$K network but increase the number of attention heads per layer to $H=128$ while keeping the attention dimension fixed at $d_a=128$. Thus, T5-11B achieves an internal attention embedding of $ H\cdot d_a=16$K before projecting back down to the width value of $1$K between layers.	

However, since the bounds in theorems~\ref{theorem:embedding_rank_bottleneck_upper_bound} and~\ref{theorem:embedding_rank_bottleneck_lower_bound} are capped at $d_x$, our theoretical framework predicts that this form of parameter increase is suboptimal, \ie, adding parameters to the internal attention representation beyond $H\cdot d_a=d_x$ is inferior to simply increasing the width $d_x$. 
We verify this conclusion in section~\ref{sec:experiments:t5_bottleneck} (figure~\ref{fig:t5_expansion}), where we establish that the architectural configuration presented in T5 is indeed highly parameter redundant.

\ifdefined\SQUEEZE \vspace{-2mm} \fi
\tocless\section{Experiments \label{sec:experiments}}
\ifdefined\SQUEEZE \vspace{-1mm} \fi
In the previous sections, we analyzed a simplified version of Transformer networks (described in  section~\ref{sec:TransformerArchitecture}). For this class, we proved the existence of a low-rank embedding bottleneck that limits the contribution of network width to the Transformer expressivity.
In this section, we demonstrate that \textbf{our theoretical predictions are manifested in common Transformer networks}; the experiments below were conducted over common architectures which include all operations that were omitted in our theoretical analysis.

Since unidirectional models are more stable to vocabulary variations than bidirectional models \cite{levine2021pmimasking}, we trained decoder-only language models, by optimizing the autoregressive log-likelihood of the training examples for $1M$ steps.
We perform our experiments over language in order to establish that architecture expressivity causes the variation in the optimal depth-to-width ratio: we demonstrate that the vocabulary bottleneck causes the examined language modality to exhibit the same trends that were attributed to other modalities    by~\citet{henighan2020scaling}.  

Our training set was English Wikipedia, BookCorpus and OpenWebText, with a total size of $60$G. 
We report the loss on a held out test set of~$40$K sequences. 
Notably, we estimated the variance of the pretraining and evaluation procedure by rerunning $7$ of the trained architectures five times each, and found it to be very low -- the reported test loss is stable up to $~10^{-3}$.
The remainder of the training details are given in the appendix.

\ifdefined\SQUEEZE \vspace{-2mm} \fi 
\tocless\subsection{Rank bottleneck degrades performance \label{sec:experiments:absoulte_degradation_exp}}
\ifdefined\SQUEEZE \vspace{-1mm} \fi
\begin{figure}[t]
	\vskip 0.2in
	\begin{center}
		\centerline{\includegraphics[width=\columnwidth]{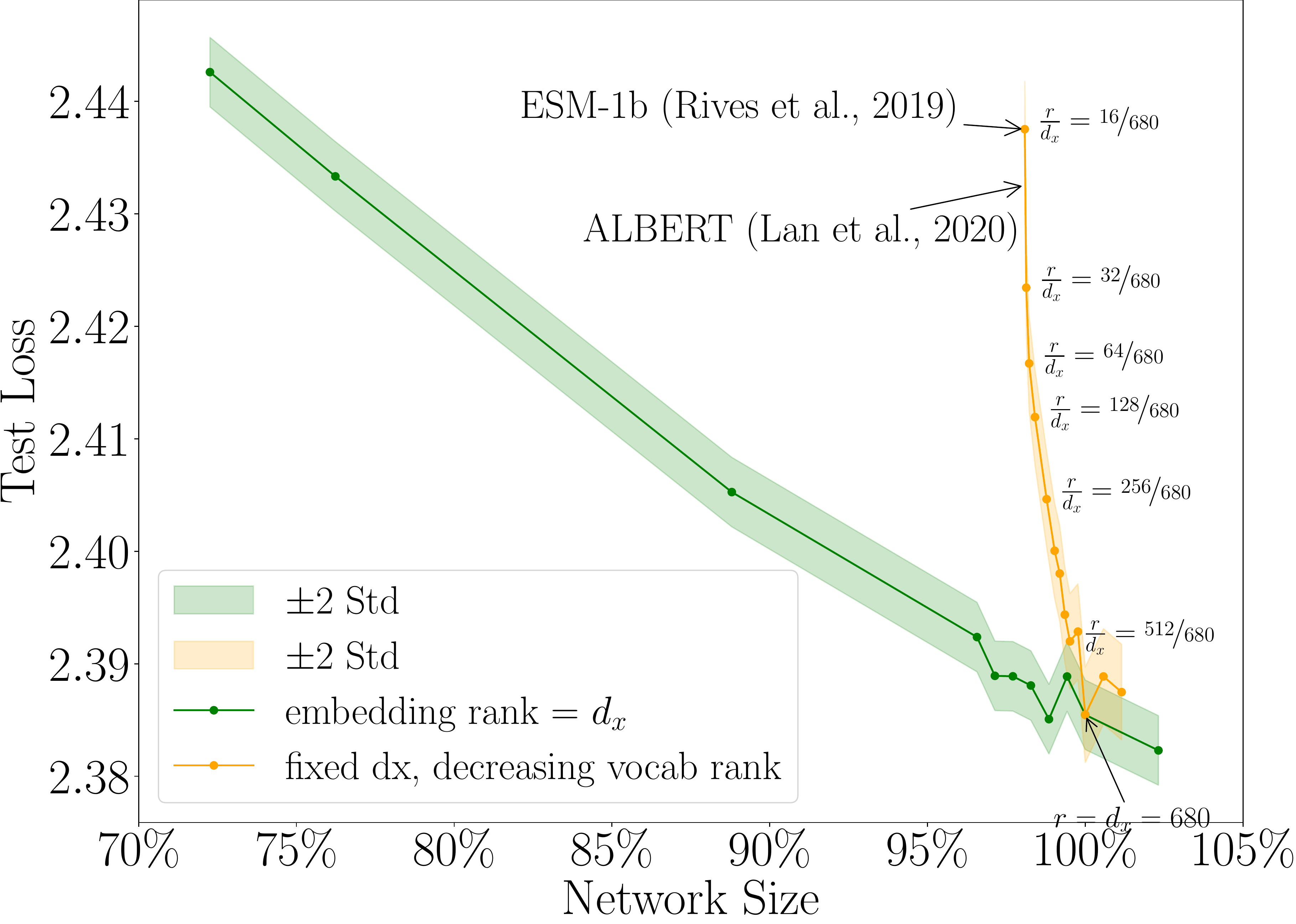}}
		\caption{An experimental validation of the low-rank embedding bottleneck. We reduce the network size by decreasing either the width of all layers (green), or by only decreasing the input embedding rank (orange). A rapid degradation is observed when the input embedding rank is decreased. This degradation affects leading models in NLP (ALBERT) and in other domains (ESM-1b).
		}
		\label{fig:low_rank_bottleneck}
		\ifdefined\SQUEEZE \vspace{-2mm} \fi
	\end{center}
	\vskip -0.2in
	\ifdefined\SQUEEZE \vspace{-4mm} \fi
\end{figure}
Theorems~\ref{theorem:embedding_rank_bottleneck_upper_bound} and~\ref{theorem:embedding_rank_bottleneck_lower_bound} reveal a low-rank embedding-bottleneck phenomenon in Transformer architectures. In this subsection, we demonstrate this phenomenon by factoring the input embedding matrix into two matrices of dimensions $d_x \times r$, $r \times V$, while incrementally decreasing the rank $r$. This is similar to the approach suggested in ALBERT \cite{Lan2020ALBERT} for reducing the parameter count. 
Specifically, we trained depth-$12$ networks with embedding ranks that range between $16$ and the full rank of $d_x=680$. Additionally, we trained a baseline of full input embedding rank depth-$12$ networks of sizes varying between $50$M and $75$M parameters (full details on the widths are given in the appendix).

In common language model implementations, the input embedding weights are shared with the final classification weights (referred to as weight tying~\cite{press-wolf-2017-using}). In our experiment, we wanted ensure that the performance degradation is caused by the embedding bottleneck and not by the softmax bottleneck that \citet{yang2018breaking} establish regarding the final classification operation. Therefore, we did not perform the above weight tying, and kept the classification weights matrix a fully ranked $V \times d_x$ matrix, even when we decreased the rank of the input embedding matrix.

Figure~\ref{fig:low_rank_bottleneck} shows that when decreasing the input embedding rank, the loss increases much more rapidly than when reducing the network size by decreasing the width $d_x$. A network with $\sim70$ million parameters and an ``embedding-rank to width" ratio of $\nicefrac{r}{d_x}=\nicefrac{16}{680}$, is comparable in performance to a non-bottlenecked network that is $25\%$ smaller. This low-rank ratio is close the the ratio of ALBERT-xxlarge $\nicefrac{r}{d_x}=\nicefrac{128}{4096}$ as well as to the ratio caused by the $V=33$ and $d_x=1280$ of the leading $650\text{M}$ parameter ESM-1b \cite{rives2019biological} protein model, implying that their network could have been strengthened by constructing a larger vocabulary (perhaps a ``word-level" equivalent).

\ifdefined\SQUEEZE \vspace{-2mm} \fi
\tocless\subsection{Vocabulary affects the depth-to-width interplay \label{sec:experiments:implications_to_depth_width_interplay}}
\ifdefined\SQUEEZE \vspace{-0mm} \fi
The results of the previous section directly imply that by limiting the embedding rank, a small vocabulary can harm performance. 
In this section we verify the second conclusion of theorems~\ref{theorem:embedding_rank_bottleneck_upper_bound} and~\ref{theorem:embedding_rank_bottleneck_lower_bound}, which states that for either $V<d_x$ or $r<d_x$, it is better to use narrower and deeper models. 

\tocless\subsubsection{Small vocabulary size $\mathbf{V<d_x}$}
We compared networks of depths $L^{\textrm{shallow}}=24,L^{\textrm{deep}}=48$, of sizes ranging between $5$M and $110$M, when three different BPE \cite{sennrich-etal-2016-neural} vocabularies of sizes $V=257, 500, 2000$ are used (full details on the vocabularies and trained architectures are given in the appendix).
Figures~\ref{fig:text_vocab_257_vs_2000}(a)-(c) show a clear trend: the smaller the vocabulary, the sooner the deeper networks begin outperforming the shallow ones. 
Moreover, at the ``depth-efficiency" point, for which the deeper network starts outperforming the shallow one, the width of the shallower network is around the vocabulary size. 

In addition, we show in the appendix that this does not occur when the vocabulary size exceeds the network width and does not constitute a bottleneck. In this case, the vocabulary size has negligible effect on the ``depth-efficiency" point. For example, we show that the ``depth-efficiency" point of  GPT-2's vocabulary \cite{radford2019language} remains very close to that of our $V=2000$ vocabulary, even though the GPT-2 vocabulary is $\sim25X$ larger. This is directly in line with our prediction when there is no  vocabulary bottleneck. 

Overall, figure~\ref{fig:text_vocab_257_vs_2000} clearly shows that the same phenomenon that occurred over images and mathematical data, of tilting the depth-to-width ratio towards depth, occurs also in the case of language when the vocabulary bottleneck is present.
This phenomenon was attributed by~\citet{henighan2020scaling} to the difference in data modalities, but the above outcomes reinforce the network expressivity related interpretation for its origin.

\begin{figure}[t]
	\vskip 0.2in
	\begin{center}
		\centerline{\includegraphics[width=0.98\columnwidth]{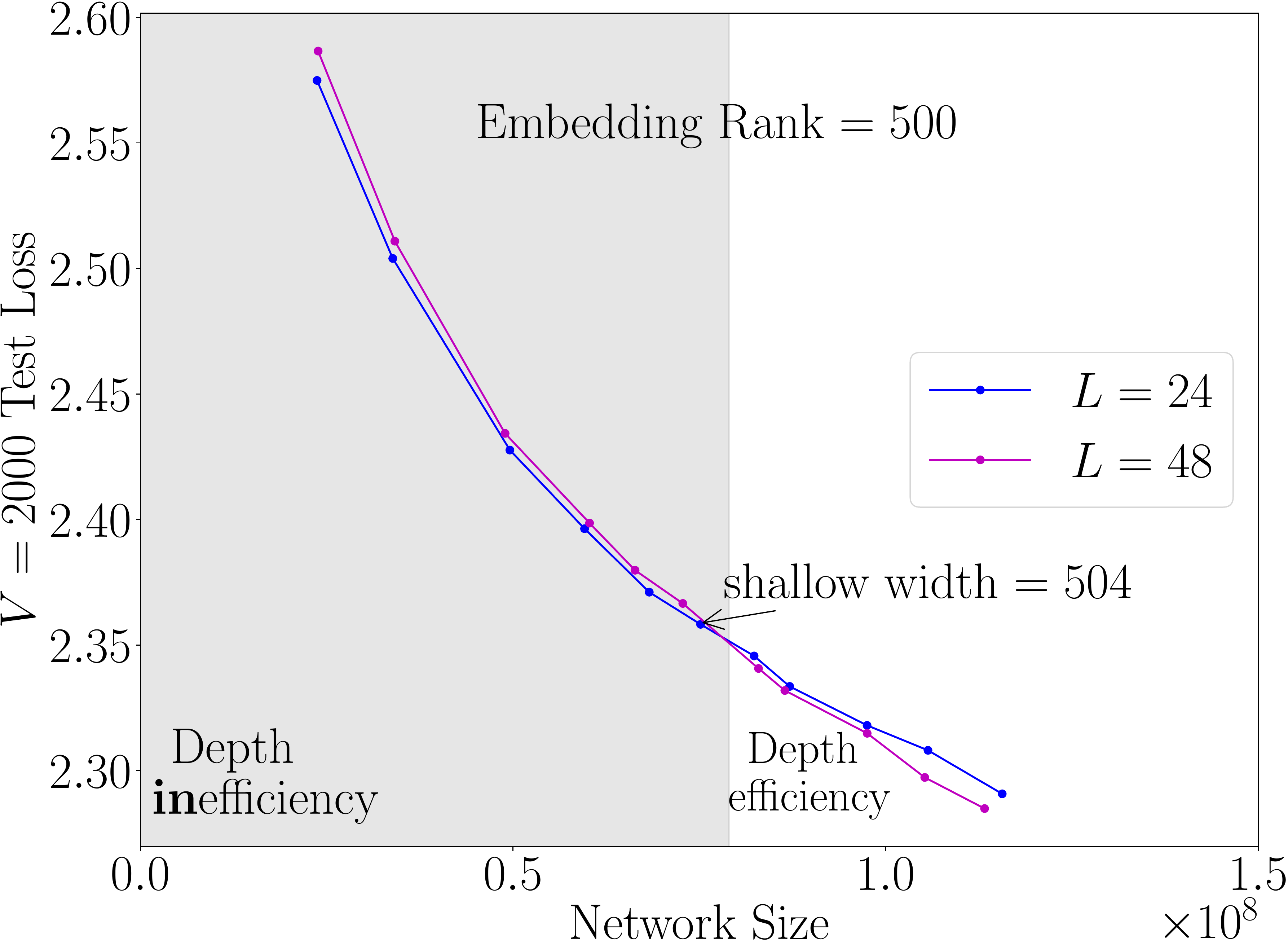}}
		\caption{An experiment with the same $V=2000$ vocabulary of figure~\ref{fig:text_vocab_257_vs_2000}(c), when the rank of the input embedding matrix is restricted to $500$.
			Deeper networks gain an advantage over shallower ones due to the rank bottleneck.
		}
		\label{fig:depth_24_vs_48_albert_low_rank}
		\ifdefined\SQUEEZE \vspace{-0mm} \fi
	\end{center}
	\vskip -0.2in
	\ifdefined\SQUEEZE \vspace{-0mm} \fi
\end{figure}

\tocless\subsubsection{Low input rank $\mathbf{r<d_x}$}

Notably, since the input sequence length $N$ is fixed,  using a smaller vocabulary implies that the model sees less text. 
Therefore, the above differences in the depth-to-width trade-offs could be attributed to the variation in training data rather than to the effect of the rank bottleneck. Figure~\ref{fig:depth_24_vs_48_albert_low_rank} establishes the rank bottleneck as the influencing factor by comparing $L^{\textrm{shallow}}=24,L^{\textrm{deep}}=48$ networks with the same vocabulary size of $2000$, while limiting the embedding rank to $500$. Similarly to the vocabulary varying experiments in  figure~\ref{fig:text_vocab_257_vs_2000}, the deeper network surpassed the shallower one when the latter's width reached its rank. 

Interestingly, in this case the transition is very close to the rank of $500$, at $d_x^{\textrm{shallow}}=504$, while in the $V=500$ setting of figure~\ref{fig:text_vocab_257_vs_2000}b the transition occurs earlier, around $d_x^{\textrm{shallow}}=408$. We conjecture that when the vocabulary itself is small, not all tokens are equally utilized, whereas for a low rank vocabulary matrix with a larger vocabulary, a better utilization of the rank is achieved. Either way, we see that the advantage of depth is indeed an implication of the embedding rank bottleneck.

\ifdefined\SQUEEZE \vspace{-2mm} \fi
\tocless\subsection{Width bottlenecks the attention dimension \label{sec:experiments:t5_bottleneck}}
\ifdefined\SQUEEZE \vspace{-1mm} \fi
In this section, we demonstrate the effect of the mid-architecture bottleneck identified in section~\ref{sec:The_Low_Rank_Bottleneck:implications:t5},
and show that performance is degraded when the internal attention representation dimension exceeds network width, \ie, when $H\cdot d_a>d_x$.

We trained networks of sizes ranging between $30$M and $70$M, while varying the bottleneck ratio of $\nicefrac{H\cdot d_a}{d_x}$ within $\{1,2,4,8,16\}$, where $1$ is the baseline value. For each bottleneck ratio and network size, we chose the best depth in $\left\{12,18,24\right\}$. We provide the results of all the depths per bottleneck ratio in the appendix, showing that the performance difference between values of the bottleneck ratio is much larger than the variation between different depths per bottleneck ratio.

Figure~\ref{fig:t5_expansion} shows that by fixing $d_a$ and increasing $H$, performance is indeed degraded for  $d_x< H\cdot d_a$. Importantly, the degradation is monotone with the $\nicefrac{H\cdot d_a}{d_x}$ bottleneck ratio. 
The second largest T5 model of~\citet{raffel2019exploring} has distributed its $3$B parameters by using width of $d_x=1$K and a bottleneck ratio of $\nicefrac{H\cdot d_a}{d_x}=4$, corresponding to the yellow line in figure~\ref{fig:t5_expansion}. The figure shows that the performance of a network with this ratio can be achieved by a baseline network with $\sim75\%$ of the parameters (green). 
The largest T5 model has a width of $d_x=1$K and a bottleneck ratio $\nicefrac{H\cdot d_a}{d_x}=16$, corresponding to the red line in figure~\ref{fig:t5_expansion}. 
The performance of a network with this ratio can be achieved by a baseline network with $\sim55\%$ of the parameters, implying that T5-11B could have been trained with $\sim6$B parameters with no degradation. 

It is noteworthy that T5.1.1 XL and T5.1.1 XXL~\cite{craffel2021t511, xue-etal-2021-mt5,xue2021byt5}, more recent, modified, versions of T5-3B and T5-11B, have increased widths of 2K and 4K, respectively, with corresponding number of heads such that $\nicefrac{H\cdot d_a}{d_x}=1$. 
Our theoretical analysis and its empirical corroboration in figure~\ref{fig:t5_expansion} highlight the importance of this architectural aspect.

\begin{figure}[t]
	\vskip 0.2in
	\begin{center}
		\centerline{\includegraphics[width=\columnwidth]{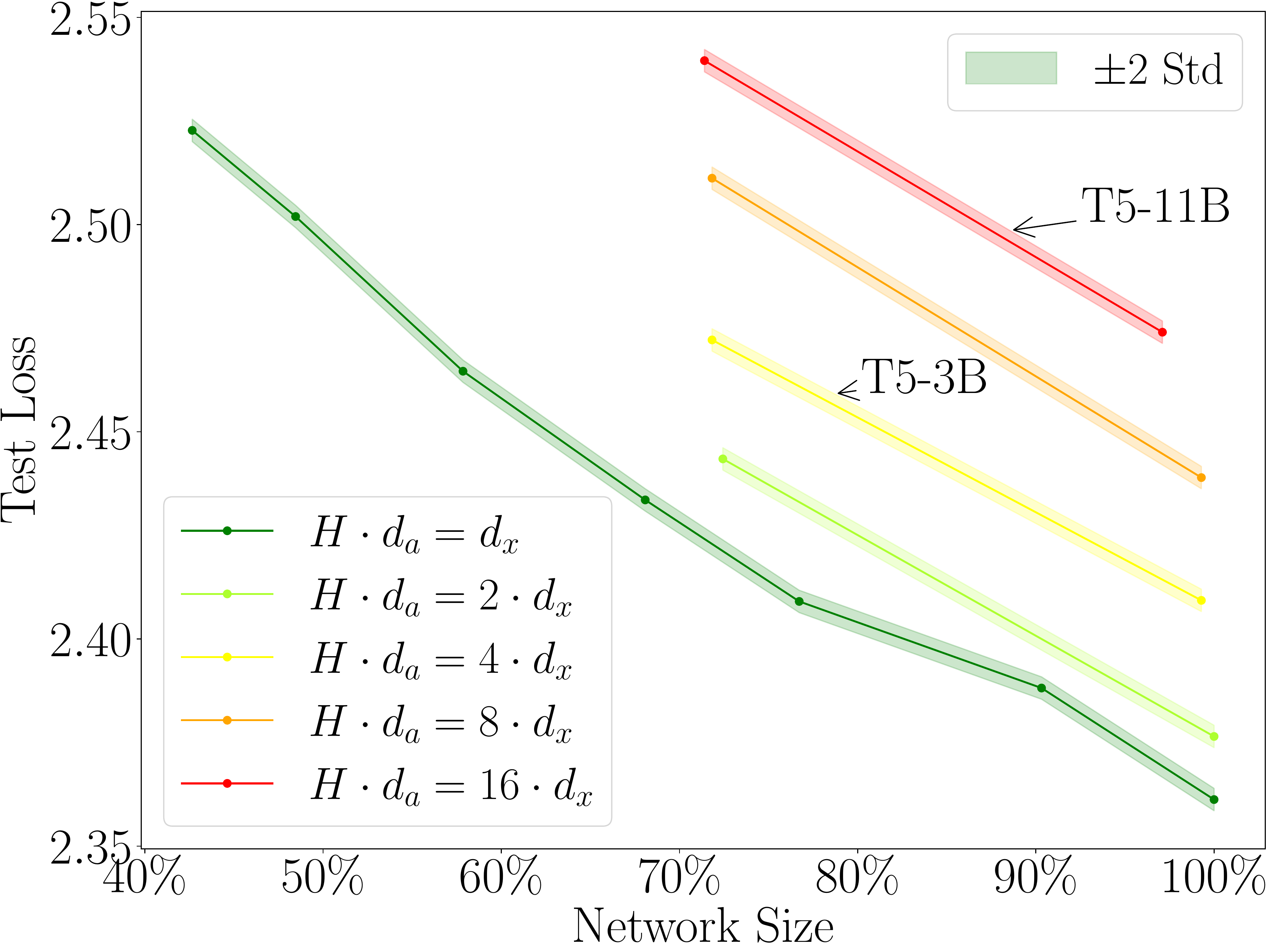}}
		\caption{The $d_x< H\cdot d_a$ bottleneck of the T5 architecture degrades performance.  As the bottleneck ratio increases, smaller baseline architectures outperform variants that invest parameters in the internal attention representation. T5-11B was trained with $H\cdot d_a = 16d_x$, implying a $\sim45\%$ parameter redundancy.}
		\label{fig:t5_expansion}
		\ifdefined\SQUEEZE \vspace{-2mm} \fi
	\end{center}
	\vskip -0.2in
	\ifdefined\SQUEEZE \vspace{-3mm} \fi
\end{figure}

\ifdefined\SQUEEZE \vspace{-3mm} \fi
\tocless\section{Discussion \label{sec:discussion}}
\ifdefined\SQUEEZE \vspace{-1mm} \fi
After observing a variation in the optimal depth-to-width ratios of Transformers when applied to different domains, previous works have concluded that this architectural design aspect depends on the input data modality.
Our theoretical framework, reinforced by supporting experiments, indicates the variation in common vocabulary sizes across domains as the root of the observed depth-to-width variability. 
Specifically, we prove and validate empirically that when the vocabulary size or rank is smaller than the network width, the contribution of depth to expressivity is boosted and the optimal depth-to-width ratio per Transformer architecture size increases.

Our results provide practical domain independent guidelines for Transformer architecture design.
If possible, it is good to increase the input embedding rank such that it surpasses the network width. 
For example, the largest Vision Transformer, ViT-Huge~\cite{dosovitskiy2021an}, has width of $1280$ but an input rank of $768$, which constitute a bottleneck. This can easily be alleviated if noticed, for example via an additional convolutional layer over the inputs (see eq.~\eqref{eq:conv_r}).
Alternatively, the network depth can be increased at the expense of its width, as previous works propose (without noting the input rank origins).

However, it is important to note the vocabulary independent depth-to-width Transformer expressivity considerations given in~\cite{levine2020limits}: for a given parameter budget, a network can be too deep.
More generally, from either expressivity, optimization, or even engineering considerations, a large width is beneficial (\eg, allows for a more effective parallelization). 
Therefore, methods for increasing the vocabulary rank, such as an elaborate convolutional embedding, or alternatively, different coding methods, should be considered for training a large width Transformer that does not suffer from the identified vocabulary bottleneck.

Regarding impact on NLP (still the major consumer of Transformers),  
while the vocabulary size is commonly larger than network width in this field, and does not therefore constitute a bottleneck, \citet{levine2020limits} predict that scaling up to $1$-Trillion parameter networks and beyond, would require massive widening of $d_x=30$K and more. This reaches standard vocabulary sizes, so the vocabulary bottleneck should be noted. 
Moreover, \citet{xue2021byt5} concurrently demonstrate that a byte-level vocabulary can work well in Transformer-based LMs. 
As we show, when proposing architectural modifications or vocabulary size reductions, our theoretically established bottlenecks should be considered, as they translate into practically manifested redundancies (see figures~\ref{fig:low_rank_bottleneck} and~\ref{fig:t5_expansion}). Overall, our work aims to provide timely theoretical interpretations, to help guide the rapid empirical advances of our field.

\section*{Acknowledgments}
This research was supported by the ERC (European Research Council) and the ISF (Israel Science Foundation). Experiments were performed with Cloud TPUs and supported by Google's TPU Research Cloud (TRC). Experiments results management done using Neptune and supported by Neptune.ai's research program.
Yoav Levine was supported by the Israel Academy of Sciences Adams fellowship.
	\bibliography{main}
	\bibliographystyle{icml2021}
	
	\onecolumn
	\appendix
	\tableofcontents
		\section{Upper bounds on the separation rank}
In the following section, we show how an upper bound on the separation rank is implied by the rank of embedding. 
\subsection{Preliminaries}
We will use the notation of $\multiset{n}{k}$ -- the multiset coefficient, given in the binomial form by
$\binom{n+k-1}{k}$. We will use the identity $\abs{\left\{a_1\ldots a_n\in\mathbb{Z}\geq0:\sum_{r=1}^{n}a_{r}=k\right\} } = \multiset{n}{k}$.
In addition, we will use the following two lemmas from \cite{levine2020limits} regarding the composition of $L$ self-attention layers, and inequality of arithmetic and geometric multiset coefficient means.
\begin{lemma}\label{lemma:self_attention_compositions}
	Defining $C\left(L\right)\coloneqq\frac{3^{L}-1}{2}$, any depth $L$
	composition of self-attention layers defined in eq.~\ifdefined\NOBODY 5\else\ref{eq:our_layer}\fi ~of the main text can be written as:
	\begin{align}\label{eq:gl_explicit_form}
y^{i,L,d_{x},H}\left(y^{0,1},...,y^{0,N}\right)=\sum_{j_{1},\dots,j_{C\left(L\right)}=1}^{N}\sum_{h\in\left[H\right]^{\left[C\left(L\right)\right]}}\sum_{r_{1},\dots,r_{C\left(L\right)+1}=1}^{d_{a}}B_{r_{1},p}^{\left(0,h\right)}\left(\prod_{c=1}^{C\left(L\right)+1}\left\langle A_{r_{c}}^{\left(c,h\right)},\y^{0,j_{c}}\right\rangle \right)\left(\prod_{c=1}^{C\left(L\right)}\left\langle B_{r_{c+1}}^{\left(c,h\right)},\y^{0,j_{c}}\right\rangle \right)
	\end{align}	
	Where $\forall h\in\left[H\right]^{\left[C\left(L\right)\right]}\,1\leq c\leq C\left(L\right)+1\quad A^{\left(c,h\right)},B^{\left(c,h\right)}\in\R^{d_{a}\times d_{x}}$ and for convenient $j_{C\left(L\right)+1}\coloneqq i$.
\end{lemma}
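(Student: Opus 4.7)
The plan is to prove this lemma by induction on the depth $L$, unrolling one self-attention layer at a time and showing that the explicit polynomial form is preserved with $C(L)$ replaced by $C(L+1) = 3C(L)+1$. For the base case $L=1$, I would expand a single self-attention layer directly from the definition, writing the unnormalized attention score $\langle W^{\mathrm{Q},0,h}\y^{0,i}, W^{\mathrm{K},0,h}\y^{0,j}\rangle$ as $\sum_{a}\langle W^{\mathrm{Q},0,h}_a,\y^{0,i}\rangle\langle W^{\mathrm{K},0,h}_a,\y^{0,j}\rangle$ and unpacking the projection through $W^{\mathrm{O},0,h}W^{\mathrm{V},0,h}$ component-wise. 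This produces a sum over $(h,j,a,b)$ of a product of three inner products with $\y^{0,i}$ or $\y^{0,j}$, weighted by $(W^{\mathrm{O},0,h})_{p,b}$. Matching terms to the Lemma's form with $C(1) = 1$ amounts to identifying the weight families $(W^{\mathrm{O},0,h})_{p,\cdot}$, $W^{\mathrm{V},0,h}_{\cdot}$, $W^{\mathrm{Q},0,h}_{\cdot}$, $W^{\mathrm{K},0,h}_{\cdot}$ with $B^{(0,h)}_{\cdot,p}$, $A^{(1,h)}$, $A^{(2,h)}$, $B^{(1,h)}$, respectively, and letting $j_1$ index the summed position while $j_2 = i$.

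For the inductive step, I would expand the $(L+1)$-th self-attention layer the same way, obtaining a sum over $(h,j,a,b)$ of a product of three inner products $\langle W^{\mathrm{Q},L,h}_a, \y^{L,i}\rangle$, $\langle W^{\mathrm{K},L,h}_a, \y^{L,j}\rangle$, $\langle W^{\mathrm{V},L,h}_b, \y^{L,j}\rangle$, weighted by $(W^{\mathrm{O},L,h})_{p,b}$. Each such inner product is a linear combination of components of $\y^{L,\cdot}$ and hence, by the inductive hypothesis, a sum of products of $2C(L)+1$ factor inner products over $\y^{0,\cdot}$. Multiplying the three sub-expansions yields $3(2C(L)+1) = 2C(L+1)+1$ factor inner products, the correct factor count at depth $L+1$. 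The three tuples of $C(L)$ summed positions from the sub-expansions, together with the outer summed index $j$, combine into a single tuple of $3C(L)+1 = C(L+1)$ summed positions $(j_1,\ldots,j_{C(L+1)})$, while the fixed endpoint $i$ of the sub-expansion corresponding to $\y^{L,i}$ becomes $j_{C(L+1)+1}$.

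The main obstacle will be the careful index bookkeeping that assembles the new $A^{(c,h)}$ and $B^{(c,h)}$ matrices from the old ones. Specifically, I would need to repack the three separate attention-coordinate tuples and the outer indices $(a,b)$ into the single tuple $(r_1,\ldots,r_{C(L+1)+1})$; absorb $(W^{\mathrm{O},L,h})_{p,b}$ together with the $B^{(0,\cdot)}$ scalar of one sub-expansion into the new $B^{(0,h)}_{r_1,p}$; and contract the outer vectors $W^{\mathrm{Q},L,h}_a$, $W^{\mathrm{K},L,h}_a$, $W^{\mathrm{V},L,h}_b$ against the first row of the appropriate sub-expansion's coefficient to produce one of the new $A^{(c,h)}$ or $B^{(c,h)}$ matrices. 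The pairing structure of the Lemma --- one A-factor and one B-factor share each summed position $j_c$ --- determines which factor from the three sub-expansions occupies each slot at depth $L+1$, and the outer head index concatenated with the three head sub-sequences fills $[H]^{[C(L+1)]}$. Once this indexing is nailed down, the remaining verification is routine algebra.
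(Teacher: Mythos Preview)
The paper does not actually prove this lemma: it is quoted verbatim as one of two lemmas taken from \cite{levine2020limits}, and the appendix simply states the result and moves on. So there is no in-paper proof to compare against.

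Your inductive argument on $L$ is the natural route and is essentially how such a statement is established. The arithmetic you record is correct: $C(L+1)=3C(L)+1$, the factor count $2C(L)+1$ triples to $2C(L+1)+1$, the summed-position count goes from $C(L)$ to $3C(L)+1$, and the head tuples concatenate from $[H]^{C(L)}$ (three copies) plus the outer head into $[H]^{C(L+1)}$. Your base-case identifications $B^{(0,h)}_{\cdot,p}\leftrightarrow (W^{\mathrm O,0,h})_{p,\cdot}$, $A^{(1,h)}\leftrightarrow W^{\mathrm V,0,h}$, $A^{(2,h)}\leftrightarrow W^{\mathrm Q,0,h}$, $B^{(1,h)}\leftrightarrow W^{\mathrm K,0,h}$ are the right ones once you note that $r_1$ plays the role of the Value/Output index and $r_2$ the Query/Key index. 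One point worth making explicit in the inductive step: the outer summed position $j$ ends up appearing as the ``fixed endpoint'' $j_{C(L)+1}$ of \emph{both} the $K$ and $V$ sub-expansions, so at depth $L+1$ it carries two inner-product factors --- you must designate one as the new $A$-factor and the other as the new $B$-factor at that slot, and check that the shared-$r$ pattern ($A^{(c)}_{r_c}$, $B^{(c)}_{r_{c+1}}$) can be threaded through the three blocks consistently. You flagged exactly this bookkeeping as the main obstacle; it is tedious but goes through.
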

\begin{lemma}
	\label{lemma:means_inequality}	
	Let $n,k\in\mathbb{N}$ and $\phi:\mathbb{N}^{k}\rightarrow\mathbb{N}\coloneqq r_{1},\dots r_{k}\vdash\prod_{j=1}^{k}\left(\binom{n}{r_{j}}\right)$
	then:
	\[
	\forall r_{z},\dots r_{k}\in\mathbb{N}\quad\phi\left(r_{1},\dots r_{k}\right)\leq\frac{\left(\prod_{t=1}^{n-1}\left(\frac{M}{k}+t\right)\right)^{k}}{\left(\left(n-1\right)!\right)^{k}}
	\]
	where $M\coloneqq\sum_{j=1}^{k}r_{j}$
\end{lemma}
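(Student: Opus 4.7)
The plan is to recognize the statement as a Jensen-type inequality hidden behind the multiset-coefficient notation. Using the identity $\multiset{n}{r} = \binom{n+r-1}{r} = \frac{1}{(n-1)!}\prod_{t=1}^{n-1}(r+t)$, extended to real values of $r$ via its polynomial form, the right-hand side of the claim is exactly $\multiset{n}{M/k}^{k}$. Since the paper's $\left(\binom{n}{r_j}\right)$ denotes $\multiset{n}{r_j}$, the inequality to prove is
\[
\prod_{j=1}^{k}\multiset{n}{r_j} \;\leq\; \multiset{n}{M/k}^{\,k},
\]
i.e.\ the geometric mean of $\{\multiset{n}{r_j}\}_{j=1}^k$ is bounded above by the multiset coefficient evaluated at the arithmetic mean $\bar r = M/k$ of the $r_j$.

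The only analytic ingredient is the concavity of $f(r) := \log \multiset{n}{r} = \sum_{t=1}^{n-1}\log(r+t) - \log\bigl((n-1)!\bigr)$ on $(-1,\infty)$. This is a one-line computation: each summand $\log(r+t)$ is strictly concave, so
\[
f''(r) \;=\; -\sum_{t=1}^{n-1}\frac{1}{(r+t)^2} \;<\; 0,
\]
and the sum of concave functions is concave. Note that since $r_j \in \mathbb{N}$ we have $r_j \geq 0 > -1$, so all points lie in the domain of concavity, and $M/k \geq 0$ does too.

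With concavity in hand, Jensen's inequality with uniform weights $1/k$ at the points $r_1,\ldots,r_k$ gives
\[
\frac{1}{k}\sum_{j=1}^{k} f(r_j) \;\leq\; f\!\left(\frac{1}{k}\sum_{j=1}^{k} r_j\right) \;=\; f(M/k).
\]
Multiplying by $k$ and exponentiating recovers the product form, which after substituting the explicit expression $f(r) = \log\bigl(\prod_{t=1}^{n-1}(r+t)\bigr) - \log((n-1)!)$ is exactly the inequality in the statement.

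The proof is elementary and I do not foresee a real obstacle. The only mild subtlety is notational: one must interpret $\multiset{n}{M/k}$ at the possibly non-integer argument $M/k$ via the polynomial formula $\prod_{t=1}^{n-1}(r+t)/(n-1)!$, which is precisely how the RHS is written in the statement, so this is automatic.
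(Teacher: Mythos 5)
Your proof is correct. One preliminary remark: the paper does not actually prove this lemma itself — it imports it verbatim from \citet{levine2020limits} ("inequality of arithmetic and geometric multiset coefficient means"), so there is no in-text argument to compare against line by line. Your self-contained argument is sound: the identity $\multiset{n}{r}=\binom{n+r-1}{r}=\frac{1}{(n-1)!}\prod_{t=1}^{n-1}(r+t)$ is right, $f(r)=\sum_{t=1}^{n-1}\log(r+t)-\log((n-1)!)$ is concave on $(-1,\infty)$, and Jensen with uniform weights followed by exponentiation gives exactly the claimed bound, with the right-hand side read as the polynomial extension of $\multiset{n}{M/k}$ to non-integer arguments — which is precisely how the statement writes it. Two small points: for $n=1$ the product over $t$ is empty, so $f$ is constant rather than strictly concave, but the inequality is then the trivial $1\le 1$ and weak concavity suffices for Jensen, so nothing breaks; and your argument is equivalent to an even more elementary route that avoids calculus entirely, namely applying AM--GM factor by factor, $\prod_{j=1}^{k}(r_j+t)\le\left(\frac{M}{k}+t\right)^{k}$ for each fixed $t\in[n-1]$, and multiplying over $t$ — which is presumably the spirit of the cited proof. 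Either way the conclusion and the handling of the non-integer argument $M/k$ are handled correctly, so there is no gap.
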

Finally, we will use the following lemma to upper bound the multiset coefficient:
\begin{lemma}\label{lemma:multiset_upper_bound}
	$\ensuremath{\left(\binom{n}{k}\right)\leq\left(\frac{2e\left(n+k\right)}{n}\right)}^{n}$
\end{lemma}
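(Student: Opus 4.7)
The plan is to reduce to the standard binomial inequality $\binom{m}{\ell} \leq \left(\frac{em}{\ell}\right)^{\ell}$, itself a direct consequence of Stirling's bound $\ell! \geq \left(\frac{\ell}{e}\right)^{\ell}$, and then perform an elementary massage. Starting from the identity $\multiset{n}{k} = \binom{n+k-1}{n-1}$ already noted in the preliminaries, I would first use the monotonicity $\binom{n+k-1}{n-1} \leq \binom{n+k}{n}$, which is immediate from Pascal's rule $\binom{n+k}{n} = \binom{n+k-1}{n} + \binom{n+k-1}{n-1}$ together with the nonnegativity of the dropped summand (trivially an equality in the corner case $k=0$).

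Second, I would apply $\binom{m}{\ell} \leq \left(\frac{em}{\ell}\right)^{\ell}$ with $m = n+k$ and $\ell = n$ to obtain $\binom{n+k}{n} \leq \left(\frac{e(n+k)}{n}\right)^{n}$. Chaining with the previous step and using the trivial $\frac{e(n+k)}{n} \leq \frac{2e(n+k)}{n}$, I get $\multiset{n}{k} \leq \left(\frac{2e(n+k)}{n}\right)^{n}$, which is the desired bound. The factor of $2$ in the target is therefore slack; an alternative route that uses it up would apply the binomial bound directly with $\ell = n-1$ to obtain $\multiset{n}{k} \leq \left(\frac{e(n+k-1)}{n-1}\right)^{n-1}$, verify $\frac{e(n+k-1)}{n-1} \leq \frac{2e(n+k)}{n}$ via the polynomial identity $n(n-1) + k(n-2) \geq 0$ (valid for $n \geq 2$), and then bump the exponent from $n-1$ up to $n$, which is safe since the base is at least $2e > 1$.

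I do not anticipate a substantive obstacle, as the lemma is a routine manipulation of a well-known binomial bound. The only items that require care are the edge cases $n = 1$ (where $\multiset{1}{k} = 1$ and the right-hand side is at least $2e$, so the inequality is trivial) and $k = 0$ (where both sides reduce to checking $1 \leq (2e)^{n}$), together with the proper statement or inline derivation of $\binom{m}{\ell} \leq \left(\frac{em}{\ell}\right)^{\ell}$ from Stirling's inequality.
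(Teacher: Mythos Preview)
Your proposal is correct and uses essentially the same approach as the paper: both invoke the identity $\multiset{n}{k}=\binom{n+k-1}{n-1}$ and the standard bound $\binom{m}{\ell}\le(em/\ell)^{\ell}$. The paper's one-line proof applies the binomial bound directly with $\ell=n-1$ (your ``alternative route''), whereas your main route first bumps $\binom{n+k-1}{n-1}\le\binom{n+k}{n}$ and then applies the bound with $\ell=n$, which makes the factor~$2$ slack rather than needed; either way the argument is the same routine manipulation.
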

\begin{proof}
	: by using the inequality $\binom{n}{k}\leq\left(\frac{en}{k}\right)^{k}$ we have$$\left(\binom{n}{k}\right)=\binom{n+k-1}{n-1}\leq\left(\frac{2e\left(n+k\right)}{n}\right)^{n}$$
\end{proof}

\subsection{Vocabulary based embedding}\label{sec:vocab_uuper_bound}
In the following theorem, we show how an upper bound on the separation rank is implied by the rank of vocabulary matrix.
\begin{theorem}\label{theorem:vocab_uuper_bound}
	Let $y^{i,L, d_x,H,r}_p$ be the scalar function computing the $p^{\textrm{th}}$ entry of an output vector at position $i\in[N]$ of the $H$-headed depth-$L$ width-$d_x$ Transformer network defined in eqs.~\ifdefined\NOBODY 1\else\ref{eq:vocab}\fi ~and ~\ifdefined\NOBODY 5\else\ref{eq:our_layer}\fi ~of the main text, where the embedding rank $r$ is defined by eq.~\ifdefined\NOBODY 3\else\ref{eq:vocab_r}\fi ~of the main text.
	Let $r_{e}$ denote the rank of the positional embedding matrix and $sep\left(y^{i,L, d_x,H,r}_p\right)$ denote its separation rank \wrt ~any partition $P\cupdot Q=\left[N\right]$. 
	Then the following holds:
	\begin{equation}\label{eq:vocab_uuper_bound}
		sep(y^{i,L, d_x,H, r}_p)\le\multiset{r+r_e}{\cdot 3^L}\multiset{4}{3^L}\left(3^L+1\right)^{r+r_e}
	\end{equation}
\end{theorem}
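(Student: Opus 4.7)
The plan is to combine Lemma~\ref{lemma:self_attention_compositions} with the low-rank factorizations of $M_V$ and the positional embedding matrix in order to reduce the effective number of scalar features per position from $d_x$ down to $r+r_e$. I would start by writing $M_V=UQ$ with $U\in\R^{d_x\times r}$ and factoring the positional embedding matrix $\pe$ as $P_1P_2^\top$ with $P_2\in\R^{d_x\times r_e}$, so that $\y^{0,j}=W\xi^j$ for $W:=[U\mid P_2]\in\R^{d_x\times(r+r_e)}$ and $\xi^j\in\R^{r+r_e}$, whose first $r$ coordinates carry the vocabulary-dependent data (functions of $w^j$) and whose last $r_e$ coordinates are position-dependent constants. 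Every inner product $\langle A,\y^{0,j_c}\rangle$ in the Lemma~\ref{lemma:self_attention_compositions} expansion becomes a linear form in the $r+r_e$ scalar features $\xi^{j_c}_k$, making $y^{i,L,d_x,H,r}_p$ a polynomial of total degree $2C(L)+1=3^L$ in those features.

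The key simplification is to push the summation over positions $j_c$ inside each product. For each $c\in[C(L)]$, the two factors of Lemma~\ref{lemma:self_attention_compositions} involving $\y^{0,j_c}$ combine under $\sum_{j_c=1}^N$ into a linear combination of the $N$-independent symmetric aggregates
\begin{equation*}
T_{k,l}\;:=\;\sum_{j=1}^N \xi^j_k\,\xi^j_l,\qquad(k,l)\in[r+r_e]^2,
\end{equation*}
while the remaining factor indexed by $C(L)+1$ stays a linear form in $\xi^i$. After rearrangement, $y^{i,L,d_x,H,r}_p$ is a sum over heads $h$, ranks $r$, and direction indices $\bigl(k_0,(k_c,l_c)_{c=1}^{C(L)}\bigr)$ of scalar multiples of $\xi^i_{k_0}\prod_{c=1}^{C(L)} T_{k_c,l_c}$.

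To extract a separable decomposition with respect to the partition $P\cupdot Q=[N]$, I would split each aggregate as $T_{k,l}=T^P_{k,l}+T^Q_{k,l}$ with $T^X_{k,l}:=\sum_{j\in X}\xi^j_k\xi^j_l$, and expand $\prod_c(T^P_{k_c,l_c}+T^Q_{k_c,l_c})=\sum_{S\subseteq[C(L)]}\prod_{c\in S}T^P_{k_c,l_c}\prod_{c\notin S}T^Q_{k_c,l_c}$. Each resulting term factors as a polynomial in $\{T^P_{k,l},\xi^i_k\}$ (assuming WLOG $i\in P$) times a polynomial in $\{T^Q_{k,l}\}$, so it is manifestly separable; the separation rank is therefore at most the number of distinct $P$-side monomial types that arise.

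The remaining work is combinatorial. Each $P$-side monomial is parameterized by $k_0\in[r+r_e]$ together with a multiset of at most $C(L)$ unordered pairs from $[r+r_e]^2$ recording the chosen $T^P_{k_c,l_c}$ factors. To reach the stated three-factor bound I would organize the count by the per-direction total multiplicity $m_k$ of each $k\in[r+r_e]$ in the monomial, so that $\multiset{r+r_e}{3^L}$ captures the main direction-multiplicity profile (bounded via Lemma~\ref{lemma:multiset_upper_bound}), $(3^L+1)^{r+r_e}$ accounts for the per-direction freedom of splitting $m_k$ between $P$- and $Q$-sides up to total degree $3^L$, and $\multiset{4}{3^L}$ absorbs lower-order overhead such as the split of degree between the linear $\xi^i$ factor and the paired $T^P$ factors (and the $S$-choice). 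The main obstacle will be precisely this bookkeeping: a naive count of monomials in the $\binom{r+r_e+1}{2}$ symmetric aggregates $T^P_{k,l}$ yields an exponent of roughly $(r+r_e)^2$, so one must apply Lemma~\ref{lemma:means_inequality} to bound products of per-direction multiset coefficients by a single expression in their total, thereby collapsing the naive pair-direction choices down to the claimed $r+r_e$ exponent.
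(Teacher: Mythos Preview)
Your setup is right up to the point where you introduce the quadratic aggregates $T_{k,l}=\sum_j\xi^j_k\xi^j_l$. The gap is in the count that follows. Once you commit to the aggregates, the $P$-side factors you must enumerate are products $\prod_{(k,l)\in M_P}T^P_{k,l}$ indexed by multisets $M_P$ of unordered pairs from $[r+r_e]$; there are $\multiset{\binom{r+r_e+1}{2}}{\le C(L)}$ such multisets, which carries an exponent of order $(r+r_e)^2$, not $r+r_e$. Your proposed collapse to the per-direction multiplicity profile $(m_k)_k$ does not go through: two multisets with the same profile, for instance $\{(1,2),(1,2)\}$ versus $\{(1,1),(2,2)\}$, produce genuinely different $P$-side polynomials $(T^P_{1,2})^2$ and $T^P_{1,1}T^P_{2,2}$, so they cannot be merged into a single separable term. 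Lemma~\ref{lemma:means_inequality} bounds products of multiset coefficients indexed by single directions; it cannot erase the pair structure that the aggregates have baked in.

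The paper avoids this by never forming quadratic aggregates. It expands each of the $2C(L)+1=3^L$ linear factors $\langle\,\cdot\,,\y^{0,j_c}\rangle$ into a \emph{single} coordinate index in $[r]$ or $[r_e]$ (after first deciding, via subsets $I_A\subseteq[C(L)+1]$ and $I_B\subseteq[C(L)]$, whether that factor picks the token or the positional summand of $\y^{0,j_c}$), and then groups only by the \emph{unary} histogram $(n_1,\dots,n_r,p_1,\dots,p_{r_e})$ of coordinate occurrences. That histogram ranges over $\multiset{r+r_e}{3^L}$ values, which is the first factor. The split between $P$ and $Q$ is then done one coordinate at a time, $n_\alpha=n_{\alpha,P}+n_{\alpha,Q}$ and $p_\sigma=p_{\sigma,P}+p_{\sigma,Q}$, contributing $\prod_\alpha\multiset{2}{n_\alpha}\prod_\sigma\multiset{2}{p_\sigma}$; it is exactly here that Lemma~\ref{lemma:means_inequality} applies to give the $(3^L+1)^{r+r_e}$ factor. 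The remaining $\multiset{4}{3^L}$ counts the coarse four-way split of total degree into token-$P$, token-$Q$, positional-$P$, positional-$Q$. To reach the stated bound you should drop the $T_{k,l}$ aggregation and track per-coordinate powers in this way; the sum over positions $j_c$ is pushed into the definition of the $P$- and $Q$-side functions $\chi_P,\chi_Q$ rather than performed before the $P/Q$ split.
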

\begin{proof}
	By the embedding low-rank assumptions, there exists  $M^{\text{vocab}}\in\R^{r\times V},M^{\text{pos}}\in\R^{r_e\times N}$ and $M^{\text{low-rank}}\in\R^{d_x\times r},P^{\text{low-rank}}\in\R^{d_x\times r_e}$ such that
	\begin{align}\label{eq:low_rank_vocab}
		\y^{0,i}=M^{\text{low-rank}}M^{\text{vocab}} ~\hat{\mathbf{w}}^i+P^{\text{low-rank}}M^{\text{pos}}_i
	\end{align}
	So we begin by substituting $\y^{0,i}$ in eq~\ref{eq:gl_explicit_form} (for convenience, we denote $j_{C\left(L\right)+1}\coloneqq i$):
	\begin{align*}
y^{i,L,d_{x},H,r}\left(w^{0},...,w^{N}\right)&=\sum_{j_{1},\dots,j_{C\left(L\right)}=1}^{N}\sum_{h\in\left[H\right]^{\left[C\left(L\right)\right]}}\sum_{r_{1},\dots,r_{C\left(L\right)+1}=1}^{d_{a}}B_{r_{1},p}^{\left(0,h\right)}\left(\prod_{c=1}^{C\left(L\right)+1}\left\langle A_{r_{c}}^{\left(c,h\right)},M^{\text{low-rank}}M^{\text{vocab}}~\hat{\mathbf{w}}^{j_{c}}+P^{\text{low-rank}}M_{j_{c}}^{\text{pos}}\right\rangle \right)\\&~~~~~~~~~~~~~~~~~~~~~~~~~~~~~~~~~~~~~~~~~~~~~~~~~~~~~~~~~~~~~~~~~~~~~~~~~~~~~\left(~\prod_{c=1}^{C\left(L\right)}~~\left\langle B_{r_{c+1}}^{\left(c,h\right)},M^{\text{low-rank}}M^{\text{vocab}}~\hat{\mathbf{w}}^{j_{c}}+P^{\text{low-rank}}M_{j_{c}}^{\text{pos}}\right\rangle \right)
	\end{align*}
	And separating between the tokens and the positional embeddings:
	\begin{align*}
=\underbrace{\sum_{\substack{\substack{I_{A}\subseteq\left[C\left(L\right)+1\right]\\
				\substack{I_{B}\subseteq\left[C\left(L\right)\right]}
			}
		}
}}_{\text{the indices of tokens}}\sum_{j_{1},\dots,j_{C\left(L\right)}=1}^{N}\sum_{h\in\left[H\right]^{\left[C\left(L\right)\right]}}\sum_{r_{1},\dots,r_{C\left(L\right)+1}=1}^{d_{a}}\underbrace{\left(\prod_{c\in\left[C\left(L\right)+1\right]\setminus I_{A}}\left\langle A_{r_{c}}^{\left(c,h\right)},P^{\text{low-rank}}M_{j_{c}}^{\text{pos}}\right\rangle \right)\left(\prod_{c\in\left[C\left(L\right)\right]\setminus I_{B}}\left\langle B_{r_{c+1}}^{\left(c,h\right)},P^{\text{low-rank}}M_{j_{c}}^{\text{pos}}\right\rangle \right)}_{\text{The positional embeddings}}\\B_{r_{1},p}^{\left(0,h\right)}\underbrace{\left(\prod_{c\in I_{A}}\left\langle A_{r_{c}}^{\left(c,h\right)},M^{\text{low-rank}}M^{\text{vocab}}~\hat{\mathbf{w}}^{j_{c}}\right\rangle \right)\left(\prod_{c\in I_{B}}\left\langle B_{r_{c+1}}^{\left(c,h\right)},M^{\text{low-rank}}M^{\text{vocab}}~\hat{\mathbf{w}}^{j_{c}}\right\rangle \right)}_{\text{The tokens}}
	\end{align*}
	Now we can open the inner products, explicitly writing the indices:
	\begin{align*}
=\sum_{\substack{\substack{I_{A}\subseteq\left[C\left(L\right)+1\right]\\
			\substack{I_{B}\subseteq\left[C\left(L\right)\right]}
		}
	}
}\sum_{\substack{\alpha_{1},\dots,\alpha_{C\left(L\right)+1}\\
		\beta_{1},\dots,\beta_{C\left(L\right)}
	}
	=1}^{r}\sum_{\substack{\sigma_{1},\dots,\sigma_{C\left(L\right)+1}\\
		\mu_{1},\dots,\mu_{C\left(L\right)}
	}
	=1}^{r_{e}}\sum_{j_{1},\dots,j_{C\left(L\right)}=1}^{N}\sum_{h\in\left[H\right]^{\left[C\left(L\right)\right]}}\sum_{r_{1},\dots,r_{C\left(L\right)+1}=1}^{d_{a}}B_{r_{1},p}^{\left(0,h\right)}\\\left(\sum_{\substack{\gamma_{1},\dots,\gamma_{C\left(L\right)+1}\\
		\delta_{1},\dots,\delta_{C\left(L\right)}
	}
	=1}^{d_{x}}\left(\prod_{c\in\left[C\left(L\right)+1\right]\setminus I_{A}}A_{r_{c},\gamma_{c}}^{\left(c,h\right)}P_{\gamma_{c},\sigma_{c}}^{\text{low-rank}}M_{\sigma_{c},j_{c}}^{\text{pos}}\right)\left(\prod_{c\in\left[C\left(L\right)\right]\setminus I_{B}}B_{r_{c+1},\delta_{c}}^{\left(c,h\right)}P_{\delta_{c},\mu_{c}}^{\text{low-rank}}M_{\mu_{c},j_{c}}^{\text{pos}}\right)\right)\\\left(\sum_{\substack{\gamma_{1},\dots,\gamma_{C\left(L\right)+1}\\
		\delta_{1},\dots,\delta_{C\left(L\right)}
	}
	=1}^{d_{x}}\left(\prod_{c\in I_{A}}A_{r_{c},\gamma_{c}}^{\left(c,h\right)}M_{\gamma_{c},\alpha_{c}}^{\text{low-rank}}M_{\alpha_{c},w^{j_{c}}}^{\text{vocab}}\right)\left(\prod_{c\in I_{B}}B_{r_{c+1},\delta_{c}}^{\left(c,h\right)}M_{\delta_{c},\beta_{c}}^{\text{low-rank}}M_{\beta_{c},w^{j_{c}}}^{\text{vocab}}\right)\right)
	\end{align*}
	And separating between coefficients and $~w$’s:
	\begin{align*}
=\sum_{\substack{\substack{I_{A}\subseteq\left[C\left(L\right)+1\right]\\
			\substack{I_{B}\subseteq\left[C\left(L\right)\right]}
		}
	}
}\sum_{\substack{\alpha_{1},\dots,\alpha_{C\left(L\right)+1}\\
		\beta_{1},\dots,\beta_{C\left(L\right)}
	}
	=1}^{r}\sum_{\substack{\sigma_{1},\dots,\sigma_{C\left(L\right)+1}\\
		\mu_{1},\dots,\mu_{C\left(L\right)}
	}
	=1}^{r_{e}}\sum_{j_{1},\dots,j_{C\left(L\right)}=1}^{N}\tau_{\substack{I_{A},I_{B}}
	,\alpha_{1},\dots,\mu_{C\left(L\right)}}\left(\prod_{c\in\left[C\left(L\right)+1\right]\setminus I_{A}}M_{\sigma_{c},j_{c}}^{\text{pos}}\right)\left(\prod_{c\in\left[C\left(L\right)\right]\setminus I_{B}}M_{\mu_{c},j_{c}}^{\text{pos}}\right)\\\left(\prod_{c\in I_{A}}M_{\alpha_{c},w^{j_{c}}}^{\text{vocab}}\right)\left(\prod_{c\in I_{B}}M_{\beta_{c},w^{j_{c}}}^{\text{vocab}}\right)
	\end{align*}
	Where the coefficients are equals to:
	\begin{align*}
\tau_{\substack{I_{A},I_{B}}
	,\alpha_{1},\dots,\mu_{C\left(L\right)}}\coloneqq\sum_{h\in\left[H\right]^{\left[C\left(L\right)\right]}}\sum_{r_{1},\dots,r_{C\left(L\right)+1}=1}^{d_{a}}B_{r_{1},p}^{\left(0,h\right)}\left[\sum_{\substack{\gamma_{1},\dots,\gamma_{C\left(L\right)+1}\\
		\delta_{1},\dots,\delta_{C\left(L\right)}
	}
	=1}^{d_{x}}\left(\prod_{c\in I_{A}}A_{r_{c},\gamma_{c}}^{\left(c,h\right)}M_{\gamma_{c},\alpha_{c}}^{\text{low-rank}}\right)\left(\prod_{c\in I_{B}}B_{r_{c+1},\delta_{c}}^{\left(c,h\right)}M_{\delta_{c},\beta_{c}}^{\text{low-rank}}\right)\right]\\\left[\sum_{\substack{\gamma_{1},\dots,\gamma_{C\left(L\right)+1}\\
		\delta_{1},\dots,\delta_{C\left(L\right)}
	}
	=1}^{d_{x}}\left(\prod_{c\in\left[C\left(L\right)+1\right]\setminus I_{A}}A_{r_{c},\gamma_{c}}^{\left(c,h\right)}P_{\gamma_{c},\sigma_{c}}^{\text{low-rank}}\right)\left(\prod_{c\in\left[C\left(L\right)\right]\setminus I_{B}}B_{r_{c+1},\delta_{c}}^{\left(c,h\right)}P_{\delta_{c},\mu_{c}}^{\text{low-rank}}\right)\right]
	\end{align*}
	Now we can group monomials by the powers $n_{1},\dots,n_{r},p_{1},\dots,p_{r_{e}}$ of each coordinate: 
	\begin{align*}
=\underbrace{\sum_{N_{A\bigtriangleup B}=0}^{C\left(L\right)+1}\sum_{N_{A\cap B}=0}^{C\left(L\right)}}_{\text{\ensuremath{\substack{\text{How many }j_{c}\text{ indices}\\
				\text{are token indices}
			}
}}}\underbrace{\sum_{\substack{n_{1}+\dots+n_{r}=N_{A\bigtriangleup B}+2N_{A\cap B}\\
			p_{1}+\dots+p_{r_{e}}=2C\left(L\right)+1-N_{A\bigtriangleup B}-2N_{A\cap B}
		}
}}_{\text{The powers}}\underbrace{\sum_{\substack{m_{1}+\dots+m_{N}=N_{A\bigtriangleup B}+2N_{A\cap B}\\
			z_{1}+\dots+z_{N}=2C\left(L\right)+1-N_{A\bigtriangleup B}-2N_{A\cap B}\\
			\forall j\in\left[N\right]\,m_{j}+z_{j}\equiv\begin{dcases}
				\scriptstyle
				1\,\text{mod}\,2 & j=i\\
				\scriptstyle
				0\,\text{mod}\,2 & j\neq i
			\end{dcases}
		}
}}_{\substack{\text{\text{How many }\text{indices }}\\
		\text{are equal to each }j\in\left[N\right]
	}
}\underbrace{\sum_{\substack{0\leq n_{1,1},\dots,n_{r,N}\leq N_{A\bigtriangleup B}+2N_{A\cap B}\\
			\forall\alpha\in\left[r\right]\,\sum_{j=1}^{N}n_{\alpha,j}=n_{\alpha}\\
			\forall j\in\left[N\right]\,\sum_{\alpha=1}^{r}n_{\alpha,j}=m_{j}
		}
}}_{\text{How to distribute the token powers between }\left[N\right]}\\\underbrace{\sum_{\substack{0\leq p_{1,1},\dots,p_{r_{e},N}\leq2C\left(L\right)+1-N_{A\bigtriangleup B}-2N_{A\cap B}\\
			\forall\sigma\in\left[r_{e}\right]\,\sum_{j=1}^{N}p_{\sigma,j}=p_{\sigma}\\
			\forall j\in\left[N\right]\,\sum_{\sigma=1}^{r_{e}}p_{\sigma,j}=z_{j}
		}
}}_{\text{How to distribute the pos powers between }\left[N\right]}\lambda_{N_{A\bigtriangleup B}N_{A\cap B},n_{1},\dots,n_{r},p_{1},\dots,p_{r_{e}}}\left(\prod_{j=1}^{N}\prod_{\sigma=1}^{r_{e}}\left(M_{\sigma,j}^{\text{pos}}\right)^{p_{\sigma,j}}\right)\left(\prod_{j=1}^{N}\prod_{\alpha=1}^{r}\left(M_{\alpha,w^{j}}^{\text{vocab}}\right)^{n_{\alpha,j}}\right)
	\end{align*}
	Where
	\begin{align*}
\lambda_{N_{A\bigtriangleup B}N_{A\cap B},n_{1},\dots,n_{r},p_{1},\dots,p_{r_{e}}}\coloneqq\sum_{\substack{\substack{I_{A}\subseteq\left[C\left(L\right)+1\right]\\
			\substack{I_{B}\subseteq\left[C\left(L\right)\right]}
			\\
			\left|I_{A}\bigtriangleup I_{B}\right|=N_{A\bigtriangleup B}\\
			\left|I_{A}\cap I_{B}\right|=N_{A\cap B}
		}
	}
}\sum_{\substack{\substack{\alpha_{1},\dots,\alpha_{C\left(L\right)+1}\\
			\beta_{1},\dots,\beta_{C\left(L\right)}
		}
		=1\\
		\forall\delta\in\left[r\right]\,\left|\left\{ c\in I_{A}\left|\alpha_{c}=\delta\right.\right\} \right|+\left|\left\{ c\in I_{B}\left|\beta_{c}=\delta\right.\right\} \right|=n_{\delta}
	}
}^{r}\\\sum_{\substack{\substack{\sigma_{1},\dots,\sigma_{C\left(L\right)+1}\\
			\mu_{1},\dots,\mu_{C\left(L\right)}
		}
		=1\\
		\forall\delta\in\left[r_{e}\right]\,\left|\left\{ c\in\left[C\left(L\right)+1\right]\setminus I_{A}\left|\sigma_{c}=\delta\right.\right\} \right|+\left|\left\{ c\in\left[C\left(L\right)\right]\setminus I_{B}\left|\mu_{c}=\delta\right.\right\} \right|=p_{\delta}
	}
}^{r_{e}}\tau_{\substack{I_{A},I_{B}}
	,\alpha_{1},\dots,\mu_{C\left(L\right)}}
	\end{align*}
	Now we can divide the powers between $P,Q$ in the following way:
	\begin{align*}
&=\sum_{N_{A\bigtriangleup B}=0}^{C\left(L\right)+1}\sum_{N_{A\cap B}=0}^{C\left(L\right)}\sum_{\substack{n_{1}+\dots+n_{r}=N_{A\bigtriangleup B}+2N_{A\cap B}\\
		p_{1}+\dots+p_{r_{e}}=2C\left(L\right)+1-N_{A\bigtriangleup B}-2N_{A\cap B}
	}
}\underbrace{\sum_{\substack{m_{P}+m_{Q}=N_{A\bigtriangleup B}+2N_{A\cap B}\\
			z_{P}+z_{Q}=2C\left(L\right)+1-N_{A\bigtriangleup B}-2N_{A\cap B}\\
			\forall j\in\left\{ P,Q\right\} \quad m_{j}+z_{j}\equiv\begin{cases}
				\scriptstyle
				1\,\text{mod 2} & i\in j\\
				\scriptstyle
				0\,\text{mod 2} & i\notin j
			\end{cases}
		}
}}_{\substack{\text{\text{How many }indices }\\
		\text{are in }P\text{ and in }Q
	}
}\underbrace{\sum_{\substack{0\leq n_{1,P},\dots,n_{r,Q},p_{1,P},\dots,p_{r,Q}\leq2C\left(L\right)+1\\
			\forall\alpha\in\left[r\right]\,n_{\alpha,P}+n_{\alpha,Q}=n_{\alpha}\\
			\forall j\in\left\{ P,Q\right\} \,\sum_{\alpha=1}^{r}n_{\alpha,j}=m_{j}\wedge\sum_{\sigma=1}^{r_{e}}p_{\sigma,j}=z_{j}\\
			\forall\sigma\in\left[r_{e}\right]\,p_{\sigma,P}+p_{\sigma,Q}=p_{\sigma}
		}
}}_{\substack{\text{How to distribute the powers between }P\text{ and in }Q}
}\\&\lambda_{N_{A\bigtriangleup B}N_{A\cap B},n_{1},\dots,n_{r},p_{1},\dots,p_{r_{e}}}\chi_{P}\chi_{Q}
	\end{align*}
	Where $\chi_{P},\chi_{Q}$ are functions of $P,Q$ that defined as:
	\begin{align*}
\chi_{T}\coloneqq\sum_{\substack{\left(m_{j}\right)_{j\in T}\in\left[m_{T}\right]\cup\left\{ 0\right\} \\
		\left(z_{j}\right)_{j\in T}\in\left[z_{T}\right]\cup\left\{ 0\right\} \\
		\sum_{j\in T}m_{j}=m_{T}\wedge\sum_{j\in T}z_{j}=z_{T}\\
		\forall j\in T\,m_{j}+z_{j}\equiv\begin{cases}
			\scriptstyle
			1\,\text{mod}\,2 & i=j\\
			\scriptstyle
			0\,\text{mod}\,2 & i\neq j
		\end{cases}
	}
}\sum_{\substack{\left(n_{\alpha,j}\right)_{\alpha,j\in T\times\left[r\right]},\left(p_{\alpha,j}\right)_{\alpha,j\in T\times\left[r_{e}\right]}\in\left[2C\left(L\right)+1\right]\cup\left\{ 0\right\} \\
		\forall\alpha\in\left[r\right]\,\sum_{j\in T}n_{\alpha,j}=n_{\alpha,T}\\
		\forall\alpha\in\left[r_{e}\right]\,\sum_{j\in T}p_{\alpha,j}=p_{\alpha,T}\\
		\forall j\in T\,\sum_{\alpha=1}^{r}n_{\alpha,j}=m_{j,T}\wedge\sum_{\alpha=1}^{r_{e}}p_{\alpha,j}=z_{j,T}
	}
}\prod_{j\in T}\left(\prod_{\alpha=1}^{r}\left(M_{\alpha,w^{j}}^{\text{vocab}}\right)^{n_{m,j}}\right)\left(\prod_{\sigma=1}^{r_{e}}\left(M_{\sigma,j}^{\text{pos}}\right)^{p_{\sigma,j}}\right)
	\end{align*}
	Thus, since each summand is of separation rank $1$ , the separation rank of $y^{i,L, d_x,H, r}_p$ is bounded by the
	number of summands:
	\begin{align*}
\sum_{N_{A\bigtriangleup B}=0}^{C\left(L\right)+1}\sum_{N_{A\cap B}=0}^{C\left(L\right)}\sum_{\substack{n_{1}+\dots+n_{r}=N_{A\bigtriangleup B}+2N_{A\cap B}\\
		p_{1}+\dots+p_{r_{e}}=2C\left(L\right)+1-N_{A\bigtriangleup B}-2N_{A\cap B}
	}
}\underbrace{\sum_{\substack{m_{P}+m_{Q}=N_{A\bigtriangleup B}+2N_{A\cap B}\\
			z_{P}+z_{Q}=2C\left(L\right)+1-N_{A\bigtriangleup B}-2N_{A\cap B}\\
			\forall j\in\left\{ P,Q\right\} \quad m_{j}+z_{j}\equiv\begin{cases}
				{\scriptstyle 1\,\text{mod 2}} & i\in j\\
				{\scriptstyle 0\,\text{mod 2}} & i\notin j
			\end{cases}
		}
}}_{\substack{\text{\text{How many }indices }\\
		\text{are in }P\text{ and in }Q
	}
}\underbrace{\left(\prod_{\alpha=1}^{r}\multiset{2}{n_{\alpha}}\right)\left(\prod_{\sigma=1}^{r_{e}}\multiset{2}{p_{\sigma}}\right)}_{\text{How to distribute the powers between }P\text{ and in }Q}\\\le\underbrace{\multiset{r+r_{e}}{2C\left(L\right)+1}}_{\substack{\text{ways to divide the powers }\\
		\text{between the coordinates}
	}
}\underbrace{\multiset{4}{2C\left(L\right)+1}}_{\substack{\text{ways to divide the indices }\\
		\text{between }P\text{ and in }Q
	}
}\underbrace{\left(\frac{2C\left(L\right)+1}{r}+1\right)^{r}\left(\frac{2C\left(L\right)+1}{r_{e}}+1\right)^{r_{e}}}_{\text{How to distribute the powers between }P\text{ and in }Q}	
	\end{align*}
	where the inequality followed from lemma~\ref{lemma:means_inequality}.
\end{proof}
From here, the upper bound in theorem~\ifdefined\NOBODY 1\else\ref{theorem:embedding_rank_bottleneck_upper_bound}\fi ~of the main text for vocabulary based embedding follows by lemma~\ref{lemma:multiset_upper_bound} with an additional assumption that $r_e=1$.  This assumption is reasonable since successful models such as T5 \cite{raffel2019exploring} use rank $1$ positional embeddings. %
Moreover, in order to verify the validly of this assumption for our setting in practice,  in subsection~\ref{sec:low_rank_positional_exps} we show that the degradation in performance of models with a low rank positional embedding matrix is  much smaller than the degradation caused by the analyzed bottleneck effects.
\subsection{Convolution based embedding}\label{sec:conv_uuper_bound}
In the following theorem, we show how an upper bound on the separation rank is implied by the rank of convolution based embedding. The proof uses similar techniques to the ones used in the previous subsection with some modifications due to the first convolutional layer.
\begin{theorem}\label{theorem:conv_uuper_bound}
	Let $y^{i,L, d_x,H,r}_p$ be the scalar function computing the $p^{\textrm{th}}$ entry of an output vector at position $i\in[N]$ of the $H$-headed depth-$L$ width-$d_x$ Transformer network defined in eq.~\ifdefined\NOBODY 2\else\ref{eq:conv}\fi ~and ~\ifdefined\NOBODY 5\else\ref{eq:our_layer}\fi ~of the main text, where the embedding rank $r$ is defined by eq.~\ifdefined\NOBODY 4\else\ref{eq:conv_r}\fi ~of the main text.
	Let $r_{e}$ denote the rank of the positional embedding matrix and $sep\left(y^{i,L, d_x,H,r}_p\right)$ denote its separation rank \wrt ~any partition $P\cupdot Q=\left[M\right]$ that does not split any patch. 
	Then the following holds:
	\begin{equation}\label{eq:conv_uuper_bound}
		sep(y^{i,L, d_x,H, r}_p)\le\multiset{r+r_e}{\cdot 3^L}\multiset{4}{3^L}\left(3^L+1\right)^{r+r_e}
	\end{equation}
\end{theorem}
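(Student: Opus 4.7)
The plan is to adapt the proof of Theorem~\ref{theorem:vocab_uuper_bound} almost verbatim, with the flattened patch vector $\hat{\x}^i\in\R^V$ playing the structural role of the one-hot indicator $\hat{\mathbf{w}}^i$ from the vocabulary case. Using the rank hypothesis on $\tilde{W}^{\textrm{conv}}$, I factor $\tilde{W}^{\textrm{conv}}=M^{\textrm{low-rank}}M^{\textrm{conv}}$ with $M^{\textrm{low-rank}}\in\R^{d_x\times r}$, $M^{\textrm{conv}}\in\R^{r\times V}$, and analogously $\p^i=P^{\textrm{low-rank}}M^{\textrm{pos}}_i$ with $M^{\textrm{pos}}\in\R^{r_e\times N}$. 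The embedding then takes the form $\y^{0,i}=M^{\textrm{low-rank}}M^{\textrm{conv}}\hat{\x}^i+P^{\textrm{low-rank}}M^{\textrm{pos}}_i$, which is the exact structural analogue of eq.~\ref{eq:low_rank_vocab}.

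I then substitute this expression into the closed form of Lemma~\ref{lemma:self_attention_compositions}, and mimic step by step the bookkeeping of the vocabulary proof: split each inner product $\langle A^{(c,h)}_{r_c},\y^{0,j_c}\rangle$ into a content contribution involving $M^{\textrm{conv}}\hat{\x}^{j_c}$ and a positional contribution involving $M^{\textrm{pos}}_{j_c}$; introduce summation indices $\alpha_c\in[r]$ and $\sigma_c\in[r_e]$ when opening the matrix products; and absorb every input-independent matrix ($A$, $B$, $M^{\textrm{low-rank}}$, $P^{\textrm{low-rank}}$) into scalar prefactors $\tau$. The output is then expressed as a sum over monomials of the form $\prod_{j=1}^N\prod_{\alpha=1}^r\bigl((M^{\textrm{conv}}\hat{\x}^j)_\alpha\bigr)^{n_{\alpha,j}}\prod_{j=1}^N\prod_{\sigma=1}^{r_e}\bigl(M^{\textrm{pos}}_{\sigma,j}\bigr)^{p_{\sigma,j}}$, in direct analogy to the vocabulary-case monomials.

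The crucial step is the observation that the hypothesis on $(P,Q)$ -- that it does not split any patch -- guarantees that for each $j\in[N]$ the entire vector $\hat{\x}^j$ lies on a single side of the partition of $[M]$. Consequently each factor $(M^{\textrm{conv}}\hat{\x}^j)_\alpha$ depends on inputs from only one side of $(P,Q)$, and each monomial of the above form is multiplicatively separable \wrt~$(P,Q)$, hence has separation rank~$1$. Thus $sep(y^{i,L,d_x,H,r}_p)$ is bounded by the total number of distinct monomials in the expansion, and this count proceeds identically to the vocabulary case: sum over the subsets $I_A\subseteq[C(L)+1]$, $I_B\subseteq[C(L)]$ distinguishing content slots from positional slots; over the admissible powers $n_1,\dots,n_r,p_1,\dots,p_{r_e}$; and over their splits between the $P$- and $Q$-sides. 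Applying Lemma~\ref{lemma:means_inequality} to the innermost distribution factor and Lemma~\ref{lemma:multiset_upper_bound} to the outer multiset coefficients yields exactly $\multiset{r+r_e}{3^L}\multiset{4}{3^L}(3^L+1)^{r+r_e}$.

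The main obstacle I anticipate is justifying that the no-patch-split hypothesis really suffices to carry the vocabulary bookkeeping through without loss. In the vocabulary proof the discreteness of $w^j$ made $M^{\textrm{vocab}}_{\alpha,w^j}$ a function of only one input location, so separability across the $[N]$-partition was automatic. Here, each $(M^{\textrm{conv}}\hat{\x}^j)_\alpha$ is a \emph{linear combination of $V$ real input entries}, and the separability argument is only valid because the no-split hypothesis confines all of those entries to one side of $(P,Q)$. Once this is verified, every other combinatorial step -- the separation between $I_A$ and $I_B$, the grouping by powers, the split between $P$ and $Q$, and the means-inequality bounds -- transfers unchanged and produces the same bound as in the vocabulary case.
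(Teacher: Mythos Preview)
Your approach is correct, and in fact streamlines the paper's own argument. The paper does \emph{not} flatten the patch into a single vector $\hat{\x}^i\in\R^V$; instead it keeps the within-patch index $k\in[\nicefrac{M}{N}]$ explicit throughout, writing $\y^{0,i}=\sum_{k}M^{\text{low-rank}}M_k^{\text{conv}}\x^{(M/N)(i-1)+k}+P^{\text{low-rank}}M^{\text{pos}}_i$ with $M_k^{\text{conv}}\in\R^{r\times d_{\text{input}}}$. As a consequence the paper must carry extra summation indices $\kappa_c,\eta_c\in[\nicefrac{M}{N}]$ when opening the inner products, track per-position-within-patch powers $m_{j,k},z_{j,k}$ and $n_{\alpha,j,k},p_{\sigma,j,k}$, and then insert a multiplicative correction factor $\Gamma$ to undo the resulting overcounting (since the positional piece $M^{\text{pos}}_{\sigma,j}$ is actually independent of $k$). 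All of this machinery collapses under your flattening: once $\hat{\x}^j$ is treated as a single $V$-dimensional object and the no-patch-split hypothesis places it entirely on one side of $(P,Q)$, the scalar $(M^{\text{conv}}\hat{\x}^j)_\alpha$ behaves exactly like $M^{\text{vocab}}_{\alpha,w^j}$ in the vocabulary proof, and the bookkeeping becomes literally identical to Theorem~\ref{theorem:vocab_uuper_bound}. Your route is shorter; the paper's route makes the patch structure visible at every step but at the cost of extra indices that are ultimately summed away to the same bound.

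One small inaccuracy: the stated bound $\multiset{r+r_e}{3^L}\multiset{4}{3^L}(3^L+1)^{r+r_e}$ is obtained from Lemma~\ref{lemma:means_inequality} alone (using $2C(L)+1=3^L$ and the trivial estimate $(\nicefrac{3^L}{r}+1)^r\le(3^L+1)^r$). Lemma~\ref{lemma:multiset_upper_bound} is only invoked afterwards to pass to the $\tilde{O}$ form of Theorem~\ref{theorem:embedding_rank_bottleneck_upper_bound}, not for Theorem~\ref{theorem:conv_uuper_bound} itself.
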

\begin{proof}
	By the embedding low-rank assumptions, there exists  $M^{\text{conv }}\in\mathbb{R}^{\frac{M}{N}\times r\times d_{\text{input}}}, M^{\text{low-rank }}\in\mathbb{R}^{d_{x}\times r}, M^{\text{pos}}\in\mathbb{R}^{N\times r_{e}}$ and $P^{\text{low-rank}}\in\mathbb{R}^{d_{x}\times r_{e}}$ such that:
	\begin{align}\label{eq:low_rank_conv}
		\mathbf{y}^{0,i}=\sum_{k=1}^{\frac{M}{N}}M^{\text{low-rank }}M_{k}^{\text{conv }}\mathbf{x}^{\frac{M}{N}\cdot(i-1)+k}+P^{\text{low-rank}}M_{i}^{\text{pos}}
	\end{align}
	We can begin by substituting $\mathbf{y}^{0,i}$ in eq~\ref{eq:gl_explicit_form} (for convenience, we denote $j_{C\left(L\right)+1}\coloneqq i$):
	\begin{align*}
\ensuremath{y_{p}^{i,L,d_{x},H,\Theta}}=\sum_{j_{1},\dots,j_{C\left(L\right)}=1}^{N}\sum_{h\in\left[H\right]^{\left[C\left(L\right)\right]}}\sum_{r_{1},\dots,r_{C\left(L\right)+1}=1}^{d_{a}}B_{r_{1},p}^{\left(0,h\right)}\left(\prod_{c=1}^{C\left(L\right)+1}\left\langle A_{r_{c}}^{\left(c,h\right)},\sum_{k=1}^{\frac{M}{N}}M^{\text{low-rank }}M_{k}^{\text{conv }}\mathbf{x}^{\frac{M}{N}\cdot(j_{c}-1)+k}+P^{\text{low-rank}}M_{j_{c}}^{\text{pos}}\right\rangle \right)\\\left(\prod_{c=1}^{C\left(L\right)}\left\langle B_{r_{c+1}}^{\left(c,h\right)},\sum_{k=1}^{\frac{M}{N}}M^{\text{low-rank }}M_{k}^{\text{conv }}\mathbf{x}^{\frac{M}{N}\cdot(j_{c}-1)+k}+P^{\text{low-rank}}M_{j_{c}}^{\text{pos}}\right\rangle \right)
	\end{align*}
	And separating between the tokens and the positional embeddings: 
	\begin{align*}
=\underbrace{\sum_{\substack{\substack{I_{A}\subseteq\left[C\left(L\right)+1\right]\\
				\substack{I_{B}\subseteq\left[C\left(L\right)\right]}
			}
		}
}}_{\text{the indices of tokens}}\sum_{j_{1},\dots,j_{C\left(L\right)}=1}^{N}\sum_{h\in\left[H\right]^{\left[C\left(L\right)\right]}}\sum_{r_{1},\dots,r_{C\left(L\right)+1}=1}^{d_{a}}\underbrace{\left(\prod_{c\in\left[C\left(L\right)+1\right]\setminus I_{A}}\left\langle A_{r_{c}}^{\left(c,h\right)},P^{\text{low-rank}}M_{j_{c}}^{\text{pos}}\right\rangle \right)\left(\prod_{c\in\left[C\left(L\right)\right]\setminus I_{B}}\left\langle B_{r_{c+1}}^{\left(c,h\right)},P^{\text{low-rank}}M_{j_{c}}^{\text{pos}}\right\rangle \right)}_{\text{The positional embeddings}}\\B_{r_{1},p}^{\left(0,h\right)}\underbrace{\left(\prod_{c\in I_{A}}\left\langle A_{r_{c}}^{\left(c,h\right)},\sum_{k=1}^{\frac{M}{N}}M^{\text{low-rank }}M_{k}^{\text{conv }}\mathbf{x}^{\frac{M}{N}\cdot(j_{c}-1)+k}\right\rangle \right)\left(\prod_{c\in I_{B}}\left\langle B_{r_{c+1}}^{\left(c,h\right)},\sum_{k=1}^{\frac{M}{N}}M^{\text{low-rank }}M_{k}^{\text{conv }}\mathbf{x}^{\frac{M}{N}\cdot(j_{c}-1)+k}\right\rangle \right)}_{\text{The tokens}}
	\end{align*}
	Now we can open the inner products, explicitly writing the indices: 
	\begin{align*}
=\sum_{\substack{\substack{I_{A}\subseteq\left[C\left(L\right)+1\right]\\
			\substack{I_{B}\subseteq\left[C\left(L\right)\right]}
		}
	}
}\sum_{\substack{\alpha_{1},\dots,\alpha_{C\left(L\right)+1}\\
		\beta_{1},\dots,\beta_{C\left(L\right)}
	}
	=1}^{r}\sum_{\substack{\kappa_{1},\ldots,\kappa_{C\left(L\right)+1}\\
		\eta_{1},\ldots,\eta_{C\left(L\right)}
	}
	=1}^{\frac{M}{N}}\sum_{\substack{\sigma_{1},\dots,\sigma_{C\left(L\right)+1}\\
		\mu_{1},\dots,\mu_{C\left(L\right)}
	}
	=1}^{r_{e}}\sum_{j_{1},\dots,j_{C\left(L\right)}=1}^{N}\sum_{h\in\left[H\right]^{\left[C\left(L\right)\right]}}\sum_{r_{1},\dots,r_{C\left(L\right)+1}=1}^{d_{a}}B_{r_{1},p}^{\left(0,h\right)}\\\left(\sum_{\substack{\gamma_{1},\dots,\gamma_{C\left(L\right)+1}\\
		\delta_{1},\dots,\delta_{C\left(L\right)}
	}
	=1}^{d_{x}}\left(\prod_{c\in\left[C\left(L\right)+1\right]\setminus I_{A}}A_{r_{c},\gamma_{c}}^{\left(c,h\right)}P_{\gamma_{c},\sigma_{c}}^{\text{low-rank}}M_{\sigma_{c},j_{c}}^{\text{pos}}\right)\left(\prod_{c\in\left[C\left(L\right)\right]\setminus I_{B}}B_{r_{c+1},\delta_{c}}^{\left(c,h\right)}P_{\delta_{c},\mu_{c}}^{\text{low-rank}}M_{\mu_{c},j_{c}}^{\text{pos}}\right)\right)\\\left(\sum_{\substack{\gamma_{1},\dots,\gamma_{C\left(L\right)+1}\\
		\delta_{1},\dots,\delta_{C\left(L\right)}
	}
	=1}^{d_{x}}\left(\prod_{c\in I_{A}}A_{r_{c},\gamma_{c}}^{\left(c,h\right)}M_{\gamma_{c},\alpha_{c}}^{\text{low-rank}}\left(M_{\kappa_{c}}^{\text{conv }}\mathbf{x}^{\frac{M}{N}\cdot(j_{c}-1)+\kappa_{c}}\right)_{\alpha_{c}}\right)\left(\prod_{c\in I_{B}}B_{r_{c+1},\delta_{c}}^{\left(c,h\right)}M_{\delta_{c},\beta_{c}}^{\text{low-rank}}\left(M_{\eta_{c}}^{\text{conv }}\mathbf{x}^{\frac{M}{N}\cdot(j_{c}-1)+\eta_{c}}\right)_{\beta_{c}}\right)\right)
	\end{align*}
	And separating between coefficients and embeddings: 
	\begin{align*}
=\sum_{\substack{\substack{I_{A}\subseteq\left[C\left(L\right)+1\right]\\
			\substack{I_{B}\subseteq\left[C\left(L\right)\right]}
		}
	}
}\sum_{\substack{\alpha_{1},\dots,\alpha_{C\left(L\right)+1}\\
		\beta_{1},\dots,\beta_{C\left(L\right)}
	}
	=1}^{r}\sum_{\substack{\kappa_{1},\ldots,\kappa_{C\left(L\right)+1}\\
		\eta_{1},\ldots,\eta_{C\left(L\right)}
	}
	=1}^{\frac{M}{N}}\sum_{\substack{\sigma_{1},\dots,\sigma_{C\left(L\right)+1}\\
		\mu_{1},\dots,\mu_{C\left(L\right)}
	}
	=1}^{r_{e}}\sum_{j_{1},\dots,j_{C\left(L\right)}=1}^{N}\tau_{\substack{I_{A},I_{B}}
	,\alpha_{1},\dots,\mu_{C\left(L\right)}}\\\left(\prod_{c\in\left[C\left(L\right)+1\right]\setminus I_{A}}M_{\sigma_{c},j_{c}}^{\text{pos}}\right)\left(\prod_{c\in\left[C\left(L\right)\right]\setminus I_{B}}M_{\mu_{c},j_{c}}^{\text{pos}}\right)\left(\prod_{c\in I_{A}}\left(M_{\kappa_{c}}^{\text{conv }}\mathbf{x}^{\frac{M}{N}\cdot(j_{c}-1)+\kappa_{c}}\right)_{\alpha_{c}}\right)\left(\prod_{c\in I_{B}}\left(M_{\eta_{c}}^{\text{conv }}\mathbf{x}^{\frac{M}{N}\cdot(j_{c}-1)+\eta_{c}}\right)_{\beta_{c}}\right)
	\end{align*}
	Where the coefficients are equal to: 
	\begin{align*}
\tau_{\substack{I_{A},I_{B}}
	,\alpha_{1},\dots,\mu_{C\left(L\right)}}&\coloneqq\sum_{h\in\left[H\right]^{\left[C\left(L\right)\right]}}\sum_{r_{1},\dots,r_{C\left(L\right)+1}=1}^{d_{a}}B_{r_{1},p}^{\left(0,h\right)}\left[\sum_{\substack{\gamma_{1},\dots,\gamma_{C\left(L\right)+1}\\
		\delta_{1},\dots,\delta_{C\left(L\right)}
	}
	=1}^{d_{x}}\left(\prod_{c\in I_{A}}A_{r_{c},\gamma_{c}}^{\left(c,h\right)}M_{\gamma_{c},\alpha_{c}}^{\text{low-rank}}\right)\left(\prod_{c\in I_{B}}B_{r_{c+1},\delta_{c}}^{\left(c,h\right)}M_{\delta_{c},\beta_{c}}^{\text{low-rank}}\right)\right]\\&\left[\sum_{\substack{\gamma_{1},\dots,\gamma_{C\left(L+1\right)}\\
		\delta_{1},\dots,\delta_{C\left(L\right)}
	}
	=1}^{d_{x}}\left(\prod_{c\in\left[C\left(L\right)+1\right]\setminus I_{A}}A_{r_{c},\gamma_{c}}^{\left(c,h\right)}P_{\gamma_{c},\sigma_{c}}^{\text{low-rank}}\right)\left(\prod_{c\in\left[C\left(L\right)\right]\setminus I_{B}}B_{r_{c+1},\delta_{c}}^{\left(c,h\right)}P_{\delta_{c},\mu_{c}}^{\text{low-rank}}\right)\right]\\
	\end{align*}
	Now we can group monomials by the powers $n_{1},\dots,n_{r},p_{1},\dots,p_{r_{e}}$ of each coordinate: 
	\begin{align*}
&=\underbrace{\sum_{N_{A\bigtriangleup B}=0}^{C\left(L\right)+1}\sum_{N_{A\cap B}=0}^{C\left(L\right)}}_{\text{\ensuremath{\substack{\text{How many }j_{c}\text{ indices}\\
				\text{are token indices}
			}
}}}\underbrace{\sum_{\substack{n_{1}+\dots+n_{r}=N_{A\bigtriangleup B}+2N_{A\cap B}\\
			p_{1}+\dots+p_{r_{e}}=2C\left(L\right)+1-N_{A\bigtriangleup B}-2N_{A\cap B}
		}
}}_{\text{The powers}}\underbrace{\sum_{\substack{m_{1,1}+\dots+m_{N,\frac{M}{N}}=N_{A\bigtriangleup B}+2N_{A\cap B}\\
			z_{1,1}+\dots+z_{N,\frac{M}{N}}=2C\left(L\right)+1-N_{A\bigtriangleup B}-2N_{A\cap B}\\
			\forall j\in\left[N\right]\,\sum_{k=1}^{\frac{M}{N}}\left(m_{j,k}+z_{j,k}\right)\equiv\begin{cases}
				{\scriptstyle 1\,\text{mod}\,2} & i=j\\
				{\scriptstyle 0\,\text{mod}\,2} & i\neq j
			\end{cases}
		}
}}_{\substack{\text{\text{How many indices}}\\
		\text{are equal to each }\left(j,k\right)\in\left[N\right]\times\left[\frac{M}{N}\right]
	}
}\\&\underbrace{\sum_{\substack{0\leq n_{1,1,1},\dots,n_{r,N,\frac{M}{N}}\leq N_{A\bigtriangleup B}+2N_{A\cap B}\\
			\forall\alpha\in\left[r\right]\,\sum_{j=1}^{N}\sum_{k=1}^{\frac{M}{N}}n_{\alpha,j,k}=n_{\alpha}\\
			\forall\left(j,k\right)\in\left[N\right]\times\left[\frac{M}{N}\right]\,\sum_{\alpha=1}^{r}n_{\alpha,j,k}=m_{j,k}
		}
}}_{\text{How to distribute the pixel powers between }\left[N\right]\times\left[\frac{M}{N}\right]}\underbrace{\sum_{\substack{0\leq p_{1,1,1},\dots,p_{r_{e},N,\frac{M}{N}}\leq2C\left(L\right)+1-N_{A\bigtriangleup B}-2N_{A\cap B}\\
			\forall\sigma\in\left[r_{e}\right]\,\sum_{j=1}^{N}\sum_{k=1}^{\frac{M}{N}}p_{\sigma,j,k}=p_{\sigma}\\
			\forall\left(j,k\right)\in\left[N\right]\times\left[\frac{M}{N}\right]\,\sum_{\sigma=1}^{r_{e}}p_{\sigma,j,k}=z_{j,k}
		}
}}_{\text{How to distribute the pos powers between }\left[N\right]\times\left[\frac{M}{N}\right]}\\&\Gamma_{z_{1,1},\ldots,z_{N,\frac{M}{N}},p_{1,1,1},\dots,p_{r,N,\frac{M}{N}}}\lambda_{N_{A\bigtriangleup B}N_{A\cap B},n_{1},\dots,n_{r},p_{1},\dots,p_{r_{e}}}\left(\prod_{j=1}^{N}\prod_{k=1}^{\frac{M}{N}}\prod_{\sigma=1}^{r_{e}}\left(M_{\sigma,j}^{\text{pos}}\right)^{p_{\sigma,j,k}}\right)\left(\prod_{j=1}^{N}\prod_{k=1}^{\frac{M}{N}}\prod_{\alpha=1}^{r}\left(\left(M_{k}^{\text{conv }}\mathbf{x}^{\frac{M}{N}\cdot(j-1)+k}\right)_{\alpha}\right)^{n_{\alpha,j,k}}\right)
	\end{align*}
	where 
	\begin{align*}
\lambda_{N_{A\bigtriangleup B}N_{A\cap B},n_{1},\dots,n_{r},p_{1},\dots,p_{r_{e}}}\coloneqq\sum_{\substack{\substack{I_{A}\subseteq\left[C\left(L\right)+1\right]\\
			\substack{I_{B}\subseteq\left[C\left(L\right)\right]}
			\\
			\left|I_{A}\bigtriangleup I_{B}\right|=N_{A\bigtriangleup B}\\
			\left|I_{A}\cap I_{B}\right|=N_{A\cap B}
		}
	}
}\sum_{\substack{\substack{\alpha_{1},\dots,\alpha_{C\left(L\right)+1}\\
			\beta_{1},\dots,\beta_{C\left(L\right)}
		}
		=1\\
		\forall\delta\in\left[r\right]\,\left|\left\{ c\in I_{A}\left|\alpha_{c}=\delta\right.\right\} \right|+\left|\left\{ c\in I_{B}\left|\beta_{c}=\delta\right.\right\} \right|=n_{\delta}
	}
}^{r}\;\sum_{\substack{\substack{\kappa_{1},\ldots,\kappa_{C\left(L\right)+1}\\
			\eta_{1},\ldots,\eta_{C\left(L\right)}
		}
		=1\\
		\forall\delta\in\left[\frac{M}{N}\right]\,\left|\left\{ c\in I_{A}\left|\kappa_{c}=\delta\right.\right\} \right|+\left|\left\{ c\in I_{B}\left|\eta_{c}=\delta\right.\right\} \right|=n_{\delta}
	}
}^{\frac{M}{N}}\\\sum_{\substack{\substack{\sigma_{1},\dots,\sigma_{C\left(L\right)+1}\\
			\mu_{1},\dots,\mu_{C\left(L\right)}
		}
		=1\\
		\forall\delta\in\left[r_{e}\right]\,\left|\left\{ c\in\left[C\left(L\right)+1\right]\setminus I_{A}\left|\sigma_{c}=\delta\right.\right\} \right|+\left|\left\{ c\in\left[C\left(L\right)\right]\setminus I_{B}\left|\mu_{c}=\delta\right.\right\} \right|=p_{\delta}
	}
}^{r_{e}}\tau_{\substack{I_{A},I_{B}}
	,\alpha_{1},\dots,\mu_{C\left(L\right)}}
	\end{align*}
	and
	\begin{align*}
		\Gamma_{z_{1,1},\ldots,z_{N,\frac{M}{N}},p_{1,1,1},\dots,p_{r,N,\frac{M}{N}}}:=\left(\prod_{j=1}^{N}\left[\multiset{\frac{M}{N}}{z_{j,1}+\ldots+z_{j,\frac{M}{N}}}\cdot\prod_{\sigma=1}^{r_{e}}\multiset{\frac{M}{N}}{p_{\sigma,j,1}+\ldots+p_{\sigma,j,\frac{M}{N}}}\right]\right)^{-1}
	\end{align*}		
	Note that the positional powers are actually independent of the indices in $\left[\frac{M}{N}\right]$, so $\Gamma_{z_{1,1},\ldots,z_{N,\frac{M}{N}},p_{1,1,1},\dots,p_{r,N,\frac{M}{N}}}$ is a multiplicative factor that is used in order to cancel out double counting.
	
	For convenience, we will treat $\left(P,Q\right)$ as a partition of the Cartesian product $\left[N\right]\times\left[\frac{M}{N}\right]$ (as there is a one-to-one correspondence between $\left[N\right]\times\left[\frac{M}{N}\right]$ and $\left[M\right])$. Now we can divide the powers between $P,Q$ in the following way: 
	\begin{align*}
=\sum_{N_{A\bigtriangleup B}=0}^{C\left(L\right)+1}\sum_{N_{A\cap B}=0}^{C\left(L\right)}\sum_{\substack{n_{1}+\dots+n_{r}=N_{A\bigtriangleup B}+2N_{A\cap B}\\
		p_{1}+\dots+p_{r_{e}}=2C\left(L\right)+1-N_{A\bigtriangleup B}-2N_{A\cap B}
	}
}\underbrace{\sum_{\substack{m_{P}+m_{Q}=N_{A\bigtriangleup B}+2N_{A\cap B}\\
			z_{P}+z_{Q}=2C\left(L\right)+1-N_{A\bigtriangleup B}-2N_{A\cap B}\\
			\forall j\in\left\{ P,Q\right\} \,m_{j}+z_{j}=\begin{cases}
				\scriptstyle
				1\,\text{mod}\,2 & \exists k\in\left[\frac{M}{n}\right]\,\left(i,k\right)\in j\\
				\scriptstyle
				0\,\text{mod}\,2 & \text{else}
			\end{cases}
		}
}}_{\substack{\text{\text{How many }indices }\\
		\text{are in }P\text{ and in }Q
	}
}\\\underbrace{\sum_{\substack{0\leq n_{1,P},\dots,n_{r,Q},p_{1,P},\dots,p_{r,Q}\leq2C\left(L\right)+1\\
			\forall\alpha\in\left[r\right]\,n_{\alpha,P}+n_{\alpha,Q}=n_{\alpha}\\
			\forall T\in\left\{ P,Q\right\} \,\sum_{\alpha=1}^{r}n_{\alpha,T}=m_{T}\wedge\sum_{\sigma=1}^{r_{e}}p_{\sigma,T}=z_{T}\\
			\forall\sigma\in\left[r_{e}\right]\,p_{\sigma,P}+p_{\sigma,Q}=p_{\sigma}
		}
}}_{\substack{\text{How to distribute the powers between }P\text{ and in }Q}
}\Gamma_{z_{1,1},\ldots,z_{N,\frac{M}{N}},p_{1,1,1},\dots,p_{r,N,\frac{M}{N}}}\lambda_{N_{A\bigtriangleup B}N_{A\cap B},n_{1},\dots,n_{r},p_{1},\dots,p_{r_{e}}}\chi_{P}\chi_{Q}
	\end{align*}
	Where $\chi_{P},\chi_{Q}$ are functions of $P,Q$ that defined as: 
	\begin{align*}
\chi_{T}\coloneqq\sum_{\substack{\left(m_{j,k}\right)_{\left(j,k\right)\in T}\in\left[m_{T}\right]\cup\left\{ 0\right\} \\
		\left(z_{j,k}\right)_{\left(j,k\right)\in T}\in\left[z_{T}\right]\cup\left\{ 0\right\} \\
		\sum_{\left(j,k\right)\in T}m_{j,k}=m_{T}\wedge\sum_{\left(j,k\right)\in T}z_{j,k}=z_{T}\\
		\forall j\in\left\{ j\in\left[N\right]:\exists k\in\left[\frac{M}{n}\right]\,\left(j,k\right)\in T\right\} \,\sum_{k=1}^{\frac{M}{N}}\left(m_{j,k}+z_{j,k}\right)\equiv\begin{cases}
			{\scriptstyle 1\,\text{mod}\,2} & i=j\\
			{\scriptstyle 0\,\text{mod}\,2} & i\neq j
		\end{cases}
	}
}\sum_{\substack{\left(n_{\alpha,j,k}\right)_{\left(j,k\right),\alpha\in T\times\left[r\right]},\left(p_{\alpha,j,k}\right)_{\left(j,k\right),\alpha\in T\times\left[r_{e}\right]}\in\left[2C\left(L\right)+1\right]\cup\left\{ 0\right\} \\
		\forall\alpha\in\left[r\right]\,\sum_{\left(j,k\right)\in T}n_{\alpha,j,k}=n_{\alpha,T}\\
		\forall\alpha\in\left[r_{e}\right]\,\sum_{\left(j,k\right)\in T}p_{\alpha,j,k}=p_{\alpha,T}\\
		\forall\left(j,k\right)\in T\,\sum_{\alpha=1}^{r}n_{\alpha,j,k}=m_{j,k}\wedge\sum_{\alpha=1}^{r_{e}}p_{\alpha,j,k}=z_{j,k}
	}
}\\\prod_{\left(j,k\right)\in T}\left(\prod_{\alpha=1}^{r}\left(\left(M_{k}^{\text{conv }}\mathbf{x}^{\frac{M}{N}\cdot(j-1)+k}\right)_{\alpha}\right)^{n_{\alpha,j,k}}\right)\left(\prod_{\sigma=1}^{r_{e}}\left(M_{\sigma,j}^{\text{pos}}\right)^{p_{\sigma,j,k}}\right)
	\end{align*}
	Thus, since each summand is of separation rank $1$ , the separation rank of $y_{p}^{i,L,d_{x},H,r}$ is bounded by the number of summands:
	\begin{align*}
\sum_{N_{A\bigtriangleup B}=0}^{C\left(L\right)+1}\sum_{N_{A\cap B}=0}^{C\left(L\right)}\sum_{\substack{n_{1}+\dots+n_{r}=N_{A\bigtriangleup B}+2N_{A\cap B}\\
		p_{1}+\dots+p_{r_{e}}=2C\left(L\right)+1-N_{A\bigtriangleup B}-2N_{A\cap B}
	}
}\underbrace{\sum_{\substack{m_{P}+m_{Q}=N_{A\bigtriangleup B}+2N_{A\cap B}\\
			z_{P}+z_{Q}=2C\left(L\right)+1-N_{A\bigtriangleup B}-2N_{A\cap B}\\
			\forall j\in\left\{ P,Q\right\} \,m_{j}+z_{j}=\begin{cases}
				\scriptstyle
				{ 1\,\text{mod}\,2} & \exists k\in\left[\frac{M}{n}\right]\,\left(i,k\right)\in j\\
				\scriptstyle
				{ 0\,\text{mod}\,2} & \text{else}
			\end{cases}
		}
}}_{\substack{\text{\text{How many }indices }\\
		\text{are in }P\text{ and in }Q
	}
}\underbrace{\left(\prod_{\alpha=1}^{r}\multiset{2}{n_{\alpha}}\right)\left(\prod_{\sigma=1}^{r_{e}}\multiset{2}{p_{\sigma}}\right)}_{\text{How to distribute the powers between }P\text{ and in }Q}\\\le\underbrace{\multiset{r+r_{e}}{2C\left(L\right)+1}}_{\substack{\text{ways to divide the powers }\\
		\text{between the coordinates}
	}
}\underbrace{\multiset{4}{2C\left(L\right)+1}}_{\substack{\text{ways to divide the indices }\\
		\text{between }P\text{ and in }Q
	}
}\underbrace{\left(\frac{2C\left(L\right)+1}{r}+1\right)^{r}\left(\frac{2C\left(L\right)+1}{r_{e}}+1\right)^{r_{e}}}_{\text{How to distribute the powers between }P\text{ and in }Q}
	\end{align*} 
\end{proof}	
Similarly to the vocabulary embedding case from here, the upper bound in theorem~\ifdefined\NOBODY 1\else\ref{theorem:embedding_rank_bottleneck_upper_bound}\fi ~of the main text for convolution based embedding follows by lemma~\ref{lemma:multiset_upper_bound} with an additional assumption that $r_e=1$.  
\section{Lower bounds on the separation rank}
\subsection{Preliminaries}
\subsubsection{Tensors and their matricization}
We begin by laying out basic concepts in tensor theory required for
the upcoming analysis. The core concept of a \emph{tensor} may be thought of as a
multi-dimensional array. The \emph{order} of a
tensor is defined to be the number of indexing entries in the array,
referred to as \emph{modes}. The \emph{dimension} of a tensor in a particular
mode is defined as the number of values taken by the index in that
mode. If $\A$ is a tensor of order $N$ and dimension $M_i$ in each mode
$i\in[N]$, its entries are denoted $\A_{d_1...d_N}$, where the index in each
mode takes values $d_i\in [M_i]$.

We will make use of the concept of the \emph{matricization of $\A$ \wrt~the balanced partition $(P,Q)$}, denoted $\mat{\A}_{P,Q}\in\R^{M^{\nicefrac{N}{2}}\times M^{\nicefrac{N}{2}}}$, which is essentially the arrangement of the tensor elements as a matrix whose rows correspond to $P$ and columns to $Q$.
Suppose $\A\in\R^{M{\times\cdots\times}M}$
is a tensor of order $N$, and let $(P,Q)$ be a balanced partition of $[N]$, \ie~$P$
and~$Q$ are disjoint size $\nicefrac{N}{2}$ subsets of $[N]$ whose union gives~$[N]$.
The \emph{matricization of $\A$ \wrt~the partition $(P,Q)$}, denoted
$\mat{\A}_{P,Q}$, is the $M^{{\nicefrac{N}{2}}}$-by-$M^{{\nicefrac{N}{2}}}$ matrix holding the entries of $\A$ such that $\A_{d_1{\ldots}d_N}$ is placed in row index $1+\sum_{t=1}^{{\nicefrac{N}{2}}}(d_{p_t}-1)M^{{\nicefrac{N}{2}}-t}$ and column index $1+\sum_{t=1}^{{\nicefrac{N}{2}}}(d_{q_t}-1)M^{{\nicefrac{N}{2}}-t}$.

\subsubsection{Grid tensors provide lower bounds for the separation rank} 

We now present the concept of grid tensors, which are a form of function discretization~\citep{hackbusch2012tensor}. Essentially, the function is
evaluated for a set of points on an exponentially large grid in the
input space and the outcomes are stored in a tensor. Formally, fixing a set of \emph{template} vectors
$\x^{(1)},\ldots,\x^{(Z)}$, either in $\R^{d_{\text{input}}}$ for the convolutional embedding method or in $\left[V\right]$ for the vocabulary embedding method, the points on the grid are the set
$\{(\x^{(d_1)},\ldots,\x^{(d_N)})\}_{d_1,\ldots,d_N=1}^Z$. Given a function
$y(\x^1,\ldots,\x^N)$, the set of its values on the grid arranged in the form of a tensor are
called the grid tensor induced by $y$, denoted
$\A(y)_{d_1,\ldots,d_N} \equiv y(\x^1=\x^{(d_1)},\ldots,\x^N=\x^{(d_N)})$.

Let $T$ denote the number of raw inputs to the network. In the notation of section~\ifdefined\NOBODY 2.1\else\ref{sec:TransformerArchitecture:Embedding}\fi ~of the main text, $T$ is either equal to $N$ in the case of the vocabulary input embedding or $M$ in the case of the convolution input embedding.
The following claim from \cite{levine2020limits} establishes a fundamental relation between a function's separation rank (see section~\ifdefined\NOBODY 3\else\ref{sec:separation_rank}\fi ~of the main text) and the rank of the matrix obtained by the
corresponding grid tensor matricization. This relation, which holds for all functions, is
formulated below for functions realized by the analyzed Transformer network:

\begin{claim}\label{claim:grid_sep_deep}
	Let $y^{i,L, d_x,H,r}_p$ be the scalar function computing the $p^{\textrm{th}}$ entry of an output vector at position $i\in[N]$ of the $H$-headed depth-$L$ width-$d_x$ Transformer network defined in eq.~\ifdefined\NOBODY 5\else\ref{eq:our_layer}\fi ~and either eq~\ifdefined\NOBODY 1\else\ref{eq:vocab}\fi ~or eq~\ifdefined\NOBODY 2\else\ref{eq:conv}\fi ~of the main text.
	Let $sep\left(y^{i,L, d_x,H,r}_p\right)$ denote its separation rank \wrt ~any partition $P\cupdot Q=\left[T\right]$. Then, for any integer $Z$ and any set of template vectors $\x^{(1)},\ldots,\x^{(Z)} \in \R^{d_x}$ it
	holds that:
	\begin{equation}
		\sep{P,Q}{y^{i,L, d_x,H,r}_p}\geq \rank{\mat{\A(y^{i,L, d_x,H,r}_p)}_{P,Q}},
	\end{equation}
	where $\A(y^{i,L, d_x,H,r}_p)$ is the grid tensor of $y^{i,L, d_x,H,r}_p$ with
	respect to the above template vectors.
\end{claim}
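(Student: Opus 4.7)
The plan is to propagate a minimal separation-rank decomposition through the evaluation at the grid points, and observe that separability of summands translates into rank-one outer products after matricization. Let $R = \sep{P,Q}{y^{i,L, d_x,H,r}_p}$. By definition, there exist functions $g_{1},\dots,g_{R}$ of the inputs indexed by $P$ and $g'_{1},\dots,g'_{R}$ of the inputs indexed by $Q$ such that
\[
y^{i,L, d_x,H,r}_p\left(\x^1,\ldots,\x^T\right)=\sum_{s=1}^{R}g_{s}\left(\{\x^{i}:i\in P\}\right)\,g'_{s}\left(\{\x^{i}:i\in Q\}\right).
\]

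Next, I would evaluate this identity at every grid point $(\x^{(d_1)},\dots,\x^{(d_T)})$ with $(d_1,\dots,d_T)\in [Z]^T$. Writing $d_P\coloneqq (d_t)_{t\in P}$ and $d_Q\coloneqq (d_t)_{t\in Q}$, and letting $G_s(d_P)\coloneqq g_s(\{\x^{(d_t)}\}_{t\in P})$ and $G'_s(d_Q)\coloneqq g'_s(\{\x^{(d_t)}\}_{t\in Q})$, the grid tensor satisfies entrywise
\[
\A(y^{i,L, d_x,H,r}_p)_{d_1,\dots,d_T}=\sum_{s=1}^{R}G_s(d_P)\,G'_s(d_Q).
\]
The key algebraic observation is that matricization with respect to $(P,Q)$ sends the tensor entry indexed by $(d_P,d_Q)$ to the matrix entry at row $d_P$ and column $d_Q$. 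Under this correspondence, each summand $G_s(d_P)\,G'_s(d_Q)$ becomes the outer product $\mathbf{u}_s\mathbf{v}_s^\top$, where $\mathbf{u}_s,\mathbf{v}_s\in\R^{Z^{T/2}}$ are the flattened tables of $G_s$ and $G'_s$. Hence
\[
\mat{\A(y^{i,L, d_x,H,r}_p)}_{P,Q}=\sum_{s=1}^{R}\mathbf{u}_s\mathbf{v}_s^\top,
\]
and subadditivity of rank gives $\rank{\mat{\A(y^{i,L, d_x,H,r}_p)}_{P,Q}}\leq R=\sep{P,Q}{y^{i,L, d_x,H,r}_p}$, which is the desired inequality.

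The proof requires no specific structure of the Transformer architecture -- it is a general fact about grid tensors, in the spirit of \citet{hackbusch2012tensor} -- so the only point of care is notational uniformity across the two embedding regimes: the template vectors live in $\R^{d_{\text{input}}}$ for the convolution case and in $[V]$ (represented through the one-hot vectors $\hat e_{w}$) for the vocabulary case, but the separation rank decomposition is agnostic to the input domain and the argument above goes through unchanged. The main obstacle, accordingly, is not conceptual but purely bookkeeping: one must fix a consistent row/column ordering on the index tuples $d_P$ and $d_Q$ when matricizing, so that each separable summand genuinely becomes a single outer product. Once that ordering is fixed, the claim reduces to the one-line fact that a sum of $R$ rank-one matrices has rank at most $R$.
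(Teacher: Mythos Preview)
Your argument is correct and is exactly the standard proof of this general fact: a separation-rank-$R$ decomposition, evaluated on a product grid and matricized along $(P,Q)$, yields a sum of $R$ rank-one matrices, whence the matricization rank is at most $R$. The paper does not supply its own proof of this claim; it simply imports it from \citet{levine2020limits}, so there is nothing further to compare.
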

In the next subsection we will show a corollary from \cite{levine2020limits} that uses this claim to prove the lower bound in theorem~\ifdefined\NOBODY 2\else\ref{theorem:embedding_rank_bottleneck_lower_bound}\fi ~of the main text.

\subsection{Proof of the lower bounds}\label{sec:lower_bound_proff}
In this subsection we prove the lower bound in theorem~\ifdefined\NOBODY 2\else\ref{theorem:embedding_rank_bottleneck_lower_bound}\fi ~of the main text.
We will use a direct corollary of the proof in  \citet{levine2020limits} regarding composition of the self-attention separation rank.
Essentially, though the required form of $\y^{0,j}$ in corollary below looks complex,~\citet{levine2020limits} prove that for this form of inputs to the self-attention block, the rank of the grid tensor is with probability $1$ lower bounded by the multiset term in eq~\ref{equation:sufficient_assigmnet_actual_lower_bound} below. The corollary below simply states that if the input embedding is able to produce vectors that do not change the analysis in~\cite{levine2020limits}, their bound on the grid tensor rank can be used, and together with claim~\ref{claim:grid_sep_deep} this implies a lower bound on the separation rank.    

Denote by $y^{i,L, d_x,H,r}_p$ the scalar function computing the $p^{\textrm{th}}$ entry of an output vector at position $i\in[N]$ of the $H$-headed depth-$L$ width-$d_x$ Transformer network defined in eq.~\ifdefined\NOBODY 5\else\ref{eq:our_layer}\fi ~and either eq~\ifdefined\NOBODY 1\else\ref{eq:vocab}\fi ~or eq~\ifdefined\NOBODY 2\else\ref{eq:conv}\fi ~of the main text, then:
\begin{corollary}\label{corollary:sufficient_assigmnet}
	Assume that for any matrix $A\in\R^{\multiset{\nicefrac{\left(r-H\right)}{2}}{3^{L-2}}\times \nicefrac{\left(r-H\right)}{2}}$ with rows that are $l^2$ normalized, there exists a choice of template vectors $\x^{(1)},\ldots,\x^{(Z)}$, as well as an assignment to the embedding layer weights, such that for any sequence $\left(i_{j}\right)_{j=1}^{N}\in\left[2\cdot\multiset{\nicefrac{\left(r-H\right)}{2}}{3^{L-2}}+1\right]$ there exists a sequence of $T$ $\x$'s for which the output of the embedding layer is:
	\begin{align*}
		\forall j\in\left[N\right]\quad\y_{\alpha}^{(0,j)}=\begin{cases}
			A_{i_{j},\phi(\alpha)} & i_{j}\leq\nicefrac{V}{2}\wedge(\alpha-1)\bmod d_{a}<\frac{d_{a}-1}{2}\wedge\phi(\alpha)\le\nicefrac{\left(r-H\right)}{2}\\
			A_{i_{j}-\nicefrac{V}{2},\phi\left(\alpha-\frac{d_{a}-1}{2}\right)} & \nicefrac{V}{2}<i_{j}\leq V\wedge\frac{d_{a}-1}{2}\leq(\alpha-1)\bmod d_{a}<d_{a}-1\wedge\phi(\alpha-\frac{d_{a}-1}{2})\le\nicefrac{\left(r-H\right)}{2}\\
			1 & (\alpha-1)\bmod d_{a}=d_{a}-1\\
			0 & \text{Otherwise}
		\end{cases}
	\end{align*}
	where $\phi(j) \equiv \left\lfloor \nicefrac{j - 1}{d_a} \right\rfloor \cdot (d_a - 1) + (j - 1 \bmod d_a) + 1$ and $V:=2\left(\binom{\nicefrac{\left(r-H\right)}{2}}{3^{L-2}}\right)$. 
	
	Further in the convolutional embedding case assume the partition $P\cupdot Q=\left[T\right]$ does not split any patch. Then for all values of the network weights but a set of Lebesgue measure
	zero, the following holds:
	\begin{equation}
		\label{equation:sufficient_assigmnet_actual_lower_bound}
		sep(y^{i,L, d_x,H, r}_p)\ge\multiset{\nicefrac{\left(r-H\right)}{2}}{3^{L-2}}
	\end{equation}
\end{corollary}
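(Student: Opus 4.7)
The plan is to deduce the bound from claim~\ref{claim:grid_sep_deep} combined with the self-attention grid-tensor rank analysis carried out in~\citet{levine2020limits}. Claim~\ref{claim:grid_sep_deep} tells us that, for any choice of template vectors, $sep(y^{i,L, d_x,H, r}_p)\geq\rank{\mat{\A(y^{i,L, d_x,H, r}_p)}_{P,Q}}$, so it suffices to produce one choice of templates and one embedding-weight assignment under which the matricized grid-tensor rank attains $\multiset{\nicefrac{(r-H)}{2}}{3^{L-2}}$ outside a Lebesgue-null set of self-attention weights.

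First, I would fix an auxiliary matrix $A$ with $l^2$-normalized rows --- the concrete witness that~\citet{levine2020limits} use for their lower bound --- and invoke the standing hypothesis of the corollary to produce templates $\x^{(1)},\ldots,\x^{(Z)}$ and an embedding-weight assignment such that, choosing the raw inputs according to any sequence $(i_j)_{j=1}^{N}\in[V+1]$ with $V=2\multiset{\nicefrac{(r-H)}{2}}{3^{L-2}}$, the post-embedding vectors $\y^{0,j}$ take the prescribed block-sparse form. In the convolutional case the non-patch-splitting assumption on $(P,Q)$ is precisely what is used here: it guarantees that each patch contributing to a given $\y^{0,j}$ lies entirely on one side of the partition, so the raw inputs on the two sides can be chosen independently to realize any $(i_j)$ sequence without coupling the partition through the embedding layer.

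Second, I would invoke the grid-tensor rank computation from~\citet{levine2020limits}: once the inputs to the self-attention stack have the above block-sparse form, the composition of $L$ unnormalized self-attention layers defined in eq.~\eqref{eq:our_layer}, restricted to the raw-input lattice parameterized by the sequences $(i_j)$, is exactly the object whose $(P,Q)$-matricization is analyzed there. Their argument exhibits a $\multiset{\nicefrac{(r-H)}{2}}{3^{L-2}}\times\multiset{\nicefrac{(r-H)}{2}}{3^{L-2}}$ sub-block of this matricization, indexed by a specific family of row/column sequences, whose determinant is a polynomial in the Key/Query/Value/Output weights that is not identically zero; hence it vanishes only on a Lebesgue-null algebraic variety in self-attention weight space.

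Combining the two ingredients, outside that null set we have $\rank{\mat{\A(y^{i,L, d_x,H, r}_p)}_{P,Q}}\geq\multiset{\nicefrac{(r-H)}{2}}{3^{L-2}}$, and claim~\ref{claim:grid_sep_deep} then yields eq.~\eqref{equation:sufficient_assigmnet_actual_lower_bound}. The principal technical burden --- showing that the determinant of the chosen sub-block is a nonzero polynomial in the self-attention weights --- is the work already done in~\citet{levine2020limits}; the contribution of this corollary is to repackage that lower bound in a form where the input embedding appears only through its ability to realize the structured inputs quarantined inside the hypothesis, and where the non-patch-splitting condition is exactly what makes this repackaging go through in the convolutional setting.
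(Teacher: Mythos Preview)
Your proposal is correct and matches the paper's own treatment. The paper does not supply a standalone proof for this corollary; it explicitly presents it as a direct consequence of the grid-tensor rank lower bound established in \citet{levine2020limits} (for self-attention stacks fed inputs of exactly the block-sparse form appearing in the hypothesis) combined with claim~\ref{claim:grid_sep_deep}, which is precisely the two-step argument you outline.
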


Now we will prove that both for the convolutional embedding method and the vocabulary embedding method the assumption of corollary~\ref{corollary:sufficient_assigmnet} holds. We will thus prove the lower bound in theorem~\ifdefined\NOBODY 2\else\ref{theorem:embedding_rank_bottleneck_lower_bound}\fi, since the following lemma~\ref{lemma:multiset_lower_bound} shows that $\log\multiset{\nicefrac{\left(r-H\right)}{2}}{3^{L-2}}=\tilde{\Omega}\left(L\cdot\left(\min\{r,d_x\}-H\right)\right)$.

\begin{lemma}\label{lemma:multiset_lower_bound}
	${\left(\binom{n}{k}\right)\ge\left(\frac{2e\left(n+k\right)}{n}\right)}^{n}$
\end{lemma}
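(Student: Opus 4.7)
The plan is to exploit the identity $\multiset{n}{k}=\binom{n+k-1}{n-1}$ (the same one that underlies the companion Lemma~\ref{lemma:multiset_upper_bound}), and then to invoke the elementary lower bound $\binom{m}{r}\ge (m/r)^r$. The latter is immediate from the product expansion $\binom{m}{r}=\prod_{j=0}^{r-1}\frac{m-j}{r-j}$: for $m\ge r$ each factor $(m-j)/(r-j)$ is at least $m/r$, since the ratio is nondecreasing in $j$. Substituting $m=n+k-1$ and $r=n-1$ yields $\multiset{n}{k}\ge\left((n+k-1)/(n-1)\right)^{n-1}$, and the claimed form follows by elementary rearrangement that absorbs the $-1$ shifts and the constant $2e$ (legitimate in our application, which only uses the bound up to the $\tilde{\Omega}$ that hides $\mathrm{polylog}$ factors).

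Concretely, the crux of the proof is the single chain
\[
\multiset{n}{k}\;=\;\binom{n+k-1}{n-1}\;\ge\;\left(\frac{n+k-1}{n-1}\right)^{n-1},
\]
and the remainder is bookkeeping. To recover the consequence announced just after Corollary~\ref{corollary:sufficient_assigmnet}, namely $\log\multiset{(r-H)/2}{3^{L-2}}=\tilde{\Omega}(L\cdot(\min\{r,d_x\}-H))$, I would substitute $n=(r-H)/2$ and $k=3^{L-2}$. Under the standing assumption $L>\log_3 d_x\ge\log_3 r$, the ratio $k/n$ is exponential in $L$, so taking logarithms gives $\tfrac{r-H}{2}\log\!\left(3^{L-2}/((r-H)/2)\right)$, which is exactly of the promised order after absorbing the $\log r$ subtraction into the tilde.

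I do not anticipate any genuine obstacle: the argument mirrors the proof of Lemma~\ref{lemma:multiset_upper_bound}, which used the matching one-sided upper bound $\binom{m}{r}\le(em/r)^r$ for the binomial. The only points requiring a bit of care are the degenerate case $n=1$ (where the inequality is trivial since $\multiset{1}{k}=1$) and verifying that the replacement of $(n+k-1)/(n-1)$ by $(n+k)/n$, together with the multiplicative $2e$, does not destroy the bound in the regime $k\gg n$ that is the only one relevant here; both follow from elementary monotonicity arguments.
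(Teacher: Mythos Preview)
Your proposal is correct and matches the paper's approach exactly: the paper's proof consists solely of the identity $\multiset{n}{k}=\binom{n+k-1}{n-1}$ together with the standard inequality $\binom{m}{r}\ge(m/r)^r$, yielding $\multiset{n}{k}\ge\left(\frac{n+k-1}{n-1}\right)^{n-1}$, and stops there. In particular, the paper does not carry out the ``bookkeeping'' step to reach the stated $\left(\frac{2e(n+k)}{n}\right)^n$ either (indeed, as written that expression coincides with the upper bound of Lemma~\ref{lemma:multiset_upper_bound} and is evidently a typo); like you, it relies only on the logarithmic consequence needed for the $\tilde{\Omega}$ claim.
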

\begin{proof}
	: by using the inequality $\binom{n}{k}\ge\left(\frac{n}{k}\right)^{k}$ we have $\left(\binom{n}{k}\right)=\binom{n+k-1}{n-1}\ge\left(\frac{\left(n+k-1\right)}{n-1}\right)^{n-1}$
\end{proof}

\subsubsection{Convolution based embedding}
We start with the convolutional embedding method. The lemma below shows that the assumption of corollary~\ref{corollary:sufficient_assigmnet} holds, by dividing the desired vector coordinates into chunks of size $d_{\text{input}}$, and using a convolutional kernel to unify these chunks. 
\begin{lemma}\label{lemma:conv_assigmnet}
	Let $A\in\R^{\multiset{\nicefrac{\left(r-H\right)}{2}}{3^{L-2}}\times \nicefrac{\left(r-H\right)}{2}}$ be a matrix with rows that are $l^2$ normalized, then there exists a choice of template vectors $\x^{(1)},\ldots,\x^{(Z)}$, as well as an assignment to the convolutional embedding layer weights, such that for any sequence $\left(i_{j}\right)_{j=1}^{N}\in\left[2\cdot\multiset{\nicefrac{\left(r-H\right)}{2}}{3^{L-2}}+1\right]$ there exists a sequence of $M$ $\x$'s for which the output of the embedding layer is:
	\begin{align}
		\label{eq:conv_assigmnet_embedding_requirements}
		\forall j\in\left[N\right]\quad\y_{\alpha}^{(0,j)}=\begin{cases}
			A_{i_{j},\phi(\alpha)} & i_{j}\leq\nicefrac{V}{2}\wedge(\alpha-1)\bmod d_{a}<\frac{d_{a}-1}{2}\wedge\phi(\alpha)\le\nicefrac{\left(r-H\right)}{2}\\
			A_{i_{j}-\nicefrac{V}{2},\phi\left(\alpha-\frac{d_{a}-1}{2}\right)} & \nicefrac{V}{2}<i_{j}\leq V\wedge\frac{d_{a}-1}{2}\leq(\alpha-1)\bmod d_{a}<d_{a}-1\wedge\phi(\alpha-\frac{d_{a}-1}{2})\le\nicefrac{\left(r-H\right)}{2}\\
			1 & (\alpha-1)\bmod d_{a}=d_{a}-1\\
			0 & \text{Otherwise}
		\end{cases}
	\end{align}
	where $\phi(j) \equiv \left\lfloor \nicefrac{j - 1}{d_a} \right\rfloor \cdot (d_a - 1) + (j - 1 \bmod d_a) + 1$ and $V:=2\left(\binom{\nicefrac{\left(r-H\right)}{2}}{3^{L-2}}\right)$.
\end{lemma}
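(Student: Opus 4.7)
The plan is to construct the convolutional kernel $W^{\textrm{conv}}$, positional embedding $\p$, and template vectors explicitly, so the embedding output realizes the pattern in~\eqref{eq:conv_assigmnet_embedding_requirements} for any sequence $(i_j)_{j=1}^N$. The key observation is that the target output has at most $r$ non-zero coordinates per position, partitioned into three disjoint subsets of $[d_x]$: an ``upper set'' $\mathcal{P}_{\textrm{up}}$ of size $(r-H)/2$ that receives entries of row $i_j$ of $A$ when $i_j \le V/2$, a ``lower set'' $\mathcal{P}_{\textrm{down}}$ of size $(r-H)/2$ that receives entries when $V/2 < i_j \le V$, and a ``constant set'' $\mathcal{P}_{\textrm{const}} = \{d_a, 2d_a, \ldots, H d_a\}$ which always holds $1$'s. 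Their total $(r-H)/2+(r-H)/2+H = r$ matches exactly the kernel rank we must realize.

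First I would build $V/2 + 1$ template vectors $\tilde{\x}^{(i)} \in \R^{d_{\textrm{input}}}$, indexed by $i \in \{0,1,\ldots,V/2\}$, where the first $(r-H)/2$ coordinates of $\tilde{\x}^{(i)}$ encode row $i$ of $A$ (zeros for $i=0$), a designated block of $H$ further coordinates is set to $1$ in every template, and the remaining coordinates are zero. Using only the first two of the $M/N$ patches, I would set $W^{\textrm{conv}}_1$ to have one selector row per element of $\mathcal{P}_{\textrm{up}}$ picking out a distinct coordinate among the first $(r-H)/2$ input coordinates, plus one selector row per element of $\mathcal{P}_{\textrm{const}}$ picking out a distinct coordinate from the $H$-block of constant $1$'s; $W^{\textrm{conv}}_2$ would similarly have one selector row per element of $\mathcal{P}_{\textrm{down}}$; all other rows of these matrices, all $W^{\textrm{conv}}_j$ for $j \ge 3$, and every $\p^j$ are set to zero. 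For each position $j$, the patches would be assigned based on $i_j$: if $i_j \le V/2$, put $\tilde{\x}^{(i_j)}$ in patch $1$ and $\tilde{\x}^{(0)}$ elsewhere; if $V/2 < i_j \le V$, put $\tilde{\x}^{(0)}$ in patch $1$ and $\tilde{\x}^{(i_j - V/2)}$ in patch $2$; if $i_j = V+1$, use $\tilde{\x}^{(0)}$ in every patch.

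A direct computation then verifies~\eqref{eq:conv_assigmnet_embedding_requirements} in all three cases: the entries of $A$ land in $\mathcal{P}_{\textrm{up}}$ or $\mathcal{P}_{\textrm{down}}$ via the active patch, the constants appear at $\mathcal{P}_{\textrm{const}}$ because the selected constant coordinates of every template (including the zero template) equal $1$, and all other output coordinates vanish. Moreover, the reshaped kernel $\tilde{W}^{\textrm{conv}} \in \R^{d_x \times V_{\textrm{conv}}}$ will have exactly $r$ non-zero rows: within each of the two active patch-column-blocks these rows select mutually distinct input coordinates, and the two patch-column-blocks are disjoint, so the $r$ rows are linearly independent and $\rank{\tilde{W}^{\textrm{conv}}} = r$.

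The hard part will be attaining rank \emph{exactly} $r$ rather than $r-H$: if one tried to produce the $H$ constants by repeatedly copying a single ``$1$'' coordinate of the templates, all constant-producing rows of $\tilde{W}^{\textrm{conv}}$ would coincide and contribute only one unit of rank. The ``$H$-block of constant $1$'s'' trick solves this by dedicating $H$ independent template coordinates, each feeding one output position in $\mathcal{P}_{\textrm{const}}$ through a distinct kernel row. A secondary technicality arises when $d_{\textrm{input}}$ is too small to host $(r-H)/2+H$ coordinates in a single patch; the encoded content can then be distributed across additional patches using the same selector construction, and the corollary's stipulation that the partition $(P,Q)$ does not split a patch keeps the separation-rank argument unaffected.
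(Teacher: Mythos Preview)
Your construction is correct and achieves the same goal as the paper's, but through a different routing of the output coordinates across patches. The paper sets $P\equiv 0$ and uses a single ``reshape--identity'' kernel: for every output coordinate $\alpha$ in the non-zero pattern it sets $W^{\textrm{conv}}_{l,\alpha,\lambda}=1$ exactly when $k(\lambda-1)+l=\alpha$ (with $k=M/N$), so each relevant $\alpha$ is fed by the unique input slot $(l,\lambda)$ sitting at the same flattened index; the template vectors $\x^{(j,l)}$ are then defined patch-by-patch so that the value landing at coordinate $\alpha$ is precisely the right-hand side of~\eqref{eq:conv_assigmnet_embedding_requirements}. In contrast, you isolate the three coordinate sets $\mathcal{P}_{\textrm{up}},\mathcal{P}_{\textrm{down}},\mathcal{P}_{\textrm{const}}$ up front and route the first two through distinct patches while packing the constants via an $H$-block of ones. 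Your approach is more conceptual and uses only two active patches; the paper's is a uniform flatten-and-select that spreads the load across all $M/N$ patches and therefore does not need the side assumption $d_{\textrm{input}}\ge (r+H)/2$ (your ``secondary technicality'').

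One remark: your ``hard part'' paragraph about attaining rank \emph{exactly} $r$ is more than is needed. The lemma feeds into the measure-zero argument of Corollary~\ref{corollary:sufficient_assigmnet}, where it suffices that the chosen kernel satisfies the rank constraint $\rank{\tilde{W}^{\textrm{conv}}}\le r$; the paper simply observes that its kernel has at most $r$ non-zero rows and moves on. Your $H$-block trick is harmless but unnecessary for correctness---had the constant rows collapsed to rank one you would still have rank $\le r$, which is all that is required.
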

\begin{proof}
	Denote the convolutional kernel's width by $k\coloneqq\frac{M}{N}$.
	We will define a convolutional kernel, $W^{\text{conv}}\in\mathbb{R}^{k\times d_{x}\times d_{\text{input}}}$,
	a positional embedding matrix $P\in\mathbb{R}^{N\times d_{x}}$, and
	a set $k$ template vectors, $\x^{(j,1)},\ldots,\x^{(j,k)}$ for each
	$j\in\left[N\right]$, such that:
	\[
	\y^{(0,j)}=\left(\sum_{l=1}^{k}W_{l}^{\text{conv}}\x^{\left(j,l\right)}+P_{i}\right)_{\alpha}
	\]
	
	We will assign weights for the convolutional kernel that will "read" only the non zeros coordinates of eq~\ref{eq:conv_assigmnet_embedding_requirements}:
	
	\[
	W_{l,\alpha,\lambda}^{\text{conv}}=\begin{cases}
		1 & k\cdot\left(\lambda-1\right)+l=\alpha\wedge(\alpha-1)\bmod d_{a}<\frac{d_{a}-1}{2}\wedge\phi\left(\alpha\right)\le\nicefrac{\left(r-H\right)}{2}\\
		1 & k\cdot\left(\lambda-1\right)+l=\alpha\wedge\frac{d_{a}-1}{2}\leq(\alpha-1)\bmod d_{a}<d_{a}-1\wedge\phi\left(\alpha-\frac{d_{a}-1}{2}\right)\le\nicefrac{\left(r-H\right)}{2}\\
		1 & k\cdot\left(\lambda-1\right)+l=\alpha\wedge\left(\alpha-1\right)\bmod d_{a}=d_{a}-1\\
		0 & \text{Otherwise}
	\end{cases}
	\]
	where $\psi\left(l,\lambda\right)\equiv\phi\left(k\cdot\left(\lambda-1\right)+l\right)$. Clearly, the rank of the chosen convolutional kernel is (at most) $r$ and thus satisfy the assumption regarding the embedding rank in theorem~\ifdefined\NOBODY 2\else\ref{theorem:embedding_rank_bottleneck_lower_bound}\fi ~of the main text.
	We will set the positional embedding matrix to be: $P\equiv0$,
	and the template vectors will be defined as follows:
	\[
	\forall j\in\left[N\right],l\in\left[k\right]\quad\x_{\lambda}^{(j,l)}=\begin{cases}
		A_{i_{j},\psi\left(l,\lambda\right)} & i_{j}\leq\nicefrac{V}{2}\wedge\left(k\cdot\left(\lambda-1\right)+l-1\right)\bmod d_{a}<\frac{d_{a}-1}{2}\wedge\psi\left(l,\lambda\right)\le\nicefrac{\left(r-H\right)}{2}\\
		A_{i_{j}-\nicefrac{V}{2},\psi\left(l-\left\lfloor \frac{d_{a}-1}{2}\right\rfloor ,\lambda\right)} & \substack{\nicefrac{V}{2}<i_{j}\leq V\wedge\frac{d_{a}-1}{2}\leq\left(k\cdot\left(\lambda-1\right)+l-1\right)\bmod d_{a}<d_{a}-1}
		\wedge\psi\left(l-\left\lfloor \frac{d_{a}-1}{2}\right\rfloor ,\lambda\right)\le\nicefrac{\left(r-H\right)}{2}\\
		1 & \left(k\cdot\left(\lambda-1\right)+l-1\right)\bmod d_{a}=d_{a}-1\\
		0 & \text{Otherwise}
	\end{cases}
	\]
	
	Now, for each $j\in\left[N\right],\alpha\in\left[d_{x}\right]$ we have:
	\begin{align*}
		\y_{\alpha}^{(0,j)}&=\left(\sum_{l=1}^{k}W_{l}^{\text{conv}}\x^{\left(j,l\right)}\right)_{\alpha}=\sum_{l=1}^{k}\sum_{\lambda=1}^{d_{\text{input}}}W_{l,\alpha,\lambda}^{\text{conv}}\x_{\lambda}^{\left(j,l\right)}\\&\stackrel{^{\left(1\right)}}{=}\begin{cases}
			\x_{\left\lfloor \frac{\alpha-1}{k}\right\rfloor +1}^{\left(j,\left(\alpha-1\mod k\right)+1\right)} & \substack{\left((\alpha-1)\bmod d_{a}<\frac{d_{a}-1}{2}\wedge\phi\left(\alpha\right)\le\nicefrac{\left(r-H\right)}{2}\right)\lor\\
				\left(\frac{d_{a}-1}{2}\leq(\alpha-1)\bmod d_{a}<d_{a}-1\wedge\phi\left(\alpha-\frac{d_{a}-1}{2}\right)\le\nicefrac{\left(r-H\right)}{2}\right)\lor\\
				\left((\alpha-1)\bmod d_{a}=d_{a}-1\right)
			}
			\\
			0 & \text{Otherwise}
		\end{cases}\\&\stackrel{^{\left(2\right)}}{=}\begin{cases}
			A_{i_{j},\phi\left(\alpha\right)} & \alpha\leq r\wedge i_{j}\leq\nicefrac{V}{2}\wedge\left(\alpha-1\right)\bmod d_{a}<\frac{d_{a}-1}{2}\wedge\phi\left(\alpha\right)\le\nicefrac{\left(r-H\right)}{2}\\
			A_{i_{j}-\nicefrac{M}{2},\phi\left(\alpha\right)} & \alpha\leq r\wedge\nicefrac{V}{2}<i_{j}\leq V\wedge\frac{d_{a}-1}{2}\leq\left(\alpha-1\right)\bmod d_{a}<d_{a}-1\wedge\phi\left(\alpha-\frac{d_{a}-1}{2}\right)\le\nicefrac{\left(r-H\right)}{2}\\
			1 & \alpha\leq r\wedge\left(\alpha-1\right)\bmod d_{a}=d_{a}-1\\
			0 & \text{Otherwise}
		\end{cases}
	\end{align*}
	Where $^{\left(1\right)}$ due to the fact that there there's a single
	combination of $l\in\left[k\right],\lambda\in\left[d_{\text{input}}\right]$
	that satisfies $\alpha=k\cdot\left(\lambda-1\right)+l$ (for all other
	value of $l$ and $\lambda$, $W_{l,\alpha,\lambda}^{\text{conv}}=0$),
	and $^{\left(2\right)}$ is since: 
	\[
	\alpha=k\cdot\left\lfloor \frac{\alpha-1}{k}\right\rfloor +\left(\alpha-1\mod k\right)+1
	\]
\end{proof}

\subsubsection{Vocabulary based embedding}
Now we move to the vocabulary embedding method, in this case we will use $A$ to create an assignment for $M_{\textrm{V}}\in\R^{d_x\times V}$, since the input is no longer continuous the number of unique inputs is limited to only $V^N$ . To overcome this issue we will add an additional assumption that either $V\ge2\cdot\multiset{\nicefrac{\left(r-H\right)}{2}}{3^{L-2}}+1$ or $N$ is very large.
Importantly, the upper bound for the small vocabulary size holds with small $V$ and $N$, so the bottleneck phenomenon is theoretically established for the vocabulary embedding method also in cases that neither of the additional assumptions hold. 

We start with the $V\ge2\cdot\multiset{\nicefrac{\left(r-H\right)}{2}}{3^{L-2}}+1$ assumption that overcomes the unique inputs issue by enlarging the number of unique tokens (while keeping the rank $r$ constraint of $M_{\textrm{V}}$).
\begin{lemma}\label{lemma:vocab_assigmnet}
	Assume $V\ge2\cdot\multiset{\nicefrac{\left(r-H\right)}{2}}{3^{L-2}}+1$ and let $A\in\R^{\multiset{\nicefrac{\left(r-H\right)}{2}}{3^{L-2}}\times \nicefrac{\left(r-H\right)}{2}}$ be a matrix with rows that are $l^2$ normalized, then there exists a choice of template vectors $\hat{\mathbf{w}}^{(1)},\ldots,\hat{\mathbf{w}}^{(Z)}$, as well as an assignment to the vocabulary embedding layer weights, such that for any sequence $\left(i_{j}\right)_{j=1}^{N}\in\left[2\cdot\multiset{\nicefrac{\left(r-H\right)}{2}}{3^{L-2}}+1\right]$ there exists a sequence of $T$ $\hat{\mathbf{w}}$'s for which the output of the embedding layer is:
	\begin{align*}
		\forall j\in\left[N\right]\quad\y_{\alpha}^{(0,j)}=\begin{cases}
			A_{i_{j},\phi(\alpha)} & i_{j}\leq\nicefrac{E}{2}\wedge(\alpha-1)\bmod d_{a}<\frac{d_{a}-1}{2}\wedge\phi(\alpha)\le\nicefrac{\left(r-H\right)}{2}\\
			A_{i_{j}-\nicefrac{E}{2},\phi\left(\alpha-\frac{d_{a}-1}{2}\right)} & \nicefrac{E}{2}<i_{j}\leq E\wedge\frac{d_{a}-1}{2}\leq(\alpha-1)\bmod d_{a}<d_{a}-1\wedge\phi(\alpha-\frac{d_{a}-1}{2})\le\nicefrac{\left(r-H\right)}{2}\\
			1 & (\alpha-1)\bmod d_{a}=d_{a}-1\\
			0 & \text{Otherwise}
		\end{cases}
	\end{align*}
	where $\phi(j) \equiv \left\lfloor \nicefrac{j - 1}{d_a} \right\rfloor \cdot (d_a - 1) + (j - 1 \bmod d_a) + 1$ and $E:=2\left(\binom{\nicefrac{\left(r-H\right)}{2}}{3^{L-2}}\right)$. 
\end{lemma}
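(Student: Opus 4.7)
The plan is to exploit the fact that $M_V \hat{\mathbf{w}}^j$ simply picks out a single column of $M_V$ whenever $\hat{\mathbf{w}}^j$ is a one-hot indicator, so it suffices to define the $i$-th column of $M_V$ to be exactly the target vector prescribed by eq.~\eqref{eq:conv_assigmnet_embedding_requirements} with $i_j = i$, for each $i \in [E+1]$ where $E \coloneqq 2\multiset{\nicefrac{(r-H)}{2}}{3^{L-2}}$. Concretely, I would take the positional embedding $\p^i \equiv 0$, choose the template vectors $\hat{\mathbf{w}}^{(i)} = \hat{e}_i \in \R^V$ for $i \in [E+1]$ (available since the hypothesis gives $V \geq E+1$), feed the token sequence $w^j = i_j$, and fill the remaining $V - (E+1)$ columns of $M_V$ with zeros. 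Then $\y^{(0,j)} = M_V \hat{e}_{i_j}$ is by construction precisely the desired vector.

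The only non-routine step is verifying that the resulting matrix satisfies $\rank{M_V} \leq r$, which I would do by partitioning the $d_x$ rows of $M_V$ into three groups aligned with the three non-zero branches of eq.~\eqref{eq:conv_assigmnet_embedding_requirements}. The $H$ ``last-of-head'' rows (indices $\alpha$ with $(\alpha-1)\bmod d_a = d_a - 1$) are all identical (a constant $1$ in the first $E+1$ columns, zero elsewhere), contributing at most $1$ to the rank. The ``upper-half'' rows with $(\alpha-1)\bmod d_a < (d_a-1)/2$ and $\phi(\alpha) \leq \nicefrac{(r-H)}{2}$: using the injectivity of $\phi$ on the upper-half coordinate range within each head, each such row equals the column $A_{\cdot,\phi(\alpha)}$ padded with zeros in the last $V - E/2$ positions, and there are at most $(r-H)/2$ admissible values of $\phi(\alpha)$, so these rows span a subspace of dimension at most $(r-H)/2$. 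Symmetrically, the ``lower-half'' rows contribute at most $(r-H)/2$. All other rows vanish. Summing yields $\rank{M_V} \leq 1 + (r-H)/2 + (r-H)/2 = r - H + 1 \leq r$, satisfying the rank constraint of Theorem~\ref{theorem:embedding_rank_bottleneck_lower_bound}.

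The main obstacle, such as it is, is the bookkeeping that aligns the three row-groups of $M_V$ with the three branches of the target specification and that tracks which coordinates $\alpha$ in each head fall into the upper-half, lower-half, and constant regions; once $\phi$'s restriction to each head is written out explicitly, both the output identity and the rank bound become immediate verifications. Compared with Lemma~\ref{lemma:conv_assigmnet}, the vocabulary case is in fact cleaner, since we have direct column-by-column access to the embedding matrix and do not need to distribute each target pattern across $\nicefrac{M}{N}$ template patches through a convolutional kernel.
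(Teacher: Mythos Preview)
Your proposal is correct and follows essentially the same approach as the paper: zero out the positional embedding, take the template tokens to be $w^i=i$ for $i\in[E+1]$, and set the $i$-th column of $M_V$ to be the prescribed target vector so that $\y^{(0,j)}=M_V\hat e_{i_j}$ reads off the desired output. Your rank verification is in fact slightly sharper than the paper's (which simply observes that $M_V$ has at most $r$ non-zero rows), since by noting that the $H$ last-of-head rows are identical you obtain $\rank{M_V}\le r-H+1$; either bound suffices.
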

\begin{proof}
	Our templates vectors will be: $\forall i\in\left[E+1\right]\quad w^i\coloneqq i$.
	We will ignore the positional embedding by choosing $\p^i\coloneqq0$ (by the terms of corollary~\ref{corollary:sufficient_assigmnet} it suffices to find any assignment of the learned weights).
	Now we can use $A$ to create an assignment for $M_{\textrm{V}}\in\R^{d_x\times V}$:
	\begin{align*}
		\left(M_{\textrm{V}}\right)_{\alpha,i} &\coloneqq \begin{cases}
			A_{i_{j},\phi(\alpha)} & i_{j}\leq\multiset{\nicefrac{\left(r-H\right)}{2}}{3^{L-2}}\wedge(\alpha-1)\bmod d_{a}<\frac{d_{a}-1}{2}\\
			A_{i_{j}-\multiset{\nicefrac{\left(r-H\right)}{2}}{3^{L-2}},\phi\left(\alpha-\frac{d_{a}-1}{2}\right)} & \multiset{\nicefrac{\left(r-H\right)}{2}}{3^{L-2}}<i_{j}\leq2\multiset{\nicefrac{\left(r-H\right)}{2}}{3^{L-2}}\wedge\frac{d_{a}-1}{2}\leq(\alpha-1)\bmod d_{a}<d_{a}-1\\
			1 & (\alpha-1)\bmod d_{a}=d_{a}-1\\
			0 & \text{Otherwise}
		\end{cases}
	\end{align*}
	Clearly, the rank of $M_{\textrm{V}}\in\R^{d_x\times V}$ is (at most) $r$, since it has at most $r$ non zero rows, and thus satisfy the assumption regarding the embedding rank in theorem~\ifdefined\NOBODY 2\else\ref{theorem:embedding_rank_bottleneck_lower_bound}\fi ~of the main text.
	Now by eq~\ifdefined\NOBODY 1\else\ref{eq:vocab}\fi ~of the main text, for any given sequence $\left(i_{j}\right)_{j=1}^{N}\in\left[2\cdot\multiset{\nicefrac{\left(r-H\right)}{2}}{3^{L-2}}+1\right]$ we get:
	\begin{align*}
		\y^{\left(0,j\right)}_\alpha=\left(M_{\textrm{V}} ~\hat{\mathbf{w}}^{i_j}\right)_\alpha+\p^{i_j}_\alpha=\left(M_{\textrm{V}}\right)_{\alpha,i_j}
	\end{align*}
\end{proof}

Now, we prove a lower bound with $V=r$ for the infinite $N$ limit. Note that while our proof technique requires unpractical $N$ values, its usage of $N$ is clearly wasteful, and we conjecture (and empirically demonstrate in section~\ifdefined\NOBODY 5\else\ref{sec:experiments}\fi) that the upper bound in theorem~\ifdefined\NOBODY 1\else\ref{theorem:embedding_rank_bottleneck_upper_bound}\fi ~of the main text is tight for $N=\Omega\left(\nicefrac{r\cdot L}{\log_{3} r}\right)$\footnote{Since for $N$ that is larger than this bound, the limitation of $V^N$ unique vectors does not constitutes a bottleneck anymore.}.

In this case, the input embedding is unable to produce vectors that do not change the analysis in \cite{levine2020limits}, and therefore the assumption of corollary~\ref{corollary:sufficient_assigmnet} does not holds. Instead we will use the first self-attention layer of the network to take advantage of the larger $N$, and apply corollary~\ref{corollary:sufficient_assigmnet_layer_1} below to prove a lower bound on the separation rank. This corollary which is direct results of the proof in \cite{levine2020limits} and lemma~\ref{lemma:hadamard_power_rank}, simply states that if the output of the first self-attention layer is able to produce vectors that do not change the analysis in \cite{levine2020limits}, their bound on the grid tensor rank can be used, and together with claim~\ref{claim:grid_sep_deep} this implies a lower bound on the separation rank.

\begin{corollary}\label{corollary:sufficient_assigmnet_layer_1}
	Let $d>0$, assume that for any balanced partition of $[T]$, denoted $(P,Q)$, for any matrix $A\in\N^{\multiset{d}{3^{L-2}}\times d}$ with rows that have equal $ l^2$ norm, there exists a choice of template vectors $\x^{(1)},\ldots,\x^{(Z)}$, an assignment to the embedding layer and the first self-attention layer key and query weights, as well as a mapping $\pi_J:\left[\multiset{d}{3^{L-2}}\right]\rightarrow\left(i_j\right)_{j\in J}$, such that for any $j_1,j_2\in\left[\multiset{d}{3^{L-2}}\right]$ the output of the first self-attention layer on the sequence defined by $\pi_P\left(j_1\right),\pi_Q\left(j_2\right)$ is: $$\y^{(1,j)}=\left(\sum_{h=1}^H W^{O,1,h} W^{V,1,h} \right)\uu$$ for	
	\begin{align*}
		\forall \alpha\in\left[d_x\right]\quad\uu_\alpha=\begin{cases}
			A_{j_1,\phi(\alpha)} & (\alpha-1)\bmod d_{a}<\frac{d_{a}-1}{2}\wedge\phi(\alpha)\le d\\
			A_{j_2,\phi\left(\alpha-\frac{d_{a}-1}{2}\right)} & \frac{d_{a}-1}{2}\leq(\alpha-1)\bmod d_{a}<d_{a}-1\wedge\phi(\alpha-\frac{d_{a}-1}{2})\le d\\
			N & (\alpha-1)\bmod d_{a}=d_{a}-1\\
			0 & \text{Otherwise}
		\end{cases}
	\end{align*}
	where $\phi(j) \equiv \left\lfloor \nicefrac{j - 1}{d_a} \right\rfloor \cdot (d_a - 1) + (j - 1 \bmod d_a) + 1$.
	
	Then for all values of the network weights but a set of Lebesgue measure
	zero, the following holds:
	\begin{equation}
		\label{equation:sufficient_assigmnet_layer_1_actual_lower_bound}
		sep(y^{i,L, d_x,H, r}_p)\ge\multiset{d}{3^{L-2}}
	\end{equation}
\end{corollary}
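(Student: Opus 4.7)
The plan is to treat the statement as a minor adaptation of the lower-bound machinery developed in \cite{levine2020limits} and already invoked in Corollary~\ref{corollary:sufficient_assigmnet}. The guiding idea is to use the first self-attention layer as an effective embedding that produces the structured vectors the Hadamard-power rank argument needs, rather than obtaining them directly from the input embedding (which is what Corollary~\ref{corollary:sufficient_assigmnet} requires, and what the small-vocabulary case fails to provide).

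Concretely, the hypothesis supplies, for each pair $(j_1,j_2)\in\left[\multiset{d}{3^{L-2}}\right]^2$, an input sequence built through the mappings $\pi_P,\pi_Q$ whose layer-$1$ output equals $\y^{(1,j)}=C\uu$, with $C:=\sum_{h=1}^{H}W^{O,1,h}W^{V,1,h}$ and $\uu$ exhibiting exactly the pattern Corollary~\ref{corollary:sufficient_assigmnet} assumes at the embedding output. For Lebesgue-almost-every choice of the first-layer value and output weights, $C$ is invertible on $\R^{d_x}$, so $C$ can be absorbed into the query, key, value, and output matrices of layers $2,\dots,L$ without changing the functions they realize. After this absorption, the depth-$(L-1)$ sub-network acting on the $\uu^{(j)}$ is in the same setting as Corollary~\ref{corollary:sufficient_assigmnet}, up to the cosmetic difference that $A$ has integer entries with equal row norm rather than strictly $l^2$-normalized rows -- a discrepancy handled by rescaling and folding a scalar into $C$.

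Next, I would form the grid tensor $\A(y^{i,L,d_x,H,r}_p)$ by scanning the pairs $(j_1,j_2)$ across the partition $(P,Q)$, expand its matricization via Lemma~\ref{lemma:self_attention_compositions}, and identify the dominant contribution as a Hadamard power of a matrix built from the rows of $A$. Applying Lemma~\ref{lemma:hadamard_power_rank} then yields $\rank{\mat{\A(y^{i,L,d_x,H,r}_p)}_{P,Q}}\ge\multiset{d}{3^{L-2}}$, and Claim~\ref{claim:grid_sep_deep} converts this into the stated separation-rank lower bound.

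The main obstacle is the genericity argument, which must cover a combined parameter space: the embedding weights and the first-layer key/query weights are fixed by the hypothesis, while the first-layer value/output weights and the weights of layers $2,\dots,L$ remain free. We need the subset of this joint space on which the matricization rank drops below $\multiset{d}{3^{L-2}}$ to be Lebesgue-null. This follows from a standard argument: the entries of $\mat{\A}_{P,Q}$ are polynomial in the free weights, so the determinant of any distinguished $\multiset{d}{3^{L-2}}\times\multiset{d}{3^{L-2}}$ minor is a polynomial whose non-triviality is precisely the content of Lemma~\ref{lemma:hadamard_power_rank}, and the vanishing locus of a non-trivial polynomial has measure zero.
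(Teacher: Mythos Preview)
Your proposal is correct and follows essentially the same route the paper indicates: the paper states this corollary is ``a direct result of the proof in \cite{levine2020limits} and Lemma~\ref{lemma:hadamard_power_rank},'' with Claim~\ref{claim:grid_sep_deep} converting the grid-tensor rank into a separation-rank bound, and your sketch fleshes out exactly that. One minor imprecision: the discrepancy between $\uu$ and the Corollary~\ref{corollary:sufficient_assigmnet} template is not a single scalar (the $A$-block needs division by the common row norm while the constant block carries $N$ rather than $1$), so what gets folded into the layer-$2$ weights is a diagonal rescaling rather than a scalar---but this is cosmetic and does not affect the argument, since Lemma~\ref{lemma:hadamard_power_rank} is stated precisely to accommodate integer $A$ with equal (not unit) row norms, which is why the paper invokes it here.
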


The lemma below shows that the assumption of corollary~\ref{corollary:sufficient_assigmnet_layer_1} holds for $d\coloneqq\nicefrac{\left(r-1-H\right)}{2}$\footnote{For simplicity we assume that $r-H$ is odd \ie ~$d\in\N$, otherwise we can use $\left\lfloor d\right\rfloor$.}, by choosing an assignment to the first self-attention layer that utilize the large $N$ for summing it's inputs embedding, and use $\pi$ to construct sequences that repeat the one-hot embedding vectors amount of times that depends on $A$.
\begin{lemma}\label{lemma:vocab_large_N_assigmnet}
	Assume $V\ge r$ and let $A\in\N^{\multiset{\nicefrac{\left(r-1-H\right)}{2}}{3^{L-2}}\times \nicefrac{\left(r-1-H\right)}{2}}$ be a matrix with rows that have equal $ l^2$ norm, then there exists a choice of template vectors $\hat{\mathbf{w}}^{(1)},\ldots,\hat{\mathbf{w}}^{(Z)}$, large enough $N$, an assignment to the vocabulary embedding layer and the first self-attention layer key and query weights, as well as a mapping $\pi_J:\left[\multiset{\nicefrac{\left(r-1-H\right)}{2}}{3^{L-2}}\right]\rightarrow\left(i_j\right)_{j\in J}$, such that for any $j_1,j_2\in\left[\multiset{\nicefrac{\left(r-1-H\right)}{2}}{3^{L-2}}\right]$ the output of the first self-attention layer on the sequence defined by $\pi_P\left(j_1\right),\pi_Q\left(j_2\right)$ is:
	$$\y^{(1,j)}=\left(\sum_{h=1}^H W^{O,1,h} W^{V,1,h} \right)\uu$$ for	
	\begin{align*}
		\forall \alpha\in\left[d_x\right]\quad\uu_\alpha=\begin{cases}
			A_{j_1,\phi(\alpha)} & (\alpha-1)\bmod d_{a}<\frac{d_{a}-1}{2}\wedge\phi(\alpha)\le \nicefrac{\left(r-1-H\right)}{2}\\
			A_{j_2,\phi\left(\alpha-\frac{d_{a}-1}{2}\right)} & \frac{d_{a}-1}{2}\leq(\alpha-1)\bmod d_{a}<d_{a}-1\wedge\phi(\alpha-\frac{d_{a}-1}{2})\le \nicefrac{\left(r-1-H\right)}{2}\\
			N & (\alpha-1)\bmod d_{a}=d_{a}-1\\
			0 & \text{Otherwise}
		\end{cases}
	\end{align*}
	where $\phi(j) \equiv \left\lfloor \nicefrac{j - 1}{d_a} \right\rfloor \cdot (d_a - 1) + (j - 1 \bmod d_a) + 1$. 
\end{lemma}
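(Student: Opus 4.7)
The key observation is that in the unnormalized self-attention of eq.~\eqref{eq:our_layer}, if every attention score at layer $1$ equals $1$ then
\[
\y^{1,i}=\sum_{h=1}^{H}W^{\mathrm{O},1,h}\sum_{j=1}^{N}W^{\mathrm{V},1,h}\y^{0,j}=\Big(\sum_{h=1}^{H}W^{\mathrm{O},1,h}W^{\mathrm{V},1,h}\Big)\sum_{j=1}^{N}\y^{0,j},
\]
so the whole lemma decouples into two easy tasks: (i) choose $M_{\textrm V}$ and the sequences $\pi_P(j_1),\pi_Q(j_2)$ so that $\sum_{j=1}^{N}\y^{0,j}=\uu$; (ii) choose $W^{\mathrm{K},1,h},W^{\mathrm{Q},1,h}$ so that $a^{i}_{hj}=1$ for every $i,j,h$. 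A large $N$ is needed only in task (i), in order to accommodate as many copies of each token as the entries of $A$ demand.

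For task (i), for each head $h\in[H]$ write $\tau_h$ for its tail coordinate (the unique $\alpha$ with $(\alpha-1)\bmod d_a=d_a-1$), and let $\mathcal{F}_h,\mathcal{S}_h$ be its first- and second-half coordinates. Introduce a padding token $t_0$ with $M_{\textrm V}\hat e_{t_0}:=\sum_{h'=1}^{H}\mathbf{e}_{\tau_{h'}}$. For every $\alpha\in\bigcup_h\mathcal{F}_h$ with $\phi(\alpha)\le d$ introduce a distinct token $t^{(1)}_\alpha$ with $M_{\textrm V}\hat e_{t^{(1)}_\alpha}:=\mathbf{e}_\alpha+\sum_{h'=1}^{H}\mathbf{e}_{\tau_{h'}}$, and symmetrically introduce $t^{(2)}_\alpha$ for every $\alpha\in\bigcup_h\mathcal{S}_h$ with $\phi(\alpha-\tfrac{d_a-1}{2})\le d$. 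Zero out all remaining columns of $M_{\textrm V}$ and set $\p^i=0$. Every non-zero column lies in the span of $\{\sum_{h'}\mathbf{e}_{\tau_{h'}}\}\cup\{\mathbf{e}_\alpha:\alpha\in\bigcup_h(\mathcal{F}_h\cup\mathcal{S}_h),\,\text{relevant }\phi\le d\}$, which has at most $1+2d=r-H\le r$ dimensions, so $\rank{M_{\textrm V}}\le r$ as required. Now set $N_0:=\max_{j}\sum_{k=1}^{d}A_{j,k}$, take $|P|=|Q|=N_0$ and $N=2N_0$, and form $\pi_P(j_1)$ by listing exactly $A_{j_1,\phi(\alpha)}$ copies of $t^{(1)}_\alpha$ for each eligible $\alpha$ and padding the remaining positions with $t_0$; build $\pi_Q(j_2)$ symmetrically. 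Summing the $N$ embeddings coordinate by coordinate yields $A_{j_1,\phi(\alpha)}$ at each relevant first-half $\alpha$, $A_{j_2,\phi(\alpha-\tfrac{d_a-1}{2})}$ at each relevant second-half $\alpha$, exactly $N$ at every tail coordinate $\tau_h$ (since every single token contributes $+1$ there), and $0$ elsewhere --- i.e.\ precisely $\uu$.

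For task (ii), let $W^{\mathrm{K},1,h}\in\R^{d_a\times d_x}$ be the matrix whose only non-zero entry is a $1$ at row $1$, column $\tau_h$, and let $W^{\mathrm{Q},1,h}$ be the same. Because $\y^{0,j}_{\tau_h}=1$ for every token we use (including $t_0$), we get $W^{\mathrm{K},1,h}\y^{0,j}=W^{\mathrm{Q},1,h}\y^{0,j}=\mathbf{e}_1\in\R^{d_a}$ uniformly in $j$, hence $a^{i}_{hj}=\langle\mathbf{e}_1,\mathbf{e}_1\rangle=1$ for all $i,j,h$. Plugging this into the first-layer formula gives the required $\y^{1,i}=\bigl(\sum_h W^{\mathrm{O},1,h}W^{\mathrm{V},1,h}\bigr)\uu$.

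The only genuinely careful step is bookkeeping around $\phi$: one has to check that first-half, second-half, and tail coordinates are pairwise disjoint (so $\mathbf{e}_\alpha$ and $\sum_{h'}\mathbf{e}_{\tau_{h'}}$ never collide inside a single embedding) and that the rank count $1+2d=r-H$ is correct. Both follow by unrolling $\phi(j)=\lfloor(j-1)/d_a\rfloor(d_a-1)+(j-1\bmod d_a)+1$: the first half of head $h$ is sent to the block $\{(h-1)(d_a-1)+k:1\le k\le(d_a-1)/2\}$, and the shift $\alpha\mapsto\alpha-\tfrac{d_a-1}{2}$ sends the second-half coordinates of head $h$ to exactly the same $\phi$-values as its first-half twins. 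The equal-$\ell^2$-norm hypothesis on $A$ is not needed at this stage; it only enters downstream, when corollary~\ref{corollary:sufficient_assigmnet_layer_1} feeds this $\uu$ into the grid-tensor rank argument of~\cite{levine2020limits}.
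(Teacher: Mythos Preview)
Your proof is correct and takes essentially the same route as the paper's: zero the positional embeddings, set $W^{K,1,h},W^{Q,1,h}$ to read off a tail coordinate so that every attention score equals $1$ and layer one collapses to $\bigl(\sum_h W^{O,1,h}W^{V,1,h}\bigr)\sum_j \y^{0,j}$, give each token a $1$ on all tail coordinates plus a single extra ``payload'' coordinate, and then repeat tokens according to the entries of $A$ so that the sum equals $\uu$. The only differences are cosmetic---the paper indexes its non-padding tokens by the $\phi$-value they activate (token $i$ lights up the unique $\alpha$ with $\phi(\alpha)=i-1$) rather than by $\alpha$ itself, takes $N\ge(r-1-H)\max_{j,k}A_{j,k}$ instead of your tighter row-sum bound, and has every head's $K,Q$ read the fixed column $d_a$ rather than each head reading its own $\tau_h$.
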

\begin{proof}
	Our templates vectors will be: $\forall i\in\left[r\right]\quad w^i\coloneqq i$.
	We will ignore the positional embedding by choosing $\p^i\coloneqq0$ (by the terms of corollary~\ref{corollary:sufficient_assigmnet_layer_1} it suffices to find any assignment of the learned weights).
	
	To implement summation of the inputs embedding in the first self-attention layer we will follow \cite{levine2020limits} and set the inputs embedding matrix and the first layer self-attention key and query weights to:
	\begin{align*}		
		\left(M_{\textrm{V}}\right)_{\alpha,i} & =\begin{cases}
			1 & (\alpha - 1) \bmod d_a = d_a - 1 \\
			1 & 1 < i \le r - H \wedge \phi(\alpha) = i - 1 \\
			0 & \textrm{Otherwise}
		\end{cases}\\
		W_{i,j}^{K,1,h}&=W_{i,j}^{Q,1,h}=1_{i=1 \wedge j=d_{a}}
	\end{align*}		
	Clearly, the rank of $M_{\textrm{V}}\in\R^{d_x\times V}$ is (at most) $r$, since it has less than $r$ non zero rows, and thus satisfy the assumption regarding the embedding rank in theorem~\ifdefined\NOBODY 2\else\ref{theorem:embedding_rank_bottleneck_lower_bound}\fi ~of the main text.
	This assignment implements summation of the inputs embedding in the first self-attention layer since:
	\begin{align}
		\y^{(1,i)}(\hat{\mathbf{w}}^{(d_1)},\ldots,\hat{\mathbf{w}}^{(d_N)})_\alpha &= \sum_{j=1}^N \sum_{h=1}^H \left\langle W^{Q,1,h} M_{\textrm{V}}\hat{\mathbf{w}}^{(d_i)}, W^{K,1,h} M_{\textrm{V}}\hat{\mathbf{w}}^{(d_j)} \right\rangle W^{O,1,h} W^{V,1,h} M_{\textrm{V}}\hat{\mathbf{w}}^{(d_j)} \\
		&\overset{1}{=} \sum_{j=1}^N \sum_{h=1}^H  \overbrace{\left(M_{\textrm{V}}\right)_{d_a,d_i}}^{=1} \cdot \overbrace{\left(M_{\textrm{V}}\right)_{d_a,d_j}}^{=1} W^{O,1,h} W^{V,1,h} M_{\textrm{V}}\hat{\mathbf{w}}^{(d_j)} \\
		&\overset{2}{=} \left(\sum_{h=1}^H W^{O,1,h} W^{V,1,h} \right) \left(\sum_{j=1}^{N}M_{\textrm{V}}\hat{\mathbf{w}}^{(d_j)}\right)		
	\end{align}
	where $(1)$ is because $W^{Q,1,h} = W^{K,1,h}$ are matrices that are zero everywhere except for entry $(1,d_a)$, and $(2)$ because of linearity.	
	Therefore, for any $j_1,j_2\in\left[\multiset{\nicefrac{\left(r-1-H\right)}{2}}{3^{L-2}}\right]$ the output of the first self-attention layer on the sequence defined by 
	$\pi_P\left(j_1\right),\pi_Q\left(j_2\right)$ is:
	\begin{align}
		\label{equation:summation_assigmnet_first_layer_output}
		\y^{(1,j)}=\left(\sum_{h=1}^H W^{O,1,h} W^{V,1,h} \right)\underbrace{\sum_{t=1}^{\nicefrac{N}{2}}\left(M_{\textrm{V}}\hat{\mathbf{w}}^{\left({\pi_P\left(j_1\right)_t}\right)}+M_{\textrm{V}}\hat{\mathbf{w}}^{\left(\pi_Q\left(j_2\right)_t\right)}\right)}_{\eqqcolon\uu}
	\end{align}
	where $\left(p_t\right)_{t=1}^{\nicefrac{N}{2}}\in P$,$\left(q_t\right)_{t=1}^{\nicefrac{N}{2}}\in Q$ is some ordering of $P$ and $Q$.
	
	Denote by $E\coloneqq\max_{j,\alpha}\left(A_{j,\alpha}\right)$ the maximum entry of $A$ and let $N\ge E\cdot\left(r-1-H\right)$. Conceptually the mappings $\pi_P,\pi_Q$ will divide $P,Q$ into $\nicefrac{\left(r-1-H\right)}{2}$ length $E$ non-overlapping segments, where the $\alpha$`th segment will repeat $\hat{\mathbf{w}}^{\left(\alpha+1\right)}\quad A_{j,\alpha}$ times and fill the rest with the "zero" template vector $\hat{\mathbf{w}}^{\left(1\right)}$. Thus after the first self-attention layer summation we will get the relevant $A$'s rows.
	
	Formally, we define the mappings $\pi_P,\pi_Q$ as:
	\begin{align}
		&\forall 	j\in\left[\multiset{\nicefrac{\left(r-1-H\right)}{2}}{3^{L-2}}\right],t\in\left[\nicefrac{N}{2}\right]\quad\left(\pi_P\left(j\right)\right)_{p_t}=\begin{cases}
			\left\lfloor\nicefrac{t}{E}\right\rfloor + 2 & (t - 1) \bmod E < A_{j,\left\lfloor\nicefrac{t}{E}\right\rfloor+1} \\
			1 & \textrm{Otherwise}
		\end{cases}\\
		&\forall 	j\in\left[\multiset{\nicefrac{\left(r-1-H\right)}{2}}{3^{L-2}}\right],t\in\left[\nicefrac{N}{2}\right]\quad\left(\pi_Q\left(j\right)\right)_{q_t}=\begin{cases}
			\left\lfloor\nicefrac{t}{E}\right\rfloor + 2 + \nicefrac{\left(r-1-H\right)}{2} & (t - 1) \bmod E < A_{j,\left\lfloor\nicefrac{t}{E}\right\rfloor+1} \\
			1 & \textrm{Otherwise}
		\end{cases}
	\end{align}
	
	Finally, substituting $\pi_P,\pi_Q$ and $M_{\textrm{V}}$ in eq~\ref{equation:summation_assigmnet_first_layer_output} give the desired $\uu$:
	\begin{align*}
		\forall \alpha\in\left[d_x\right]\quad\uu_\alpha=\begin{cases}
			A_{j_1,\phi(\alpha)} & (\alpha-1)\bmod d_{a}<\frac{d_{a}-1}{2}\wedge\phi(\alpha)\le \nicefrac{\left(r-1-H\right)}{2}\\
			A_{j_2,\phi\left(\alpha-\frac{d_{a}-1}{2}\right)} & \frac{d_{a}-1}{2}\leq(\alpha-1)\bmod d_{a}<d_{a}-1\wedge\phi(\alpha-\frac{d_{a}-1}{2})\le \nicefrac{\left(r-1-H\right)}{2}\\
			N & (\alpha-1)\bmod d_{a}=d_{a}-1\\
			0 & \text{Otherwise}
		\end{cases}
	\end{align*}
	
\end{proof}

\subsection{Technical lemmas}

The lemma below prove the existence of matrix $A\in \N^{\multiset{d}{\lambda} \times d}$ with constant $l^2$ row norms, such that the operation of taking the rank $d$ matrix $AA^\top$ to the Hadamard power of $\lambda$ would result in a fully ranked matrix. Together with corollary~\ref{corollary:sufficient_assigmnet_layer_1}, this lemma is used in lemma~\ref{lemma:vocab_large_N_assigmnet} to prove theorem~\ifdefined\NOBODY 2\else\ref{theorem:embedding_rank_bottleneck_lower_bound}\fi ~of the main text for the vocabulary based embedding when assuming large $N$. Note that above lemma is an extension of a direct corollary of the proof in \citet{levine2020limits} regarding composition of the self-attention separation rank. 
\begin{lemma}\label{lemma:hadamard_power_rank}
	For any $d,\lambda\in \N$~there exist $A\in \N^{\multiset{d}{\lambda} \times d}$ with constant $l^2$ rows norm $c\in \N$ such that:
	\begin{equation}
		\rank{\left(AA^\top\right)^{\odot\lambda}}=\multiset{d}{\lambda}
	\end{equation}
\end{lemma}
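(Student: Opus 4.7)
The plan is to reduce the claim, via a Gram-matrix identity, to linear independence of Veronese-lifted rows of $A$, and then to enforce the integer common-norm constraint by a row-rescaling that exploits homogeneity of the resulting determinant. Writing $a_i$ for the $i$-th row of $A$, the multinomial theorem gives
\[
\langle a_i,a_j\rangle^{\lambda}\;=\;\sum_{|\alpha|=\lambda}\binom{\lambda}{\alpha}\,a_i^\alpha\,a_j^\alpha,
\]
where $\alpha$ ranges over the $N:=\multiset{d}{\lambda}$ multi-indices of total degree $\lambda$ and $a^\alpha:=\prod_k a_k^{\alpha_k}$. Hence $(AA^\top)^{\odot\lambda}=BDB^\top$ with $B\in\R^{N\times N}$ defined by $B_{i,\alpha}=a_i^\alpha$ and $D=\mathrm{diag}\bigl(\binom{\lambda}{\alpha}\bigr)$ positive definite. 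Therefore $\rank{(AA^\top)^{\odot\lambda}}=\rank{B}$, and it suffices to exhibit integer $A$ with common integer row norm for which $\det B\neq 0$.

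To see that $\det B$ is not identically zero as a polynomial in the entries of $A$, I would substitute the generalized-Vandermonde rows $a_i=(t_i^{(\lambda+1)^0},\,t_i^{(\lambda+1)^1},\ldots,t_i^{(\lambda+1)^{d-1}})$ for distinct positive $t_i$. Then $a_i^\alpha=t_i^{e(\alpha)}$ with $e(\alpha):=\sum_k\alpha_k(\lambda+1)^{k-1}$, and because $\alpha_k\leq\lambda<\lambda+1$ the base-$(\lambda+1)$ expansion forces $\alpha\mapsto e(\alpha)$ to be injective on $\{|\alpha|=\lambda\}$. Consequently $B$ is, up to column permutation, a generalized Vandermonde matrix in the $t_i$'s, whose determinant is nonzero for any choice of distinct positive $t_i$. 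Crucially, $\det B$ is homogeneous of degree $\lambda$ in each row separately, so non-vanishing is preserved under any positive row rescaling $a_i\mapsto s_i a_i$.

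The final step combines this homogeneity with a number-theoretic input to produce integer rows with a common integer norm. Suppose we can find $b_1,\ldots,b_N\in\N^d$ having (i) integer $l^2$-norms $m_i:=\|b_i\|_2\in\N$, and (ii) linearly independent Veronese lifts. Setting $L:=\mathrm{lcm}(m_1,\ldots,m_N)$ and $a_i:=(L/m_i)\,b_i$ then gives $a_i\in\N^d$ with common norm $L\in\N$, while $\det B$ remains nonzero by homogeneity (it is rescaled by $\prod_i(L/m_i)^\lambda$). For $d=1$ this is trivial, since $N=1$. For $d=2$, use Pythagorean pairs $(u^2-v^2,\,2uv)$ of integer norm $u^2+v^2$: any $\lambda+1$ such pairs with pairwise distinct slopes $u/v$ suffice, by the classical fact that any $\lambda+1$ distinct points on the degree-$\lambda$ rational normal curve in $\mathbb{P}^\lambda$ are in general linear position. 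The main obstacle is the $d\geq 3$ case, where we appeal to density and equidistribution of integer points on spheres $\sum_k x_k^2=m^2$ in $\N^d$ (three- and four-square theorems together with the equidistribution results of Duke and Iwaniec), combined with the Zariski-openness of $\det B\neq 0$, to guarantee the existence of $N$ such integer-norm vectors in general linear position.
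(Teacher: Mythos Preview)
Your approach shares the paper's core structure: the multinomial expansion $(AA^\top)^{\odot\lambda}=BDB^\top$ with $B_{i,\alpha}=a_i^\alpha$, reducing the problem to showing that the Veronese-lift matrix $B$ is nonsingular for some integer $A$ with constant integer row norm. Where you and the paper diverge is in the two technical sub-steps.

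For generic nonsingularity of $B$, the paper simply cites~\citet{levine2020limits} for the existence of a real matrix with strictly positive entries and the required property. Your explicit generalized-Vandermonde construction, with the base-$(\lambda+1)$ injectivity argument for the exponent map $\alpha\mapsto e(\alpha)$, is a genuine improvement in self-containment and is correct.

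For the integer/common-norm step, however, you reach for much heavier machinery than needed. The paper observes that $\det B$ is a polynomial in the entries of $A$, hence nonzero on an open Euclidean neighborhood of the (row-normalized) real witness; it then invokes the elementary fact that rational points are dense on the unit sphere $S^{d-1}$ for all $d\geq 2$~\cite{schmutz2008rational} to find a nearby rational matrix with unit-norm rows, and clears denominators. This handles every $d\geq 2$ uniformly in two lines. Your case split ($d=1$, $d=2$ via Pythagorean pairs and the rational normal curve, $d\geq 3$ via three/four-square theorems and Duke--Iwaniec equidistribution) is not wrong, but the $d\geq 3$ branch is both vague as stated and massively overpowered: equidistribution of lattice points on large spheres is a deep analytic result, whereas all you actually need is density of rational points on $S^{d-1}$, which follows from stereographic projection. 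Replacing your final paragraph with the paper's density-plus-continuity argument would make the proof both shorter and fully rigorous without any case analysis.
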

\begin{proof}
	We will use the fact that
	\begin{equation}\label{eq:schmatic}
		\left(AA^\top\right)^{\odot\lambda}=\sum_{k=1}^{\multiset{d}{\lambda}} \aaa^{(k)} \otimes \bb^{(k)}
	\end{equation}
	is of full rank for $\{\aaa^{(k)}\}_{k=1}^{\multiset{d}{\lambda}}$ and $\{\bb^{(k)}\}_{k=1}^{\multiset{d}{\lambda}}$ which are two	sets of linearly independent vectors.
	
	For $\alpha,\beta\in[\multiset{d}{\lambda}]$, observing an entry of  $\left(AA^\top\right)^{\odot\lambda}$:
	\begin{align}
		\ensuremath{\left(\left(AA^{\top}\right)^{\odot\lambda}\right)_{\alpha\beta}=}&\left(AA^{\top}\right)_{\alpha\beta}^{\lambda}=\left(\sum_{r=1}^{d}v_{r}^{\left(\alpha\right)}v_{r}^{\left(\beta\right)}\right)^{\lambda}=\\\label{eq:multinomial}\sum_{k_{1}+\cdots+k_{d}=\lambda}\left(\begin{matrix}\lambda\\
			k_{1},\ldots,k_{d}
		\end{matrix}\right)&\left[\prod_{r=1}^{d}\left(v_{r}^{\left(\alpha\right)}\right)^{k_{r}}\right]\left[\prod_{r=1}^{d}\left(v_{r}^{\left(\beta\right)}\right)^{k_{r}}\right]
	\end{align}	
	where the first equality follows from the definition of the Hadamard power, in the section we denoted $v_{r}^{\left(\alpha\right)},v_{r}^{\left(\beta\right)}$ as the $r$th entries in rows $\alpha$ and $\beta$ of $A$, and in the second line we expanded the power with the multinomial identity.
	
	Identifying the form of eq.~\eqref{eq:multinomial} with the schematic form of eq.~\eqref{eq:schmatic}, it remains to find a specific matrix $A \in \N^{\multiset{d}{\lambda} \times d}$ with constant $l^2$ row norms $c\in \N$ for which the size $\multiset{d}{\lambda}$ set $\left\{\aaa^{(k_{1},\ldots,k_{d})}\right\}_{k_{1}+\cdots+k_{d}=\lambda}$ is linearly independent, where $a_\alpha^{(k_{1},\ldots,k_{d})}= \prod_{r=1}^{d}\left(v_{r}^{\left(\alpha\right)}\right)^{k_{r}}$.
	
	\citet{levine2020limits} proved there exists such  $B \in \mathbf{\R}^{\multiset{d}{\lambda} \times d}$ with $\forall \alpha,\beta\, \left[B\right]_{\alpha,\beta}>0$. 
	Therefore, it is enough to prove that we can approximate $B$ with non-negative rational\footnote{Given such non-negative rational matrix with normalized rows, we can multiply it by the common denominator
		and get the required $A\in \N^{\multiset{d}{\lambda} \times d}$ with constant $l^2$ rows norm.} matrix with normalized rows, while keeping the set $\left\{\aaa^{(k_{1},\ldots,k_{d})}\right\}_{k_{1}+\cdots+k_{d}=\lambda}$ linearly independent. 
	
	To prove this we will arrange the set as the columns of the matrix $C$, than $\left\{\aaa^{(k_{1},\ldots,k_{d})}\right\}_{k_{1}+\cdots+k_{d}=\lambda}$ is linearly independent if and only if $C$'s determinant is not zero. Now, $C$'s determinant is polynomial in $B$ entries and therefore from continuity arguments non-zero at neighborhood of $B$. Finally, this neighborhood contains row normalized rational matrix, since the unit sphere has a dense set of points with rational coordinates~\cite{schmutz2008rational}.
	
\end{proof}

\section{Experimental details}\label{section:experimental_details}
We conducted the network training described in section~\ifdefined\NOBODY 5\else\ref{sec:experiments}\fi ~of the main text with AdamW optimizer with $\beta_1=0.9,\beta_2=0.999$ and weight decay of $0.01$ for $1M$ steps and a batch size of $512$ sequences of $128$ tokens. All experiments used a learning rate schedule with a $12000$ step linear warm-up into $1.6\cdot10^{-3}$ followed by a cosine decay to zero and dropout rate of $0.1$. 
In order to increase width without changing other architectural parameters, for all the experiments except of section~\ifdefined\NOBODY 5.3\else\ref{sec:experiments:t5_bottleneck}\fi ~of the main text we kept the number of heads per layer constant at $2$ (experimental evidence indicates that many heads per layer are not crucial~\citep{michel2019sixteen,kaplan2020scaling}, as does \cite{levine2020limits} theoretical analysis which shows that the number of heads per layer affects the separation rank logarithmically).

To verify that our training recipe works, and the model differences are mainly due to expressiveness rather than optimization issues, we constructed a held out test-set of $100$ documents from OpenWebText and compare ourselves to GPT-2~\cite{radford2019language} published models. Table~\ref{table:owt_perplexity}~shows our models are on par with the GPT-2 models. Note that our models use shorter context of $128$ and that GPT-2 might have trained on our test set, which might explain it's superior perplexity. Nevertheless, this results shows that our training recipe is competitive, and that the model comparisons in the paper are indeed meaningful.
\begin{table}[ht]
	\ifdefined\SQUEEZE \vspace{-4mm} \fi
	\vskip 0.15in
	\begin{center}
		\begin{small}
			\begin{sc}
				\begin{tabular}{lcc}
					\toprule
					Model Size & GPT-2 Perplexity& Our Perplexity \\
					\midrule
					$117M$    & $21.11$ & $22.78$ \\
					$345M$    & $16.03$ & - \\
					$378M$    & -  & $17.95$ \\
					\bottomrule
				\end{tabular}
			\end{sc}
		\end{small}
	\end{center}
	\caption{OpenWebText test set perplexity where total number of tokens (98538 tokens) is according to gpt-2 standard vocabulary, and evaluation done with stride $1$ \ie for each token predication, the model use full context of the previous $N$ tokens.}	
	\label{table:owt_perplexity}
	\vskip -0.1in
	\ifdefined\SQUEEZE \vspace{-4mm} \fi
\end{table}

\subsection{Rank bottleneck degrades performance}
We conducted the network training described in subsection~\ifdefined\NOBODY 5.1\else\ref{sec:experiments:absoulte_degradation_exp}\fi ~of the main text width depth $L=12$ models.
The baselines widths are: $576,592,640,668,670,672,674,676,678,680,688$.
For the low-rank models we factorize the tokens and positional embedding into two matrices of dimensions $d_x \times r$, $r \times V$, as described in the main text.
The width in all of this model was set to $680$ and the $r$'s were: $16, 32, 64, 96, 128, 256, 344,400,456,512,600,680,880,1080$.
Table ~\ref{table:rank_bottleneck_std} show the estimated the standard deviation of the test loss of this experiment by repeating the training $5$ times.
For $r\in\left\{16,32,64\right\}$ we also trained variant with full-rank positional embedding and achieve losses that are within 2-std of the factorized positional embedding ones.
\begin{table}[ht]
	\caption{The standard deviation of the test loss for several experiments of subsection~5.1 in the main text, when repeating the training and evaluation experiment $5$ times per point.}
	\label{table:rank_bottleneck_std}
	\vskip 0.15in
	\begin{center}
		\begin{small}
			\begin{sc}
				\begin{tabular}{lcccr}
					\toprule
					$d_x$ & $r$ & std \\
					\midrule
					$680$    & $680$ & $8\cdot 10^{-4}$ \\
					$576$ & $576$ & $1.5\cdot 10^{-3}$\\
					$680$    & $128$  & $2.1\cdot 10^{-3}$ \\
					\bottomrule
				\end{tabular}
			\end{sc}
		\end{small}
	\end{center}
	\vskip -0.1in
\end{table}
\subsection{Vocabulary affects the depth-to-width interplay}
We tokenized the training and test corpus the with the GPT-2 \cite{radford2019language} vocabulary, and additional $3$ BPE vocabularies of sizes $V = 257,500,2000$  trained on our training corpus using to huggingface tokenizers\footnote{https://huggingface.co/docs/tokenizers/python/latest/} library.

We conducted the network training described in section~\ifdefined\NOBODY 5.2\else\ref{sec:experiments:implications_to_depth_width_interplay}\fi ~of the main text width depth $L\in\left\{24,48\right\}$ models with width detailed in table ~\ref{table:vocabs_depth_24_48_details}. 

\begin{table}[ht]
	\caption{The widths $d_x$ of the different trained networks.}
	\label{table:vocabs_depth_24_48_details}
	\vskip 0.15in
	\begin{center}
		\begin{small}
			\begin{sc}
				\begin{tabular}{lcccr}
					\toprule
					$V$ &$L$ & $r$ & widths   \\
					\midrule
					257 & 24    &  & $144, 160, 168, 184, 192, 200, 224, 248, 264, 280, 336, 408, 480$ \\
					257 & 48 & & $104, 112, 120, 128, 136, 142, 144, 160, 176, 184, 200, 240, 288, 336$ \\
					500 & 24 & &  $280, 336, 360, 384, 408, 424, 440, 480, 504, 528, 544, 576, 600$\\
					500 & 48 & & $200, 240, 272, 288, 296, 312, 336, 352, 376, 384, 408, 424$ \\
					2000 & 24 & $500$  & $280, 336, 408, 448, 480, 504, 528, 544, 576, 600, 628$ \\
					2000 & 48 & $500$  & $200, 240, 288, 320, 336, 352, 376, 384, 408, 424, 440$ \\
					2000  & 24  &   & $200, 224, 248, 280, 336, 408, 480, 544, 704, 744, 792, 848, 1064$ \\
					2000  & 48  &   & $144, 160, 176, 200, 240, 288, 336, 384, 496, 528, 560, 600, 752$ \\
					50257  & 24  &   & $408, 480, 544, 592, 656, 704, 744, 792, 848, 1064$ \\
					50257  & 48  &   & $336, 384, 432, 480, 512, 544, 576, 616, 768$ \\
					\bottomrule
				\end{tabular}
			\end{sc}
		\end{small}
	\end{center}
	\vskip -0.1in
\end{table}

\begin{table}[ht]
	\caption{The standard deviation of the test loss for several experiments of subsection~5.2 in the main text, when repeating the training and evaluation experiment $5$ times per point.}
	\label{table:vocabs_depth_24_vs_48_std}
	\vskip 0.15in
	\begin{center}
		\begin{small}
			\begin{sc}
				\begin{tabular}{lcccr}
					\toprule
					$V$ & $L$ & $d_x$ & $r$ & std \\
					\midrule
					$257$    & $24$ & $264$ & & $5.1\cdot 10^{-4}$ \\
					$257$    & $48$ & $184$ & & $6.8\cdot 10^{-4}$\\
					$500$    & $24$ & $504$ & & $1.7\cdot 10^{-3}$ \\
					$2000$   & $24$ & $528$ & $500$ & $1.5\cdot 10^{-3}$ \\
					$50257$   & $48$ & $480$ &  & $3.1\cdot 10^{-3}$ \\
					\bottomrule
				\end{tabular}
			\end{sc}
		\end{small}
	\end{center}
	\vskip -0.1in
\end{table}

Beyond the experiments described in subsection~\ifdefined\NOBODY 5.2\else\ref{sec:experiments:implications_to_depth_width_interplay}\fi ~of the main text, we conduct additional experiment to verify that when the vocabulary size exceed the network width and
does not constitutes a bottleneck, it has negligible effect on the “depth-efficiency” point. 

Figure~\ref{fig:depth_24_vs_48_vocabs_2000_50k} shows that when repeating subsection~\ifdefined\NOBODY 5.2\else\ref{sec:experiments:implications_to_depth_width_interplay}\fi ~experiment with GPT-2 \cite{radford2019language} vocabulary that is $\sim25$ times larger, the “depth-efficiency” point are very similar to the $V=2000$ case. This is directly in line
with the vocabulary bottleneck prediction since the largest network width in this experiment is $1064$, and clearly the $V=2000$ vocabulary does not constitutes a width bottleneck.
\begin{figure}[t]
	\vskip 0.2in
	\begin{center}
		\centerline{\includegraphics[width=1\columnwidth]{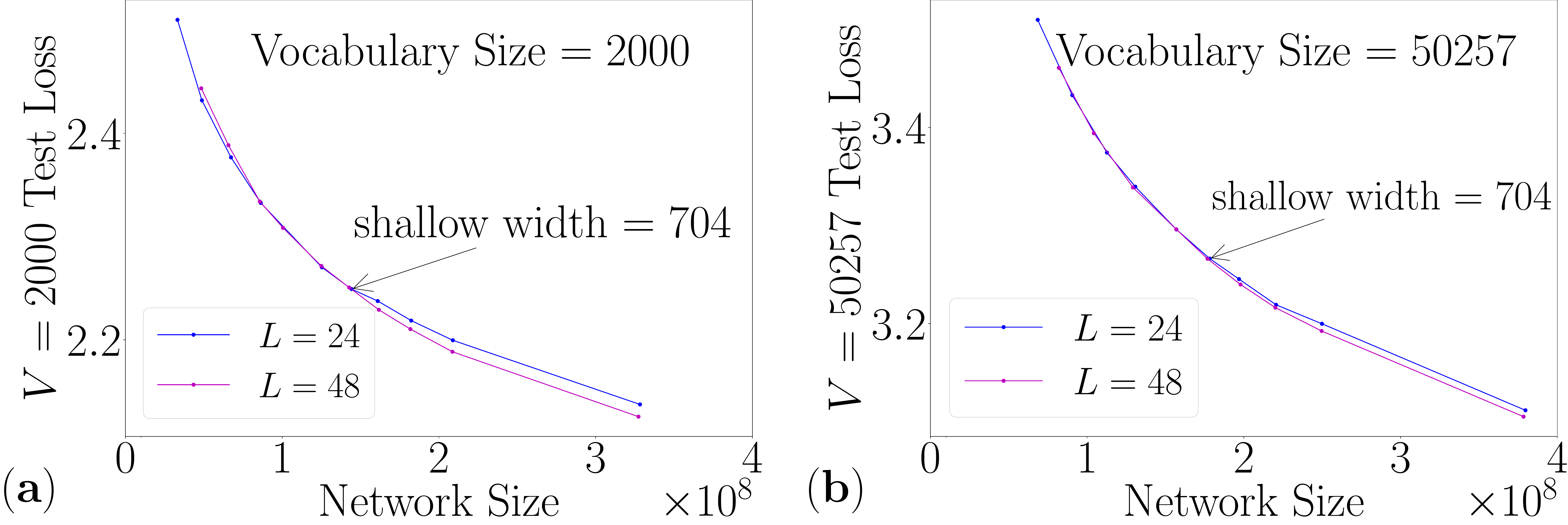}}
		\caption{Unlike figure 1 of the main text, when the vocabulary size exceed the network width and does not constitutes a bottleneck, the vocabulary size has negligible effect on the ``depth-efficiency" point . Note that the ``depth-efficiency" point of $V=50257$ occur at larger network size since when $V$ grows the embedding matrix size become non negligible and correspond to $20\%$ of the network parameters. Nevertheless, the shallow network width at the ``depth-efficiency" point is very similar to the $V=2000$ case.}
		\label{fig:depth_24_vs_48_vocabs_2000_50k}
	\end{center}
	\vskip -0.2in
\end{figure}
\subsection{Width bottlenecks the attention dimension}
Unlike the rest of the experiments in this paper, the experiments described in subsection~\ifdefined\NOBODY 5.3\else\ref{sec:experiments:t5_bottleneck}\fi ~of the main text done with larger amount of heads, mostly $H=12$ ~to avoid low-rank Key, Query, Value and
Output matrices when increasing the $\nicefrac{H\cdot d_a}{d_x}$ ratio.	Since the and the optimal depth per network might vary when changing  $\nicefrac{H\cdot d_a}{d_x}$ ratio we choose for each ratio the best depth in $\left\{12,18,24\right\}$. Table~\ref{table:tf_main_text_all_archs} give the exact details of the networks that appear in figure~\ifdefined\NOBODY 4\else\ref{fig:t5_expansion}\fi ~of the main text. Figure~\ref{fig:t5_depth_effect_is_minor} shows that the performance difference between values of the bottleneck ratio is	larger than the variation between different depths per bottleneck ratio.

\begin{figure}[hh]
	\vskip 0.2in
	\begin{center}
		\centerline{\includegraphics[width=0.5\columnwidth]{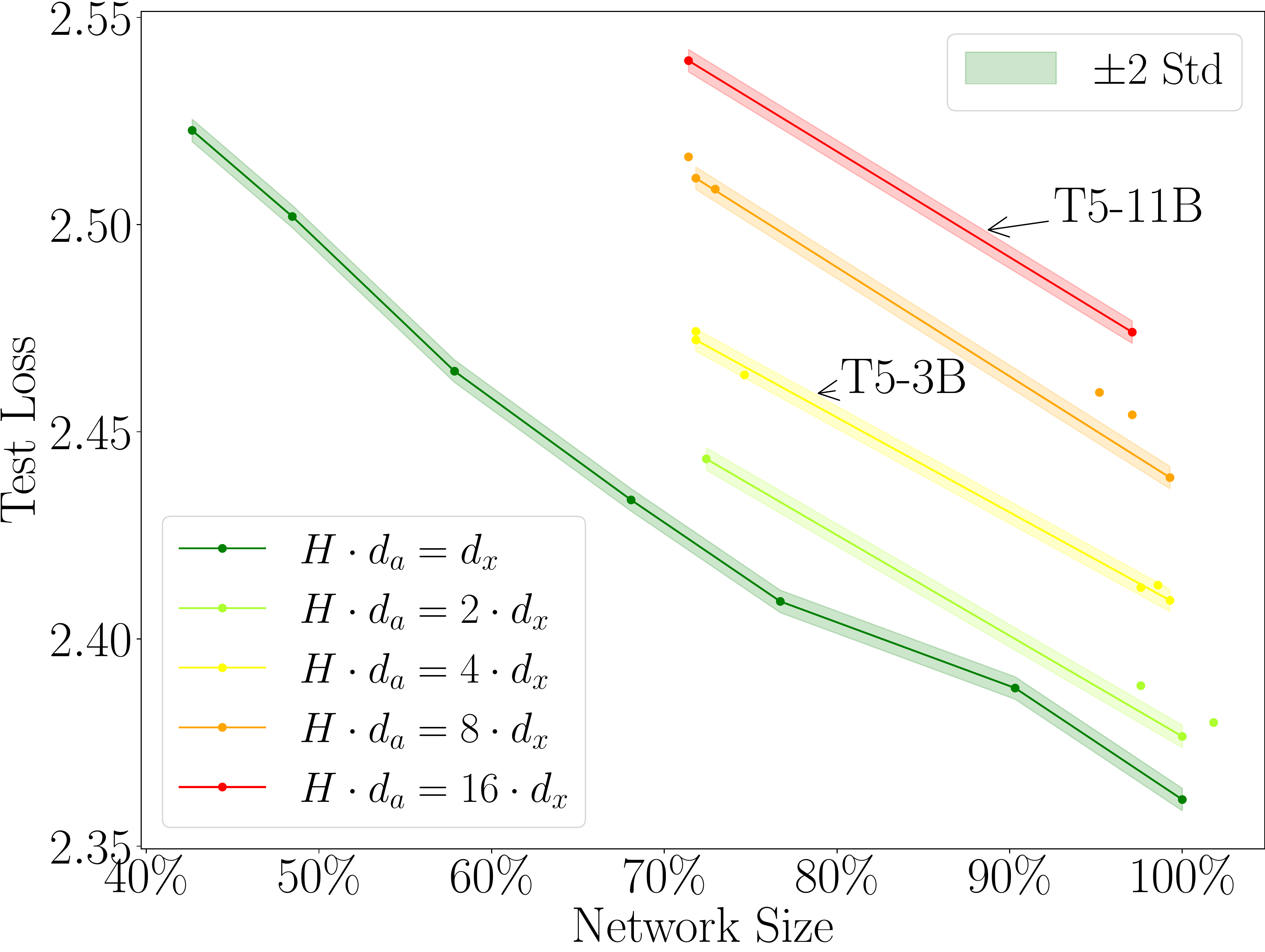}}
		\caption{The performance difference between values of the bottleneck ratio is larger than the variation between different depths per bottleneck ratio. The lines are the points of figure~\ifdefined\NOBODY 4\else\ref{fig:t5_expansion}\fi ~in the main text, and the circles are another depths detailed in table~\ref{table:tf_main_text_all_archs}.}
		\label{fig:t5_depth_effect_is_minor}
	\end{center}
	\vskip -0.2in
\end{figure}

\begin{table}[ht]
	\caption{Details of all architecture for each $\nicefrac{H\cdot d_a}{d_x}$ ratio in figure~\ref{fig:t5_depth_effect_is_minor}, the bold depth are the ones showed in figure~\ifdefined\NOBODY 4\else\ref{fig:t5_expansion}\fi ~of the main text.}
	\label{table:tf_main_text_all_archs}
	\vskip 0.15in
	\begin{center}
		\begin{small}
			\begin{sc}
				\begin{tabular}{lcccr}
					\toprule
					$\nicefrac{H\cdot d_a}{d_x}$ & $L$ & $H$ & widths   \\
					\midrule
					$1$ & $\mathbf{18}$ & $12$ & $360,384,420,456$ \\
					$1$ & $\mathbf{24}$ & $12$ & $420,456,480$ \\
					$2$ & $\mathbf{12}$ & $12$ & $408,480$ \\
					$2$ & $18$ & $18$ & $396$ \\
					$2$ & $24$ & $24$ & $336$ \\
					$4$ & $12$ & $12$ & $288,336$ \\
					$4$ & $18$ & $12$ & $240,276$ \\
					$4$ & $\mathbf{24}$ & $12$ & $204,240$ \\						
					$8$ & $\mathbf{12}$ & $12$ & $204,240$ \\
					$8$ & $18$ & $12$ & $168,192$ \\
					$8$ & $24$ & $12$ & $144,168$ \\
					$16$ & $\mathbf{12}$ & $24$ & $144,168$ \\
					\bottomrule
				\end{tabular}
			\end{sc}
		\end{small}
	\end{center}
	\vskip -0.1in
\end{table}

\subsection{Low-rank positional embedding}\label{sec:low_rank_positional_exps}
In this subsection, we show that a low rank positional embedding matrix has a negligible effect on the model loss, when compared with the effect of decreasing the vocabulary rank. 
This justifies both the practical use of rank-1 positional embedding matrices in leading models such as T5~\cite{raffel2019exploring}, and the assumption in theorems~\ref{theorem:vocab_uuper_bound} and~\ref{theorem:conv_uuper_bound}. 

We trained depth $L=12$ width $d_x=512$ networks with vocabulary size $V=2000$, sequences of $512$ tokens and positional embedding ranks of $\left\{1, 2, 4, 8, 16, 32, 64, 128, 256, 512\right\}$, In addition, we compared to baseline networks of vocabulary ranks\footnote{We did not compare to rank $1$ vocabulary network, since we observed dramatic performance degradation in this case that are probably due to unrelated bottleneck.} $\left\{2, 4, 8, 16, 32, 64, 128, 256, 512\right\}$ with full-rank positional embedding.

\begin{figure}[h]
	\vskip 0.2in
	\begin{center}
		\centerline{\includegraphics[width=0.5\columnwidth]{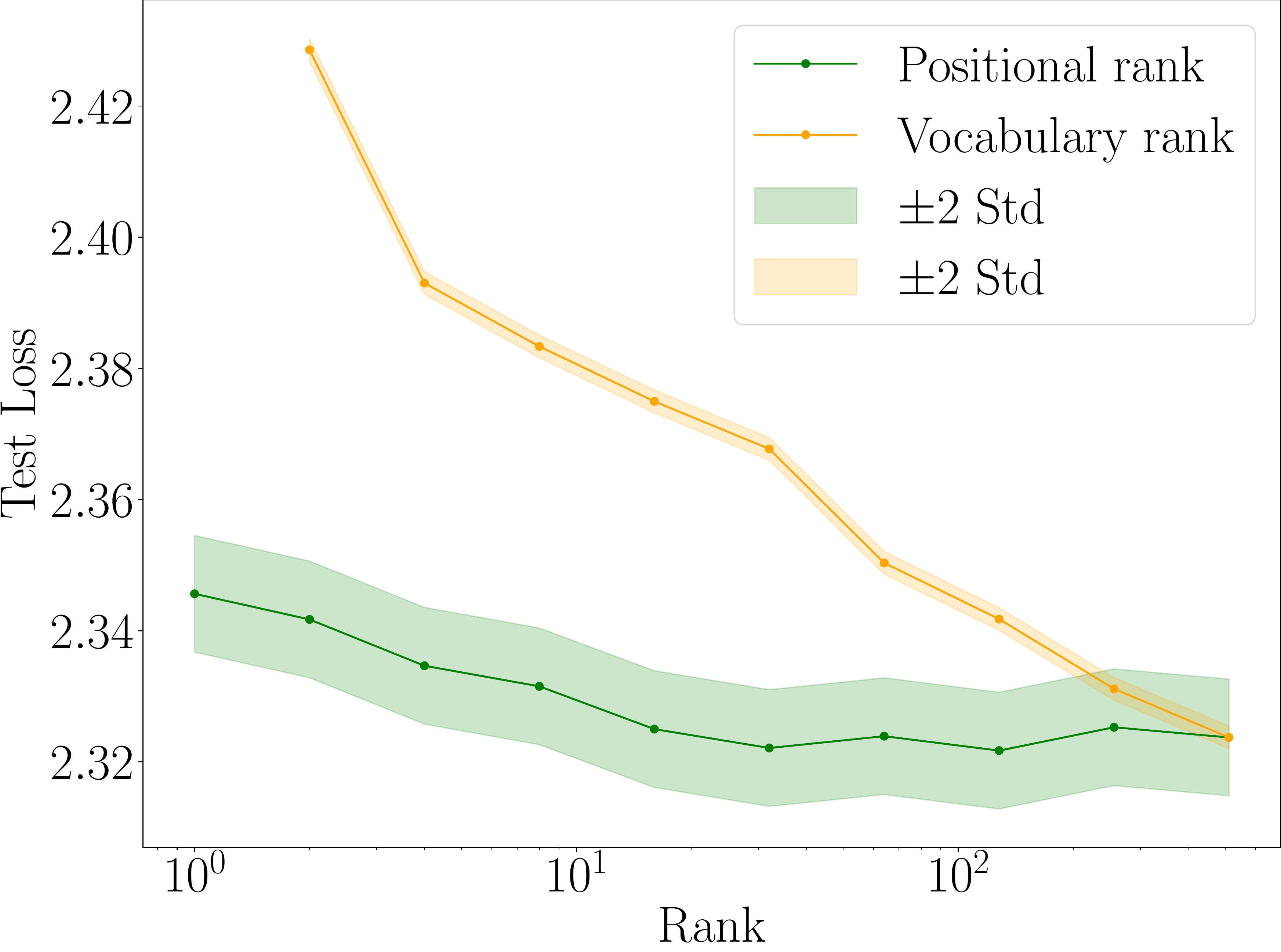}}
		\caption{Low-rank positional embedding as assumed in theorem~\ref{theorem:vocab_uuper_bound} when compared to low-rank due to vocabulary size has negligible effect on the model. For example even positional rank of $16$ perform on par with full positional embedding.}
		\label{fig:low_positional_rank}
		\ifdefined\SQUEEZE \vspace{-2mm} \fi
	\end{center}
	\vskip -0.2in
	\ifdefined\SQUEEZE \vspace{-3mm} \fi
\end{figure}

Figure~\ref{fig:low_positional_rank} shows that when decreasing the positional embedding rank down to $16$, performance is on par with full-rank positional embedding (within 2 std).
Moreover, even the extreme low-rank positional embedding of rank-$1$ reaches a loss of much higher vocabulary rank (between $64$ and $128$), thus justifying the assumption in theorems~\ref{theorem:vocab_uuper_bound} and~\ref{theorem:conv_uuper_bound}.  Note that we have shown in section~\ifdefined\NOBODY 5.1\else\ref{sec:experiments:absoulte_degradation_exp}\fi ~of the main text that the practical effect of  the predicted vocabulary bottlenecking  phenomenon is comparable to a substantial reduction in model size.

\end{document}